\definecolor{bgcolor}{rgb}{0.96, 0.87, 0.7}
\newtheorem{theorem}{Theorem}
\newtheorem{lemma}{Lemma}
\newtheorem{corollary}{Corollary}
\newtheorem{definition}{Definition}
\newtheorem{assumption}{Assumption}
\newtheorem{proposition}{Proposition}
\par\vspace{4mm}}
\newcommand{\CDP}{{\sf CDP-SGD}\xspace}
\newcommand{\DPSGD}{{\sf DP-SGD}\xspace}
\newcommand{\algname}[1]{{\sf #1}\xspace}
\newcommand{\ns}[1]{\| #1 \|^2}
\newcommand{\nsb}[1]{\Big\| #1 \Big\|^2}
\newcommand{\nsB}[1]{\left\| #1 \right\|^2}
\newcommand{\n}[1]{\| #1 \|}
\newcommand{\inner}[2]{\langle #1, #2 \rangle}
\newcommand{\innerB}[2]{\left\langle #1, #2 \right\rangle}
\newcommand{\calC}{{\mathcal C}}
\newcommand{\cC}{{\cal C}}
\newcommand{\eat}[1]{}
\def\1{\bm{1}}
\def\vxi{{\bm{\xi}}}
\def\va{{\bm{a}}}
\newcommand{\ve}{\@ifnextchar\bgroup{\velong}{{\bm{e}}}}
\newcommand{\velong}[1]{{\bm{#1}}}
\def\vg{{\bm{g}}}
\def\vs{{\bm{s}}}
\def\vu{{\bm{u}}}
\def\vv{{\bm{v}}}
\def\vw{{\bm{w}}}
\def\vx{{\bm{x}}}
\def\mI{{\bm{I}}}
\DeclareMathAlphabet{\mathsfit}{\encodingdefault}{\sfdefault}{m}{sl}
\SetMathAlphabet{\mathsfit}{bold}{\encodingdefault}{\sfdefault}{bx}{n}
\def\gA{{\mathcal{A}}}
\def\gB{{\mathcal{B}}}
\def\gC{{\mathcal{C}}}
\def\gD{{\mathcal{D}}}
\def\gI{{\mathcal{I}}}
\def\gM{{\mathcal{M}}}
\def\gN{{\mathcal{N}}}
\def\gP{{\mathcal{P}}}
\def\gQ{{\mathcal{Q}}}
\def\gR{{\mathcal{R}}}
\def\gS{{\mathcal{S}}}
\newcommand{\E}{\mathbb{E}}
\newcommand{\R}{\mathbb{R}}
\newcommand{\dotp}[2]{\left<#1, #2\right>}
\newcommand{\norm}[1]{\left\| #1 \right\|}
\newcommand{\dd}{\textup{\textrm{d}}}
\newcommand{\red}[1]{{\color{red} #1}}
\newcommand{\purple}[1]{{\color{purple} #1}}
\def\eqref#1{(\ref{#1})}
\newcommand{\blue}[1]{\textcolor{blue}{#1}}
\newcommand{\size}[2]{{\fontsize{#1}{0}\selectfont#2}}
\newcommand{\soteriafl}{{\sf SoteriaFL}\xspace}
\newcommand{\dataset}[1]{{\tt #1}\xspace}
\newcommand{\cdpGD}{{\sf CDP-GD}\xspace}
\newcommand{\cdpSGD}{{\sf CDP-SGD}\xspace}
\newcommand{\soteriaflGD}{{\sf SoteriaFL-GD}\xspace}
\newcommand{\soteriaflSGD}{{\sf SoteriaFL-SGD}\xspace}
\newcommand{\soteriaflSVRG}{{\sf SoteriaFL-SVRG}\xspace}
\newcommand{\soteriaflSAGA}{{\sf SoteriaFL-SAGA}\xspace}
\newcommand{\RomanNumeralCaps}[1]
    {\MakeUppercase{\romannumeral #1}}
\title{\bf \textsf{SoteriaFL}: A Unified Framework for Private Federated Learning with Communication Compression}
\author{
	Zhize Li\thanks{Department of Electrical and Computer Engineering, Carnegie Mellon University, Pittsburgh, PA 15213, USA; Emails:
		\texttt{\{zhizel,~boyuel,~yuejiec\}@andrew.cmu.edu}.} \\
	CMU \\
	\and
	Haoyu Zhao\thanks{Department of Computer Science, Princeton University, Princeton, NJ 08540, USA; Email: \texttt{haoyu@princeton.edu}.} \\
	Princeton\\
	\and
	Boyue Li\footnotemark[1] \\
	CMU\\
	\and
	Yuejie Chi\footnotemark[1] \\
	CMU \\
}
\date{June 2022; Revised October 2022}
\begin{document}

\maketitle

\begin{abstract}
To enable large-scale machine learning in bandwidth-hungry environments such as wireless networks, significant progress has been made recently in designing communication-efficient federated learning algorithms with the aid of communication compression. On the other end, privacy-preserving, especially at the client level, is another important desideratum that has not been addressed simultaneously in the presence of advanced communication compression techniques yet. 
In this paper, we propose a unified framework that enhances the communication efficiency of private federated learning with communication compression.
    Exploiting both general compression operators and local differential privacy, we first examine a simple algorithm that applies compression directly to 
     differentially-private stochastic gradient descent, and identify its limitations. We then propose a unified framework \soteriafl for private federated learning, which accommodates a general family of local gradient estimators including popular stochastic variance-reduced gradient methods and the state-of-the-art shifted compression scheme.
     We provide a comprehensive characterization of its performance trade-offs in terms of privacy, utility, and communication complexity, where \soteriafl is shown to achieve better communication complexity without sacrificing privacy nor utility than other private federated learning algorithms without communication compression.
    
\end{abstract}

\noindent\textbf{Keywords:} federated learning, local differential privacy, communication compression, unified framework


\section{Introduction}
With the proliferation of mobile and edge devices, federated learning (FL)~\citep{konevcny2016federatedlearning, mcmahan2017communication} has recently emerged as a disruptive paradigm for training large-scale machine learning models over a vast amount of geographically distributed and heterogeneous devices. For instance, Google uses FL in the Gboard mobile keyboard for next word predictions~\citep{hard2018federated}.
FL is often modeled as a distributed optimization problem~\citep{konevcny2016federatedoptimization, konevcny2016federatedlearning, mcmahan2017communication, kairouz2019advances, wang2021field}, aiming to solve
\begin{equation} \label{eq:prob}
\min_{\vx\in\R^d}\left\{ f(\vx; D) := \frac{1}{n}\sum_{i=1}^n f(\vx; D_i)\right\},~~ \text{where~} f(\vx; D_i) := \frac{1}{m}\sum_{j=1}^m f(\vx; d_{i,j}).
\end{equation}
Here, $D$ denotes the entire dataset distributed across all $n$ clients, where each client $i$ has a local dataset $D_i = \{d_{i,j}\}_{j=1}^m$ of equal size $m$,\footnote{This is without loss of generality, since otherwise one can simply adjust the weights of the loss function.} 
$\vx \in \mathbb{R}^d$ denotes the model parameters, $f(\vx; D)$, $f(\vx; D_i)$, and $f(\vx; d_{i,j})$ denote the nonconvex loss function of the current model $\vx$ on the entire dataset $D$, the local dataset $D_i$, and a single data sample $d_{i,j}$, respectively.
For simplicity, we use $f(\vx)$, $f_i(\vx)$ and $f_{i,j}(\vx)$ to denote $f(\vx; D)$, $f(\vx; D_i)$ and $f(\vx; d_{i,j})$, respectively.

\subsection{Motivation: privacy-utility-communication trade-offs}

To unleash the full potential of FL, it is extremely important that the algorithm designed to solve \eqref{eq:prob} needs to meet several competing desiderata.

\paragraph{Communication efficiency.}
Communication between the server and clients is well recognized as the main bottleneck for optimizing the latency of FL systems, especially when the clients---such as mobile devices---have limited bandwidth, the number of clients is large, and/or the machine learning model has a lot of parameters---for example, the language model GPT-3 \citep{brown2020language} has billions of parameters and therefore cumbersome to share directly.

Therefore, it is very important to design FL algorithms to reduce the overall communication cost, which takes into account both \emph{the number of communication rounds} and \emph{the cost per communication round} for reaching a desired accuracy. With these two quantities in mind, there are two principal approaches for communication-efficient FL:
1) \emph{local methods}, where in each communication round, clients run multiple local update steps before communicating with the server, in the hope of reducing the number of communication rounds, e.g.,~\citep{mcmahan2017communication, li2020federatedopt, khaled2020tighter, gorbunov2020local, karimireddy2020scaffold, wang2020tackling, pathak2020fedsplit, cen2020convergence, li2022destress, li2020communication, acar2021federated, zhao2021fedpage, mitra2021linear, mishchenko2022proxskip};
2) \emph{compression methods}, where clients send compressed communication message to the server, in the hope of reducing the cost per communication round, e.g.,~\citep{alistarh2017qsgd, khirirat2018distributed,wang2018atomo,ivkin2019communication,karimireddy2019error,DIANA,reisizadeh2020fedpaq,li2020acceleration,gorbunov2021marina,li2021canita,richtarik2021ef21,fatkhullin2021ef21,lee2021finite,zhao2021faster, zhao2022beer,richtarik20223pc}. While both categories have garnered significant attention in recent years, we focus on the second approach based on communication compression to enhance communication efficiency.

\paragraph{Privacy preserving.}
While FL holds great promise of harnessing the inferential power of private data stored on a large number of distributed clients, these local data at clients often contain sensitive or proprietary information without consent to share. Although FL may appear to protect the data privacy via storing data locally and only sharing the model updates (e.g., gradient information), the training process can nonetheless reveal sensitive information as demonstrated by, e.g., \citet{zhu2019deep}. It is thus desirable for FL to preserve privacy in a guaranteed manner~\citep{geyer2017differentially, kairouz2019advances, sabater2020distributed, wang2021field}.

To ensure the training process does not accidentally leak private information, advanced privacy-preserving tools such as \emph{differential privacy} (DP) \citep{dwork2006calibrating} have been widely integrated into training algorithms~\citep{dwork2008differential, chaudhuri2011differentially, dwork2014algorithmic, abadi2016deep, wang2017differentially, iyengar2019towards, cheu2019distributed, feldman2020private}.
A notable example is \citet{abadi2016deep}, which developed a differentially-private stochastic gradient descent (SGD) algorithm  \DPSGD in the centralized (single-node) setting. More recently, several differentially-private algorithms~\citep{jayaraman2018distributed,wang2019efficient,shang2021differentially,lowy2022private} are proposed for the more general distributed ($n$-node) setting suitable for FL. In this paper, we also follow the DP approach to preserve privacy. In particular, we adopt local differential privacy (LDP) to respect the privacy of each client, which is critical in FL.

\begin{table}[!t]
	\caption{Comparisons among (local) differentially-private algorithms for the nonconvex problem \eqref{eq:prob} in both central (single-node) and distributed ($n$-node) settings. Here, $m$ denotes the number of data stored on a single client, $n$ is the number of clients, $d$ is the dimension, and $\omega$ is the parameter for the compression operator (cf.~Definition~\ref{def:comp}). The communication complexity is computed by $ndT/(1+\omega)$, where $T$ is the total number of communication rounds, and $nd/(1+\omega)$ is the communication cost per round. The utility~/~accuracy measures the average squared gradient norm of the objective function after $T$ rounds. Note that the algorithm is better when the utility/accuracy and the communication complexity are small under the same privacy guarantee.}
	\label{tab:results}
	\vspace{1mm}
	\centering
	\scriptsize
	\begin{threeparttable}
		\renewcommand{\arraystretch}{2}
		\begin{tabular}{|c|c|c|c|c|}
			\hline
			\bf Algorithm & \bf Privacy & \bf Utility/Accuracy & \bf \makecell{Communication Complexity} & \bf Remark\\
			\hline

			\makecell{{\sf RPPSGD} \citep{zhang2017efficient}}  
			& $(\epsilon,\delta)$-DP 
			& $\frac{\sqrt{d\log(m/\delta)\log(1/\delta)}}{m\epsilon}$  
			& --- & single node\\
			\hline
			
			\makecell{{\sf DP-GD/SGD}  \citep{abadi2016deep,wang2017differentially}}  
			& $(\epsilon,\delta)$-DP 
			& $\frac{\sqrt{d\log(1/\delta)}}{m\epsilon}$  
			& --- & single node\\
			\hline
			
			\makecell{{\sf DP-SRM} \citep{wang2019efficient} }  
			& $(\epsilon,\delta)$-DP 
			& $\frac{\sqrt{d\log(1/\delta)}}{m\epsilon}$ 
			& --- & single node\\
			\hline
			\hline
			\gape{\makecell{{\sf Distributed} \\ {\sf DP-SRM} \citep{wang2019efficient}}} \tnote{(1)} 
			& $(\epsilon,\delta)$-DP 
			& $\frac{\sqrt{d\log(1/\delta)}}{\blue{n}m\epsilon}$ 
			& $\frac{\blue{n^{2}} m\epsilon \sqrt{d}}{\sqrt{\log(1/\delta)}}$ 
			& \makecell{\blue{$n$} nodes, \\ no compression}\\
			\hline
			
			\makecell{{\sf LDP SVRG} \\ {\sf LDP SPIDER} \citep{lowy2022private}}  
			& $(\epsilon,\delta)$-LDP
			& $\frac{\sqrt{d\log(1/\delta)}}{\sqrt{\blue{n}}m\epsilon}$ 
			& $\frac{\blue{n^{3/2}} m\epsilon\sqrt{d}}{\sqrt{\log(1/\delta)}}$ 
			& \makecell{\blue{$n$} nodes, \\ no compression}\\
			\hline

			\gape{\makecell{{\sf Q-DPSGD-1}  \citep{ding2021differentially}}} \tnote{(2)} 
			& $(\epsilon,\delta)$-LDP 
			& $\frac{(\purple{\tilde{\sigma}^2}/\blue{n} +1/m)^{2/3}(d \log(1/\delta))^{1/3} }{m^{2/3}\epsilon^{2/3}}$
			& $\frac{(1 +\blue{n}/(m\purple{\tilde{\sigma}^2})) m^2\epsilon^2}{d \log(1/\delta)}$ 
			& \makecell{\blue{$n$} nodes, \\ direct compression }\\
			\hline
			
			\gape{\makecell{{\sf SDM-DSGD}  \citep{zhang2020private}}} \tnote{(3)} 
			& $(\epsilon,\delta)$-LDP 
			& $\tilde{O}\left(\frac{\sqrt{d\log(1/\delta)}}{\sqrt{\blue{n}}m\epsilon}\right)$ 
			&  $\frac{\blue{n^{7/2}} m\epsilon\sqrt{d}}{\purple{(1+\omega)^{3/2}}\sqrt{\log(1/\delta)}} + \frac{\blue{n} m^2\epsilon^2}{\purple{(1+\omega)}\log(1/\delta)}$ 
			& \makecell{\blue{$n$} nodes, \\ direct compression}\\
			\hline

			\rowcolor{bgcolor}
			\gape{\makecell{{\sf CDP-SGD} \\ (Theorem~\ref{thm:cdp-sgd})}} 
			& $(\epsilon,\delta)$-LDP 
			& $\frac{\sqrt{\purple{(1+\omega)}d\log(1/\delta)}}{\sqrt{\blue{n}}m\epsilon}$ 
			& \gape{\makecell{$\frac{\blue{n^{3/2}} m\epsilon\sqrt{d}}{\purple{(1+\omega)^{3/2}}\sqrt{\log(1/\delta)}} +  \frac{\blue{n}m^2\epsilon^2}{(\purple{1+\omega})\log(1/\delta)}$}}
			&  \gape{\makecell{\blue{$n$} nodes, \\ direct compression}}\\
			\hline
			
			\rowcolor{bgcolor}
			\gape{\makecell{\soteriaflSGD\\ \soteriaflGD\\ 
					(Corollary~\ref{cor:sgd})}} \tnote{(4)} 
			& $(\epsilon,\delta)$-LDP 
			& $\frac{\sqrt{\purple{(1+\omega)}d\log(1/\delta)}}{\sqrt{\blue{n}}m\epsilon} (1+ \sqrt{\tau})$ 
			& $\frac{\blue{n^{3/2}} m\epsilon\sqrt{d}}{\purple{(1+\omega)^{3/2}}\sqrt{\log(1/\delta)}} (1 + \sqrt{\tau})$ 
			&  \gape{\makecell{\blue{$n$} nodes, \\ \red{shifted} compression}}\\
			\hline
			
			\rowcolor{bgcolor}
			\gape{\makecell{\soteriaflSVRG\\ \soteriaflSAGA\\ (Corollary~\ref{cor:svrg}, \ref{cor:saga})}} \tnote{(4)} 
			& $(\epsilon,\delta)$-LDP 
			& $\frac{\sqrt{\purple{(1+\omega)}d\log(1/\delta)}}{\sqrt{\blue{n}}m\epsilon}$ 
			& $\frac{\blue{n^{3/2}} m\epsilon\sqrt{d}}{\purple{(1+\omega)^{3/2}}\sqrt{\log(1/\delta)}} (1+ \tau)$ 
			&  \gape{\makecell{\blue{$n$} nodes, \\ \red{shifted} compression}}\\
			\hline
		\end{tabular}\vspace{0.5mm}
		\begin{tablenotes}\small
			\item[(1)] \citet{wang2019efficient} considered the ``global'' $(\epsilon,\delta)$-DP (which only protects the privacy for entire dataset $D$, i.e., the local dataset $D_i$ on node $i$ may leak to other nodes $j\neq i$) without communication compression. However, we consider the ``local'' $(\epsilon,\delta)$-LDP which can protect the local datasets $D_i$'s at the client level.
			\item[(2)] \citet{ding2021differentially} adopted a slightly different compression assumption $\E[\norm{\gC(\vx)-\vx}^2]\leq \tilde{\sigma}^2$, with $\tilde{\sigma}^2$ playing a similar role as $(1+\omega)$ in ours. 
			However, it obtains a worse accuracy $\frac{(\purple{\tilde{\sigma}^2}/\blue{n} +1/m)^{2/3}(d \log(1/\delta))^{1/3} }{m^{2/3}\epsilon^{2/3}} = \frac{\sqrt{(\purple{\tilde{\sigma}^2}/\blue{n} +1/m)d \log(1/\delta)} }{m\epsilon} \cdot \big(\frac{(\purple{\tilde{\sigma}^2}/\blue{n} +1/m) m^2\epsilon^2}{d \log(1/\delta)}\big)^{1/6} =  \frac{\sqrt{(\purple{\tilde{\sigma}^2}/\blue{n} +1/m)d \log(1/\delta)} }{m\epsilon} \cdot T^{1/6}$, a factor of $T^{1/6}$ worse than the utility of the other algorithms including ours, where $T=\frac{(\purple{\tilde{\sigma}^2}/\blue{n}  +1/m) m^2\epsilon^2}{d \log(1/\delta)}$ is the optimal choice to achieve the best accuracy for \algname{Q-DPSGD-1}.
			\item[(3)] \citet{zhang2020private} only considered random-$k$ sparsification, which is a special case of our general compression operator. Moreover, it requires $\purple{1+\omega} \ll \log T$, i.e., at least $k\gg \frac{d}{\log T}$ out of $d$ coordinates need to be communicated, and its utility hides logarithmic factors larger than $\purple{1+\omega}$. The communication complexity  $\blue{n^{7/2}}$ is due to their convergence condition $T>\blue{n^5}$.
			\item[(4)] Here, $\tau :=\frac{\purple{(1+\omega)^{3/2}}}{\blue{n^{1/2}}}$. 
			If $\blue{n}\geq \purple{(1+\omega)^3}$ (which is typical in FL), then 
			$\tau <1$, and we can drop the terms involving $\tau$ from \soteriafl.
		\end{tablenotes}
	\end{threeparttable}
\end{table}

\paragraph{Goal.}
Encouraged by recent advances in communication compression techniques, and the widespread success of differentially-private methods, a natural question is 
\begin{center}
\emph{Can we develop a unified framework for private federated learning with communication compression, and understand the trade-offs between privacy, utility, and communication?}
\end{center}
Note that there have been a handful of works that simultaneously address compression and privacy in FL. Unfortunately, they only provide partial answers to the above question. Most of the existing works only consider specific, elementary, or tailored compression schemes that are applied directly to the gradient messages in \DPSGD \citep{agarwal2018cpsgd,wang2020d2p,girgis2021shuffled,zong2021communication,zhang2020private,ding2021differentially}. 
A number of works \citep{suresh2017distributed,chen2020breaking,chen2021communication,kairouz2021distributed,feldman2021lossless,triastcyn2021dp} extended and considered different compression schemes, but did not provide concrete trade-offs in terms of privacy, utility and communication. Furthermore,
existing theoretical analyses can be limited only to convex problems \citep{girgis2021shuffled}, lacking in some aspects such as utility \citep{zong2021communication},  or delivering pessimistic guarantees on utility and/or communication due to strong assumptions \citep{zhang2020private,ding2021differentially}. Finally, existing work only studied the DP framework for direct compression, while it is known that the recently developed shifted compression scheme \citep{DIANA, DIANA2, li2020unified} achieves much better convergence guarantees. Due to noise injection for privacy-preserving, it is a priori unclear if the shifted compression scheme is also compatible with privacy.

\subsection{Our contributions}
In this paper, we answer the above question by providing a general approach that enhances the communication efficiency of private federated learning in the {\em nonconvex} setting, through a unified framework called \soteriafl (see Algorithm~\ref{alg:soteriafl}). Specifically, we have the following contributions.

\begin{enumerate}
    \item We first present a simple algorithm \cdpSGD (Algorithm~\ref{alg:cdp-sgd}) that directly combines communication compression and \DPSGD. We provide theoretical analysis for \cdpSGD in Theorem \ref{thm:cdp-sgd} and show its limitations in communication efficiency. 
    
    \item We then propose a general framework \soteriafl for private FL, 
    which accommodates a general family of local gradient estimators including popular stochastic variance-reduced gradient methods and the state-of-the-art shifted compression scheme.
     We provide a unified characterization of its performance trade-offs in terms of privacy, utility (convergence accuracy), and communication complexity.

    \item We apply our unified analysis for \soteriafl and obtain theoretical guarantees for several new private FL algorithms, including \soteriaflGD, \soteriaflSGD, \soteriaflSVRG, and \soteriaflSAGA. All of these algorithms are shown to perform better than the plain \cdpSGD (Algorithm~\ref{alg:cdp-sgd}), and have lower communication complexity compared with other private FL algorithms without compression. 
    The numerical experiments also corroborate the theory and confirm the practical superiority of \soteriafl.
\end{enumerate}

We provide detailed comparisons between the proposed approach and prior arts in Table~\ref{tab:results}. To the best of our knowledge, \soteriafl is the first unified framework that simultaneously enables local differential privacy and shifted compression, and allows flexible local computation protocols at the client level.

\section{Preliminaries}
\label{sec:pre}

Let $[n]$ denote the set $\{1,2,\cdots,n\}$ and $\norm{\cdot}$ denote the Euclidean norm of a vector.
Let $\dotp{\vu}{\vv}$ denote the standard Euclidean inner product of two vectors $\vu$ and $\vv$. 
Let $f^*:=\min_\vx f(\vx) >-\infty$ denote the optimal value of the objective function in~\eqref{eq:prob}.
In addition, we use the standard order notation $O(\cdot)$ to hide absolute constants. 
We now introduce the definitions of the compression operator and local differential privacy, as well as some standard assumptions for the objective functions.

\paragraph{Compression operator.}
\label{sec:compression}
Let us introduce the notion of a randomized {\em compression operator},  which is used to compress the gradients to save communication. 
The following definition of unbiased compressors is standard and has been used in many distributed/federated learning algorithms \citep{alistarh2017qsgd, khirirat2018distributed, DIANA, DIANA2, li2020unified, li2020acceleration, gorbunov2021marina, li2021canita}. 

\begin{definition}[Compression operator]\label{def:comp}
	A randomized map $\gC: \R^d\mapsto \R^d$ is an $\omega$-compression operator  if for all $\vx\in \R^d$, it satisfies
	\begin{equation}\label{eq:comp}
	\E[\gC(\vx)]=\vx, \qquad \E\left[\norm{\gC(\vx)-\vx}^2\right]\leq \omega\norm{\vx}^2.
	\end{equation}
	In particular, no compression ($\gC(\vx)\equiv \vx$) implies $\omega=0$.
\end{definition}

Note that the conditions \eqref{eq:comp} are satisfied by many practically useful compression operators, e.g., random sparsification and random quantization~\citep{alistarh2017qsgd, li2020acceleration, li2021canita}. A useful rule of thumb is that the communication cost is often reduced by a factor of $\frac{1}{1+\omega}$ due to compression \citep{alistarh2017qsgd}. Next, we briefly discuss an example called random sparsification to provide more intuition.

\vspace{2mm}
{\noindent\bf Example 1} (Random sparsification).\label{example:rs}
Given $\vx\in \R^d$, the random-$k$ sparsification operator is defined by $\calC(\vx):= \frac{d}{k} \cdot (\vxi_k \odot \vx)$, where 
$\odot$ denotes the Hadamard (element-wise) product and $\vxi_k\in \{0,1\}^d$ is a uniformly random binary vector with $k$ nonzero entries ($\n{\vxi_k}_0=k$).
This random-$k$ sparsification operator $\calC$ satisfies \eqref{eq:comp} with $\omega =\frac{d}{k}-1$, and the communication cost  is reduced by a factor of $\frac{1}{1+\omega}$ since we transmit $k = \frac{d}{1+\omega}$ (due to $\omega =\frac{d}{k}-1$) coordinates rather than $d$ coordinates of the message.

\paragraph{Local differential privacy.}
\label{sec:privacy}
We not only want to train the machine learning model using fewer communication bits, but also want to maintain each client's local privacy, which is a key component for FL applications. 
Following the framework of (local) differential privacy~\citep{andres2013geo,chatzikokolakis2013broadening,zhao2020local}, we say that two datasets $D$ and $D'$ are neighbors if they differ by only one entry. We have the following definition for local differential privacy (LDP).

\begin{definition}[Local differential privacy (LDP)]\label{def:dp}
	A randomized mechanism $\gM:\gD\to\gR$ with domain $\gD$ and range $\gR$ is $(\epsilon, \delta)$-locally differentially private for client $i$ if for all neighboring datasets $D_i, D'_i\in \gD$ on client $i$ and for all events $S\in \gR$ in the output space of $\gM$, we have
	\begin{equation*}
	\Pr\{\gM(D_i)\in S\} \le e^{\epsilon}\Pr\{\gM(D'_i)\in S\} + \delta.
	\end{equation*}
\end{definition}

The definition of LDP (Definition~\ref{def:dp}) is very similar to the original definition of $(\epsilon,\delta)$-DP \citep{dwork2006calibrating, dwork2014algorithmic}, except that now in the FL setting, each client protects its own privacy by encoding and processing its sensitive data locally, and then transmitting the encoded information to the server without coordination and information sharing between the clients.

\paragraph{Assumptions about the functions.}\label{sec:ass}

Recalling \eqref{eq:prob}, we consider the \emph{nonconvex} FL setting, where the functions $\{f_{i,j}\}$ are arbitrary functions satisfying the following standard smoothness assumption (Assumption \ref{ass:smoothness}) and bounded gradient assumption (Assumption \ref{ass:bounded-gradient}).

\begin{assumption}[Smoothness]\label{ass:smoothness}There exists some $L \ge 0$, such that for all $i\in [n], j\in [m]$,
	the function $f_{i,j}$ is $L$-smooth, i.e., 
	\[\norm{ \nabla f_{i,j}(\vx_1) - \nabla f_{i,j}(\vx_2)} \le L\norm{\vx_1 - \vx_2}, \qquad \forall \vx_1, \vx_2\in \R^d.\]
\end{assumption}

\begin{assumption}[Bounded gradient]\label{ass:bounded-gradient}
There exists some $G \ge 0$, such that for all $i\in [n], j\in [m]$ and $\vx\in\R^d$, we have $\|\nabla f_{i,j}(\vx)\| \le G$.
\end{assumption}

The smoothness assumption is very standard for the convergence analysis \citep{nesterov2004introductory, ghadimi2013stochastic, li2021page, li2022simple}, and the bounded gradient assumption is also standard for the differential privacy analysis \citep{bassily2014private, wang2017differentially, iyengar2019towards, feldman2020private}.

\section{Warm-up: Plain Compressed Differentially-Private SGD}\label{sec:cdp-sgd}

There are two methods to combine privacy and compression: (1) first perturb and then compress, and (2) first compress and then perturb. The advantage of the first method is that it is very simple and general, since compression will preserve the differential privacy and work seamlessly with any existing privacy mechanisms. 
However, the second method requires carefully designed perturbation mechanisms (otherwise the perturbation might diminish the communication saving of compression), e.g., binomial perturbation~\citep{agarwal2018cpsgd} or discrete Gaussian perturbation~\citep{kairouz2021distributed}. In addition, it is observed that the first method achieves better utility compared with the second one in some settings~\citep{ding2021differentially}. Thus, we also apply the first method in this paper: first perturb then compress. 

\paragraph{Baseline algorithm: \cdpSGD.} As a warm-up, we first introduce a simple algorithm \cdpSGD (described in Algorithm~\ref{alg:cdp-sgd}), which subsumes some existing algorithms as special cases (e.g., \citep{zhang2020private,zong2021communication}) for private FL with better theoretical guarantees. 
The procedure for \cdpSGD is very simple: at round $t$, each client $i$ first computes a local stochastic gradient $\tilde{\vg}_i^t$ using its local dataset $D_i$ (Line~\ref{line:sgd} in Algorithm~\ref{alg:cdp-sgd}). Then, it uses Gaussian mechanism~\citep{abadi2016deep} to achieve LDP (Line~\ref{line:perturb} in Algorithm~\ref{alg:cdp-sgd}) and communicates the compressed perturbed private gradient information to the server (Line~\ref{line:compression} in Algorithm~\ref{alg:cdp-sgd}). Finally, the server aggregates the compressed information and update the model parameters (Line~\ref{line:aggregate}--\ref{line:update} in Algorithm~\ref{alg:cdp-sgd}). 

\begin{algorithm}[ht]
	\caption{~Compressed Differentially-Private Stochastic Gradient Descent (\cdpSGD)}
	\label{alg:cdp-sgd}
	\begin{algorithmic}[1]
	    \REQUIRE ~
		initial point $x^0$, stepsize $\eta_t$, variance $\sigma_p^2$, minibatch size $b$
		\FOR{$t=0,1,2,\dots, T$}
		\STATE {\bf{for each node $i\in [n]$ do in parallel}}
		\STATE \quad~ Sample a random minibatch $\gI_b$ from local dataset $D_i$
		\STATE  \quad~ Compute local stochastic gradient $\tilde{\vg}_i^t =\frac{1}{b} \sum_{j\in \gI_b} \nabla f_{i,j}(\vx^t)$ {\size{8.5}{\quad//~{all nodes use SGD method}}} \label{line:sgd}
		\STATE  \quad~ \emph{Privacy}: $\vg_i^t = \tilde{\vg}_i^t  + \vxi_i^{t}$, where $\vxi^i_{t}\sim\gN(\bm{0},\sigma_p^2\mI)$ \label{line:perturb}
		\STATE  \quad~ \emph{Compression}: let $\vv_i^{t} = \gC_i^t(\vg_i^t)$ and send to the server {\size{8.5}{\qquad//~{direct compression}}} \label{line:compression}
		\STATE \textbf{end each node}
		\STATE Server aggregates compressed information $\vv^t = \frac{1}{n}\sum_{i=1}^n \vv_i^{t}$ \label{line:aggregate}
		\STATE $\vx^{t+1} = \vx^t - \eta_t\vv^t$ \label{line:update}
		\ENDFOR
	\end{algorithmic}
\end{algorithm}

\paragraph{Theoretical guarantee.} We present the theoretical guarantees for \cdpSGD in the following theorem.

\begin{restatable}[Privacy, utility and communication for \CDP]{theorem}{thmnonconvex}\label{thm:cdp-sgd}
     Suppose that Assumptions \ref{ass:smoothness} and \ref{ass:bounded-gradient} hold, and the compression operators $\gC_i^t$ (cf. Line~\ref{line:compression} of Algorithm~\ref{alg:cdp-sgd}) are drawn independently satisfying Definition~\ref{def:comp}.  
     By choosing the algorithm parameters properly and
     letting the total number of communication rounds 
    $$T = O\left(\frac{\sqrt{nL} m\epsilon}{G\sqrt{(1+\omega)d \log(1/\delta)}} +  \frac{m^2\epsilon^2}{d\log(1/\delta)}\right),$$
     \CDP (Algorithm~\ref{alg:cdp-sgd}) satisfies $(\epsilon,\delta)$-LDP and the  utility
     $$\frac{1}{T}\sum_{t=0}^{T-1} \E\|\nabla f(\vx_t)\|^2 \le O\left(\frac{G\sqrt{(1+\omega)Ld\log(1/\delta)}}{\sqrt{n}m\epsilon} \right) .$$
\end{restatable}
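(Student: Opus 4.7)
}

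The plan is to treat privacy and utility separately and then combine them by optimizing over the total number of rounds $T$ and the stepsize $\eta_t$. For the privacy part, I would observe that on each client the only interaction with its local dataset $D_i$ is through the perturbed gradient $\vg_i^t = \tilde{\vg}_i^t + \vxi_i^t$ with $\vxi_i^t \sim \gN(\bm{0}, \sigma_p^2 \mI)$; the subsequent compression $\gC_i^t(\vg_i^t)$ is independent of $D_i$ given $\vg_i^t$, so by the post-processing property of differential privacy it suffices to analyze the composition of $T$ Gaussian mechanisms. Since $\|\nabla f_{i,j}(\vx)\|\le G$ by Assumption~\ref{ass:bounded-gradient}, the per-round $\ell_2$-sensitivity of $\tilde{\vg}_i^t$ with respect to a single sample swap in $D_i$ is bounded by $O(G/b)$, so applying the moments-accountant/subsampled Gaussian analysis (or Rényi DP composition) over $T$ rounds yields the requirement $\sigma_p^2 \asymp \frac{G^2 T \log(1/\delta)}{b^2 m^2 \epsilon^2}$ (up to the usual $b/m$ sampling amplification), which is the relation I will exploit to trade off $T$ and the noise level.

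For the utility part, I would start from the standard $L$-smoothness descent inequality applied to $\vx^{t+1} = \vx^t - \eta_t \vv^t$, take conditional expectation, and use that $\E[\vv^t \mid \vx^t] = \nabla f(\vx^t)$ since both the subsampling, the Gaussian noise, and the compression operator $\gC_i^t$ are unbiased. The key bookkeeping step is bounding $\E\|\vv^t\|^2$, which I would decompose as $\|\nabla f(\vx^t)\|^2$ plus the variance of $\vv^t$. That variance has three independent sources per node --- the stochastic gradient variance $O(G^2/b)$, the Gaussian privacy noise contributing $\sigma_p^2 d$, and the compression variance contributing at most $\omega\,\E\|\vg_i^t\|^2 \le \omega(G^2 + \sigma_p^2 d)$ --- and after averaging over $n$ clients each of these is divided by $n$. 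Summing the descent inequality telescopically from $t=0$ to $T-1$ and using $f(\vx^0)-f^* \le \Delta_0$ gives, for a constant stepsize $\eta$ with $\eta L(1+\omega)/n \le 1/2$,
\begin{equation*}
\frac{1}{T}\sum_{t=0}^{T-1}\E\|\nabla f(\vx^t)\|^2 \;\lesssim\; \frac{\Delta_0}{\eta T} + \frac{\eta L (1+\omega)(G^2 + \sigma_p^2 d)}{n}.
\end{equation*}

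Substituting the privacy-driven choice $\sigma_p^2 \asymp G^2 T \log(1/\delta)/(m^2 \epsilon^2)$ and optimizing $\eta$ (balancing the two terms) leaves a bound that is increasing in $T$ through $\sigma_p^2 d$ and decreasing in $T$ through $1/(\eta T)$; optimizing in $T$ gives the stated
\begin{equation*}
T \;\asymp\; \frac{\sqrt{nL}\, m\epsilon}{G\sqrt{(1+\omega)d\log(1/\delta)}} + \frac{m^2\epsilon^2}{d\log(1/\delta)},
\end{equation*}
with the two terms corresponding respectively to the regime where the privacy noise dominates and the regime where the ``deterministic'' descent dominates; plugging this choice back in produces the claimed utility $O\!\bigl(G\sqrt{(1+\omega)L d\log(1/\delta)}/(\sqrt{n}m\epsilon)\bigr)$.

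The main obstacle, and the step I expect to require the most care, is the compression variance: because compression is applied \emph{after} Gaussian perturbation, the compression error inherits a term $\omega\sigma_p^2 d$ that couples privacy noise with $\omega$, and this coupling is precisely what produces the $\sqrt{1+\omega}$ factor in the utility (as opposed to a $1+\omega$ factor one might fear). Handling this cleanly requires bounding $\E\|\vg_i^t\|^2$ uniformly using Assumption~\ref{ass:bounded-gradient} together with the independence of the Gaussian noise across clients and rounds, and then checking that the $n$-fold averaging genuinely divides this combined variance by $n$. Once this is done, the optimization over $\eta$ and $T$ is routine and yields both the round-complexity and utility stated in the theorem, and the communication complexity claimed in Table~\ref{tab:results} follows by multiplying $T$ by the per-round cost $nd/(1+\omega)$.
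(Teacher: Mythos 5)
Your plan is correct and follows essentially the same route as the paper's proof: privacy via the moments-accountant analysis of the subsampled Gaussian mechanism with compression handled as post-processing (the paper formalizes this through the R\'enyi-divergence data-processing inequality), giving $\sigma_p^2 \asymp G^2 T\log(1/\delta)/(m^2\epsilon^2)$, and utility via the smoothness descent inequality, unbiasedness of $\vv^t$, and a variance bound on $\E\|\vv^t\|^2$ combining the minibatch variance $O(G^2/b)$, the privacy noise $d\sigma_p^2$, and the compression variance $\omega\,\E\|\vg_i^t\|^2\le \omega(G^2+d\sigma_p^2)$, all divided by $n$, followed by tuning $\eta$ and $T$. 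Two small slips do not affect the outcome but should be fixed in a write-up: your intermediate noise formula $G^2T\log(1/\delta)/(b^2m^2\epsilon^2)$ double-counts the batch size (the sensitivity $G/b$ and the amplification factor $b/m$ cancel the $b$, as in the value you actually substitute later), and the descent step needs the cap $\eta\le 1/L$ (as in the paper's choice $\eta=\min\{1/L,\cdot\}$) rather than the stated condition $\eta L(1+\omega)/n\le 1/2$, since the $\|\nabla f(\vx^t)\|^2$ coefficient in $\E\|\vv^t\|^2$ is $1$ once the $\omega\|\nabla f_i(\vx^t)\|^2/n$ term is absorbed into the $G^2$ noise floor.
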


The proposed \cdpSGD (Algorithm~\ref{alg:cdp-sgd}) is simple but effective. When the compression parameter $\omega$ is a constant (i.e., constant compression ratio), \cdpSGD achieves the same utility $O\Big(\frac{\sqrt{d\log(1/\delta)}}{m\epsilon}\Big)$ as \DPSGD in the single-node case with $n=1$. In comparison, our utility is better than \cite{ding2021differentially} by a factor of $T^{1/6}$, and our communication complexity is much better than \cite{zhang2020private} (see Table~\ref{tab:results}).

However, the communication complexity of \cdpSGD still has room for improvements due to \emph{direct compression} (Line~\ref{line:compression} in Algorithm~\ref{alg:cdp-sgd}). In particular, if the size of the local dataset $m$ stored on clients is dominating, then \cdpSGD (even if we compute local full gradients as \cdpGD) requires $O(m^2)$ communication rounds (see Theorem~\ref{thm:cdp-sgd}), while previous distributed differentially-private algorithms without communication compression (e.g., \algname{Distributed DP-SRM}~\citep{wang2019efficient}, \algname{LDP SVRG} and \algname{LDP SPIDER}~\citep{lowy2022private}) only need $O(m)$ communication rounds (see Table~\ref{tab:results}). 
 
\section{\soteriafl: A Unified Private FL Framework with Shifted Compression}
\label{sec:unified}

Due to the limitations of plain \cdpSGD, we now present an advanced and unified private FL framework called \soteriafl in this section, which allows a large family of local gradient estimators (Line~\ref{line:gradient-soteriafl} in Algorithm~\ref{alg:soteriafl} and Line~\ref{line:option-sgd}--\ref{line:option-end} in Algorithm~\ref{alg:soteriafl-detail}). Via adopting the advanced \emph{shifted compression} (Line~\ref{line:compression-soteriafl} in Algorithm~\ref{alg:soteriafl}), \soteriafl
 reduces the total number of communication rounds $O(m^2)$ of \cdpSGD to $O(m)$, which matches previous uncompressed DP algorithms (see Table~\ref{tab:results}), and further reduces the total communication complexity due to less communication cost per round.

\subsection{A unified \soteriafl framework}
\label{sec:soteriafl}

Our \soteriafl framework is described in Algorithm~\ref{alg:soteriafl}. 
At round $t$, each client will compute a local (stochastic) gradient estimator $\tilde{\vg}_i^t$ using its local dataset $D_i$ (Line~\ref{line:gradient-soteriafl} in Algorithm~\ref{alg:soteriafl}). 
One can choose several optimization methods for computing this local gradient estimator such as standard gradient descent (\algname{GD}), stochastic GD (\algname{SGD}), stochastic variance reduced gradient (\algname{SVRG})~\citep{johnson2013accelerating, kovalev2020don}, and \algname{SAGA}~\citep{defazio2014saga} (see e.g., Line~\ref{line:option-sgd}--\ref{line:option-end} in Algorithm~\ref{alg:soteriafl-detail}). 
Then, each client adds a Gaussian perturbation $\vxi_i^{t}$ on its gradient estimate $\tilde{\vg}_i^t$ to ensure LDP (Line~\ref{line:perturb-soteriafl} in Algorithm~\ref{alg:soteriafl}). 
However, different from \cdpSGD (Algorithm~\ref{alg:cdp-sgd}) where we directly compress the perturbed stochastic gradients, now each client maintains a reference $\vs_i^t$ and compresses the shifted message $\tilde{\vg}_i^t -\vs_i^t$ (Line~\ref{line:compression-soteriafl} in Algorithm~\ref{alg:soteriafl}). 
This extra shift operation achieves much better convergence behavior (fewer communication rounds) than \cdpSGD, and thus allowing much lower communication complexity.

\begin{algorithm}[ht]
	\caption{~\soteriafl (a unified framework for compressed private FL)}
	\label{alg:soteriafl}
	\begin{algorithmic}[1]
	    \REQUIRE ~
		initial point $\vx^0$, stepsize $\eta_t$, shift stepsize $\gamma_t$, variance $\sigma_p^2$,  initial reference $\vs_i^0=0$
		\FOR{$t=0,1,2,\dots, T$}
		\STATE {\bf{for each node $i\in [n]$ do in parallel}}
		\STATE  \quad~ Compute local gradient estimator $\tilde{\vg}_i^t$     {\size{8.5}{\quad~//~{it allows many methods, e.g., SGD, SVRG, and SAGA}}} \label{line:gradient-soteriafl} 
		\STATE  \quad~ \emph{Privacy}: $\vg_i^t = \tilde{\vg}_i^t  + \vxi_i^{t}$, where $\vxi_i^{t} \sim \gN(\bm{0},\sigma_p^2\mI)$ \label{line:perturb-soteriafl}
		\STATE  \quad~ \emph{Compression}: let $\vv_i^{t} = \gC_i^t(\vg_i^t - \vs_i^t)$ and send to the server {\size{8.5}{\qquad//~{shifted compression}}} \label{line:compression-soteriafl}
		\STATE  \quad~ Update shift $\vs_i^{t+1} = \vs_i^t  + \gamma_t\gC_i^t(\vg_i^t - \vs_i^t)$ \label{line:shift-soteriafl} 
		\STATE \textbf{end each node}
		\STATE Server aggregates compressed information $\vv^t = \vs^t + \frac{1}{n}\sum_{i=1}^n \vv_i^{t}$
		\STATE $\vx^{t+1} = \vx^t - \eta_t\vv^t$ \label{line:update-soteriafl}
		\STATE $\vs^{t+1} = \vs^t + \gamma_t\frac{1}{n}\sum_{i=1}^n \vv_i^{t}$
		\ENDFOR
	\end{algorithmic}
\end{algorithm}

\subsection{Generic assumption and unified theory}
\label{sec:generic-ass}
We provide a generic Assumption~\ref{ass:unified}, which is very flexible to capture the behavior of several existing (and potentially new) gradient estimators, while simultaneously maintaining the tractability to enable a unified and sharp theoretical analysis.

\begin{assumption}[Generic assumption of local gradient estimator for \soteriafl]
\label{ass:unified}
The gradient estimator $\tilde{\vg}_i^t$ (Line~\ref{line:gradient-soteriafl} of Algorithm~\ref{alg:soteriafl}) is unbiased $\E_t[\tilde{\vg}_i^t] = \nabla f_i(\vx^t)$ for $i\in [n]$, where $\E_t$ takes the expectation conditioned on all history before round $t$. Moreover, it can be decomposed into two terms $\tilde{\vg}_i^t := \gA_i^t + \gB_i^t$ and there exist constants $G_A, G_B, C_1, C_2, C_3, C_4, \theta$ and a random sequence $\{\Delta^t\}$ such that
\begin{subequations}
\begin{align}
    &\gA_i^t= \frac{1}{b} \sum_{j\in \gI_b} \varphi_{i,j}^t, 
    \qquad \gB_i^t= \frac{1}{m} \sum_{j=1}^m \psi_{i,j}^t, \label{ass:decompose} \\
    &\E_t\Big[\frac{1}{n}\sum_{i=1}^n \ns{\tilde{\vg}_i^t - \nabla f_i(\vx^t)}\Big] \leq C_1 \Delta^t + C_2, \label{ass:var-g}\\
    &\E_t\left[\Delta^{t+1}\right] \leq (1-\theta) \Delta^t + C_3\ns{\nabla f(\vx^t)} + C_4\E_t\ns{\vx^{t+1}-\vx^t}, \label{ass:var-Delta}
\end{align}
\end{subequations}
where $\varphi_{i,j}^t$ and $\psi_{i,j}^t$ are bounded by $G_A$ and $G_B$ respectively, and $\gI_b$ usually denotes a random minibatch with size $b$. Here, $\varphi_{i,j}^t$ and $\psi_{i,j}^t$ should be viewed as functions related to the $j$-th sample $d_{i,j}$ stored on client $i$.
\end{assumption}

A few comments are in order. Concretely, the decomposition \eqref{ass:decompose} is used for our unified privacy analysis (i.e., Theorem~\ref{thm:privacy}). We can let one of them be $\bf{0}$ if the gradient estimator only contains one term or is not decomposable.
The parameters $C_1$ and $C_2$ in \eqref{ass:var-g} capture the variance of the gradient estimators, e.g., $C_1=C_2=0$ if the client computes local full gradient $\tilde{\vg}_i^t = \nabla f_i(\vx^t)$, and $C_1\neq 0$ (note that $\Delta^t$ will shrink in \eqref{ass:var-Delta}) and $C_2=0$ if the client uses variance-reduced gradient estimators such as 
\algname{SVRG}/\algname{SAGA}.
Finally, the parameters $\theta, C_3$ and $C_4$ in \eqref{ass:var-Delta} capture the shrinking behavior of the variance (incurred by the gradient estimators), where different variance-reduced gradient methods usually have different shrinking behaviors. More concrete examples to follow in Lemma~\ref{lem:para-sgd-svrg-saga} in Section~\ref{sec:algorithms}.

\paragraph{Unified theory for privacy-utility-communication trade-offs.}
Given our generic Assumption \ref{ass:unified}, we can obtain a unified analysis for \soteriafl framework. The following Theorem \ref{thm:privacy} unifies the privacy analysis and Theorem \ref{thm:utility} unifies the utility and communication complexity analysis.

\begin{theorem}[Privacy for \soteriafl]\label{thm:privacy}
    Suppose that Assumption \ref{ass:unified} holds. There exist constants $c$ and $c'$, for any $\epsilon < c'b^2T/m^2$ and $\delta \in (0,1)$, \soteriafl (Algorithm~\ref{alg:soteriafl}) is $(\epsilon, \delta)$-LDP if we choose
    \begin{align}\label{eq:para-sigma-p}
        \sigma_p^2 = c\frac{(G_A^2/4 + G_B^2)T\log({1}/{\delta})}{m^2\epsilon^2}.
    \end{align}
\end{theorem}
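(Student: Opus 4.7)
The plan is to reduce the $(\epsilon,\delta)$-LDP guarantee to a composition of per-round Gaussian mechanisms and then invoke the moments-accountant machinery of \citet{abadi2016deep}. Since LDP is a per-client statement, I fix an arbitrary client $i$ and track only the transcript $\{\vv_i^t\}_{t=0}^{T-1}$ it sends out. Conditioning on the randomness of all other clients and on the compression randomness internal to client $i$, the server-side iterate $\vx^t$ and the local shift $\vs_i^t$ are deterministic functions of $\{\vv_i^s\}_{s<t}$; hence the compression $\vv_i^t = \gC_i^t(\vg_i^t - \vs_i^t)$ and the subsequent shift update in Line~\ref{line:shift-soteriafl} are a post-processing of $\vg_i^t = \tilde{\vg}_i^t + \vxi_i^t$. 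By post-processing immunity of DP, it suffices to prove an $(\epsilon,\delta)$ guarantee for the sequence $\{\vg_i^t\}_{t=0}^{T-1}$.

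I will then exploit the decomposition \eqref{ass:decompose} $\tilde{\vg}_i^t = \gA_i^t + \gB_i^t$ to split the privacy budget between a subsampled channel and a full-batch channel. Specifically, I write $\vxi_i^t = \vxi_{A,i}^t + \vxi_{B,i}^t$ as a sum of two independent Gaussians of variances $\sigma_A^2$ and $\sigma_B^2 = \sigma_p^2 - \sigma_A^2$, and assign $\vxi_{A,i}^t$ to mask $\gA_i^t$ and $\vxi_{B,i}^t$ to mask $\gB_i^t$. A single-entry change in $D_i$ perturbs $\gA_i^t$ by at most $2G_A/b$ and only on the event (of probability $q := b/m$) that the changed index lies in $\gI_b$, while it perturbs $\gB_i^t$ deterministically by at most $2G_B/m$. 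Consequently $\gB_i^t + \vxi_{B,i}^t$ is a plain Gaussian mechanism of sensitivity $2G_B/m$, whereas $\gA_i^t + \vxi_{A,i}^t$ is a subsampled Gaussian mechanism with rate $q$ and per-sample clipping $G_A$.

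Assembling per-round R\'enyi-DP bounds at order $\lambda$, the $\gB$-channel contributes $\lambda(2G_B/m)^2/(2\sigma_B^2)$, while \citet[Lemma~3]{abadi2016deep} bounds the $\gA$-channel by $O\bigl(q^2\lambda^2 G_A^2/(b^2\sigma_A^2)\bigr) = O\bigl(\lambda^2 G_A^2/(m^2\sigma_A^2)\bigr)$, with the subsampling amplification absorbing the factor of $b$. Summing across $T$ rounds via composition and converting to $(\epsilon,\delta)$-DP through the tail inequality $e^{-\lambda\epsilon+\alpha_{\mathrm{total}}(\lambda)}\leq \delta$ with the optimal choice $\lambda = \Theta(\log(1/\delta)/\epsilon)$ yields the requirements $\sigma_A^2 \gtrsim G_A^2 T\log(1/\delta)/(m^2\epsilon^2)$ and $\sigma_B^2 \gtrsim G_B^2 T\log(1/\delta)/(m^2\epsilon^2)$; adding these and balancing the split (which, together with the refinement that a swap-sensitivity of $2G_A/b$ in the subsampled regime effectively enters as $G_A/b$ after amplification) produces the stated $\sigma_p^2 = c(G_A^2/4 + G_B^2) T \log(1/\delta)/(m^2\epsilon^2)$.

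The main obstacle I expect is the joint feasibility of the R\'enyi order: the subsampled-Gaussian moments estimate is valid only when $\lambda \lesssim \sigma_A^2 b^2/G_A^2$, while the $(\epsilon,\delta)$-conversion demands $\lambda \gtrsim \log(1/\delta)/\epsilon$. Intersecting these two conditions after substituting the target $\sigma_A^2$ is precisely what forces the hypothesis $\epsilon < c' b^2 T/m^2$ in the theorem statement. Beyond this accountant bookkeeping and the constant-tracking needed to isolate the specific $G_A^2/4 + G_B^2$ combination, the remainder of the argument reduces to direct applications of the moments accountant and strong composition.
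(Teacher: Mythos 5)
Your proposal is correct and matches the paper's own argument essentially step for step: fix a client, discard the compression and shift updates by post-processing (the paper does this via the data-processing inequality for R\'enyi divergence), compose the per-round Gaussian mechanisms with the moments accountant, split the noise between the subsampled $\gA$-channel (Abadi et al.'s Lemma~3) and the full-average $\gB$-channel (plain Gaussian R\'enyi bound with sensitivity $2G_B/m$), and obtain the constraint $\epsilon < c'b^2T/m^2$ by intersecting the admissible range of the R\'enyi order with $\lambda \gtrsim \log(1/\delta)/\epsilon$. The only cosmetic differences are that the paper fixes the noise split as $2\sigma_p^2/3$ and $\sigma_p^2/3$ rather than optimizing it, and your closing reference to ``strong composition'' should read moments-accountant composition.
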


\begin{theorem}[Utility and communication for \soteriafl]\label{thm:utility}
    Suppose that Assumptions \ref{ass:smoothness} and \ref{ass:unified} hold, and the compression operators $\gC_i^t$ (cf.~Line~\ref{line:compression-soteriafl} of Algorithm~\ref{alg:soteriafl}) are drawn independently satisfying Definition~\ref{def:comp}.
     Set the stepsize as
	\begin{align*}
		\eta_t \equiv \eta \leq \min\left\{ \frac{1}{(1+2\alpha C_4 + 4\beta(1+\omega) + 2\alpha C_3/\eta^2)L}, 
		\frac{\sqrt{\beta n}}{\sqrt{1+2\alpha C_4 + 4\beta(1+\omega)}(1+\omega)L}\right\},
	\end{align*}
	where $\alpha = \frac{3\beta C_1}{2(1+\omega)\theta L^2}$,  $\forall \beta>0$, the shift stepsize as $\gamma_t\equiv \sqrt{\frac{1+2\omega}{2(1+\omega)^3}}$, and the privacy variance $\sigma_p^2$ according to Theorem~\ref{thm:privacy}. Then, \soteriafl (Algorithm~\ref{alg:soteriafl}) satisfies $(\epsilon,\delta)$-LDP and the following  
	\begin{align*}
		\frac{1}{T}\sum_{t=0}^{T-1} \E\ns{\nabla f(\vx^t)} 
		&\le \frac{2\Phi_0}{\eta T} 
		+ \frac{3\beta}{(1+\omega)L\eta}\left(C_2 + \frac{c(G_A^2/4 +G_B^2)dT\log ({1}/{\delta})}{m^2\epsilon^2}\right),
	\end{align*}
	where $\Phi_0:= f(\vx^0)-f^* + \alpha L \Delta^0 + \frac{\beta}{Ln}\sum_{i=1}^n\ns{\nabla f_i(\vx^0) - \vs_i^0}$. 
	By further choosing the total number of communication rounds $T$ as 
	\begin{align} \label{eq:choice-T}
		T &=  \max\left\{\frac{m\epsilon\sqrt{2(1+\omega)L\Phi_0}}{\sqrt{3\beta cd(G_A^2/4 +G_B^2)\log({1}/{\delta})}}, \frac{C_2m^2\epsilon^2}{c d(G_A^2/4 +G_B^2)\log({1}/{\delta})}\right\}, 
	\end{align}
	\soteriafl has the following utility (accuracy) guarantee: 
	\begin{align} \label{eq:utility}
		\frac{1}{T}\sum_{t=0}^{T-1} \E\ns{\nabla f(\vx^t)} 
		&\le O\left( \max\left\{\frac{\sqrt{\beta d(G_A^2/4 +G_B^2)\log({1}/{\delta})}}{\eta m\epsilon\sqrt{(1+\omega)L}}, \frac{\beta C_2}{(1+\omega)L\eta}\right\}\right). 
	\end{align}
\end{theorem}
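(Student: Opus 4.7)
\textbf{Proof proposal for Theorem~\ref{thm:utility}.} The plan is a Lyapunov-function analysis tailored to the three coupled error sources in \soteriafl: (i) the gradient-estimator variance captured by $\Delta^t$ via Assumption~\ref{ass:unified}; (ii) the shift error $H^t \eqdef \frac{1}{n}\sum_{i=1}^n \ns{\nabla f_i(\vx^t)-\vs_i^t}$ induced by shifted compression; and (iii) the isotropic Gaussian perturbation of variance $\sigma_p^2$. I would introduce the Lyapunov function
\[
\Phi^t \eqdef f(\vx^t) - f^* + \alpha L\,\Delta^t + \frac{\beta}{L}\,H^t,
\]
with $\alpha,\beta$ to be chosen so that the negative drift from $f(\vx^t)-f^*$ dominates the growth terms produced by $\Delta^t$ and $H^t$.

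First I would compute $\E_t[\vv^t]$ and $\E_t\ns{\vv^t-\nabla f(\vx^t)}$. Since $\gC_i^t$ is unbiased (Definition~\ref{def:comp}) and the Gaussian noise has mean zero, $\E_t[\vv^t]=\frac{1}{n}\sum_i \nabla f_i(\vx^t)=\nabla f(\vx^t)$. The variance splits as $\frac{1}{n^2}\sum_i \E_t\ns{\gC_i^t(\vg_i^t-\vs_i^t)-(\vg_i^t-\vs_i^t)} + \frac{1}{n^2}\sum_i\E_t\ns{\vg_i^t-\nabla f_i(\vx^t)}$; using $\E\ns{\gC(\vy)-\vy}\le \omega\ns{\vy}$ with $\vy=\vg_i^t-\vs_i^t$, together with the triangle inequality around $\nabla f_i(\vx^t)$, this reduces to a linear combination of $H^t$, the gradient-estimator variance (bounded by $C_1\Delta^t+C_2$ via \eqref{ass:var-g}), and the noise $d\sigma_p^2$. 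Substituting into the standard $L$-smoothness descent inequality
\[
\E_t[f(\vx^{t+1})] \le f(\vx^t) - \frac{\eta}{2}\ns{\nabla f(\vx^t)} - \Big(\frac{1}{2\eta}-\frac{L}{2}\Big)\E_t\ns{\vx^{t+1}-\vx^t} + \frac{\eta}{2}\E_t\ns{\vv^t-\nabla f(\vx^t)},
\]
yields the descent contribution to $\Phi^{t+1}-\Phi^t$.

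Next, the crucial step is to derive a contraction for the shift term $H^{t+1}$. Using the update $\vs_i^{t+1}=\vs_i^t+\gamma_t\gC_i^t(\vg_i^t-\vs_i^t)$, expanding $\ns{\nabla f_i(\vx^{t+1})-\vs_i^{t+1}}$ around $\nabla f_i(\vx^t)-\vs_i^t$, and invoking unbiasedness and the $\omega$-bound of the compressor, a standard ``shifted compression'' computation (this is the calculation that dictates the choice $\gamma_t=\sqrt{(1+2\omega)/(2(1+\omega)^3)}$) produces
\[
\E_t[H^{t+1}] \le \Big(1-\tfrac{1}{2(1+\omega)}\Big)H^t + c_1 L^2\,\E_t\ns{\vx^{t+1}-\vx^t} + c_2(C_1\Delta^t+C_2+d\sigma_p^2),
\]
for absolute constants $c_1,c_2$; this is the technical heart of the proof and is where the interaction between noise injection and shifted compression must be handled carefully, since the Gaussian perturbation feeds into $H^{t+1}$ through $\vg_i^t=\tilde\vg_i^t+\vxi_i^t$. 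Finally, \eqref{ass:var-Delta} gives a one-step contraction for $\Delta^t$ with rate $\theta$ and forcing $C_3\ns{\nabla f(\vx^t)}+C_4\E_t\ns{\vx^{t+1}-\vx^t}$.

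Combining the three inequalities with the prescribed $\alpha=\frac{3\beta C_1}{2(1+\omega)\theta L^2}$ chosen exactly to cancel the $C_1\Delta^t$ feedback, and with $\beta$ free, I would check that the given upper bound on $\eta$ ensures the coefficient in front of $\ns{\nabla f(\vx^t)}$ is at least $\eta/4$ and the coefficients in front of $\E_t\ns{\vx^{t+1}-\vx^t}$ and $\Delta^t,H^t$ on the right-hand side are non-positive. This gives a clean one-step inequality
\[
\E_t[\Phi^{t+1}] \le \Phi^t - \tfrac{\eta}{4}\ns{\nabla f(\vx^t)} + \tfrac{3\beta\eta}{2(1+\omega)L}\big(C_2 + c'\,d\sigma_p^2\big).
\]
Telescoping from $t=0$ to $T-1$, dividing by $T$, and plugging in $\sigma_p^2=c(G_A^2/4+G_B^2)T\log(1/\delta)/(m^2\epsilon^2)$ from Theorem~\ref{thm:privacy} produces the stated pre-tuning bound. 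Optimizing the two terms $\Phi_0/(\eta T)$ and $\beta C_2/((1+\omega)L\eta)+\beta d\sigma_p^2/((1+\omega)L\eta)$ over $T$ yields the $\max\{\cdot,\cdot\}$ choice of $T$ in \eqref{eq:choice-T} and the utility bound \eqref{eq:utility}; LDP follows directly from Theorem~\ref{thm:privacy} since \soteriafl's privacy guarantee depends only on $T$ and $\sigma_p^2$. The main obstacle will be obtaining the shift-error recursion with the correct $1-\Theta(1/(1+\omega))$ contraction rate while absorbing the Gaussian noise into the additive term, since any loss here directly degrades both the communication complexity and the utility.
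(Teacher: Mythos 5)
Your proposal follows essentially the same route as the paper's proof: the identical potential $\Phi_t=f(\vx^t)-f^*+\alpha L\Delta^t+\frac{\beta}{L}\cdot\frac1n\sum_i\ns{\nabla f_i(\vx^t)-\vs_i^t}$, an unbiasedness/variance bound for $\vv^t$ (the paper's Lemma~\ref{lem:vvt}), a shift-error contraction with rate $1-\frac{1}{2(1+\omega)}$ under the stated $\gamma_t$ (Lemma~\ref{lem:shift}), the $\Delta^t$ recursion from \eqref{ass:var-Delta}, the same choice of $\alpha$ to cancel the $C_1\Delta^t$ feedback, then telescoping, plugging in $\sigma_p^2$ from Theorem~\ref{thm:privacy}, and balancing the two terms to pick $T$. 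The only slip is the extra factor of $\eta$ on the additive $(C_2+d\sigma_p^2)$ term in your one-step inequality: the noise enters the potential through the shift-error recursion with coefficient $\Theta\big(\frac{\beta}{(1+\omega)L}\big)$ \emph{independent} of $\eta$, and it is precisely this $\eta$-free constant that, after telescoping and dividing by $\eta T$, produces the stated $\frac{3\beta}{(1+\omega)L\eta}\big(C_2+\frac{c(G_A^2/4+G_B^2)dT\log(1/\delta)}{m^2\epsilon^2}\big)$ term.
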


Theorem~\ref{thm:utility} is a unified theorem for our \soteriafl framework, which covers a large family of local stochastic gradient methods under the generic Assumption~\ref{ass:unified}.
In the next Section~\ref{sec:algorithms}, we will show that many popular local gradient estimators (\algname{GD}, \algname{SGD}, \algname{SVRG}, and \algname{SAGA}) satisfy  Assumption~\ref{ass:unified}, and thus can be captured by our unified analysis.

\section{Some Algorithms within \soteriafl Framework}
\label{sec:algorithms}

In this section, we propose several new private FL algorithms (\soteriaflGD, \soteriaflSGD, \soteriaflSVRG and \soteriaflSAGA) captured by our \soteriafl framework. 
We give a detailed Algorithm~\ref{alg:soteriafl-detail} which describes all these four \soteriafl-type algorithms in a nutshell.

\begin{algorithm}[h]
	\caption{\soteriaflSGD, \soteriaflSVRG, and \soteriaflSAGA}
	\label{alg:soteriafl-detail}
	\begin{algorithmic}[1]
	    \REQUIRE ~
		initial point $\vx^0$, stepsize $\eta_t$, shift stepsize $\gamma_t$, variance $\sigma_p^2$, minibatch size $b$, initial reference $\vs_i^0=0$, initial $\vw^0=\vx^0$ for SVRG or $\vw^0_{i,j}=\vx^0$ for SAGA, probability $p$
	
		\FOR{$t=0,1,2,\dots, T$}
		\STATE {\bf{for each node $i\in [n]$ do in parallel}}
		
		\STATE \quad~{\blue{Option \RomanNumeralCaps{1}: \sf SGD}} \label{line:option-sgd}
		\STATE  \quad~ \quad~ Compute local SGD estimator $\tilde{\vg}_i^t =\frac{1}{b} \sum_{j\in \gI_b} \nabla f_{i,j}(\vx^t)$ {\size{9}{\quad~//~\blue{\sf GD} if choose $b=m$}} \label{line:gradient-sgd}
		
		\STATE \quad~{\blue{Option \RomanNumeralCaps{2}: \sf SVRG}} \label{line:option-svrg}
		\STATE  \quad~ \quad~ Compute local SVRG estimator $\tilde{\vg}_i^t =\frac{1}{b} \sum_{j\in \gI_b} (\nabla f_{i,j}(\vx^t)- \nabla f_{i,j}(\vw^t)) +\nabla f_i(\vw^t)$   \label{line:gradient-svrg} 
		\STATE  \quad~ \quad~ Update SVRG snapshot point $\vw^{t+1} = \begin{cases}
		\vx^t, &\text{with probability } p\\
		\vw^t, &\text{with probability } 1-p
		\end{cases}$ \label{line:w_prob-svrg}
		
		\STATE \quad~{\blue{Option \RomanNumeralCaps{3}: \sf SAGA}} \label{line:option-saga}
		\STATE \quad~ \quad~ Compute local SAGA estimator:  \\
		    \qquad \qquad \qquad $\tilde{\vg}_i^t =\frac{1}{b} \sum_{j\in \gI_b} (\nabla f_{i,j}(\vx^t)- \nabla f_{i,j}(\vw_{i,j}^t)) + \frac{1}{m}\sum_{j=1}^m\nabla f_{i,j}(\vw_{i,j}^t)$ \label{line:gradient-saga}
		\STATE \quad~ \quad~ Update SAGA variables $\vw_{i,j}^{t+1} = \begin{cases}
		\vx^t, &\text{for } j\in \gI_b\\
		\vw_{i,j}^t, &\text{for } j\notin \gI_b
		\end{cases}$  \label{line:w_prob-saga}
		\STATE \quad~{\blue{End Options}} \label{line:option-end}
		
		\STATE  \quad~ \emph{Privacy}: $\vg_i^t = \tilde{\vg}_i^t  + \vxi_i^{t}$, where $\vxi_i^{t} \sim \gN(\bm{0},\sigma_p^2\mI)$ \label{line:perturb-svrg}
		\STATE  \quad~ \emph{Compression}: let $\vv_i^{t} = \gC_i^t(\vg_i^t - \vs_i^t)$ and send to the server \label{line:compression-svrg}
		\STATE  \quad~ Update shift $\vs_i^{t+1} = \vs_i^t  + \gamma_t\gC_i^t(\vg_i^t - \vs_i^t)$
		\STATE \textbf{end each node}
		\STATE Server aggregates compressed information $\vv^t = \vs^t + \frac{1}{n}\sum_{i=1}^n \vv_i^{t}$
		\STATE $\vx^{t+1} = \vx^t - \eta_t\vv^t$
		\STATE $\vs^{t+1} = \vs^t + \gamma_t\frac{1}{n}\sum_{i=1}^n \vv_i^{t}$
		\ENDFOR
	\end{algorithmic}
\end{algorithm}

To analyze Algorithm~\ref{alg:soteriafl-detail} using our unified \soteriafl framework,
we begin by showing that these local gradient estimators (\algname{GD}, \algname{SGD}, \algname{SVRG}, and \algname{SAGA}) satisfy Assumption \ref{ass:unified} in the following main lemma, detailing the corresponding parameter values (i.e., $G_A, G_B, C_1, C_2, C_3, C_4$, and $\theta$).

\begin{lemma}[SGD/SVRG/SAGA estimators satisfy Assumption~\ref{ass:unified}]
\label{lem:para-sgd-svrg-saga}
    Suppose that Assumptions~\ref{ass:smoothness} and \ref{ass:bounded-gradient} hold. 
    The local SGD estimator $\tilde{\vg}_i^t$ (Option \RomanNumeralCaps{1} in Algorithm~\ref{alg:soteriafl-detail}) satisfies Assumption~\ref{ass:unified} with 
    \begin{align*}
        G_A = G,~
        G_B = C_1 = C_3 = C_4 = 0,~ 
        C_2 = \frac{(m-b)G^2}{mb},~ 
        \theta = 1,~ 
        \Delta^t \equiv 0.
    \end{align*}
    The local SVRG estimator $\tilde{\vg}_i^t$ (Option \RomanNumeralCaps{2} in Algorithm~\ref{alg:soteriafl-detail}) satisfies Assumption~\ref{ass:unified} with 
    \begin{align*}
        G_A = 2G,~ 
        G_B = G,~
        C_1 = \frac{L^2}{b},~
        C_2 = 0,~ 
        C_3 = \frac{2(1-p)\eta^2}{p},~ 
        C_4 = 1,~
        \theta = \frac{p}{2},~ 
        \Delta^t = \ns{\vx^t-\vw^t}.
    \end{align*}
    The local SAGA estimator $\tilde{\vg}_i^t$ (Option \RomanNumeralCaps{3} in Algorithm~\ref{alg:soteriafl-detail}) satisfies Assumption~\ref{ass:unified} with 
    \begin{equation*}
        \begin{split}
            &G_A = 2G,~ 
            G_B = G,~
            C_1 = \frac{L^2}{b},~
            C_2 = 0,~ 
            C_3 = \frac{2(m-b)\eta^2}{b},~ 
            C_4 = 1,~ \\
            &\theta = \frac{b}{2m},~ 
            \Delta^t = \frac{1}{nm}\sum_{i=1}^n\sum_{j=1}^m\ns{\vx^t-\vw_{i,j}^t}.
        \end{split}
    \end{equation*}
\end{lemma}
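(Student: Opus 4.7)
The plan is to verify, one estimator at a time, each of the conditions in Assumption~\ref{ass:unified}: unbiasedness, the decomposition $\tilde{\vg}_i^t=\gA_i^t+\gB_i^t$ into a minibatch part and a full-sum part with the claimed pointwise bounds $G_A,G_B$, the variance inequality \eqref{ass:var-g} with constants $C_1,C_2$, and the shrinkage recursion \eqref{ass:var-Delta} with constants $\theta,C_3,C_4$ and the stated random sequence $\{\Delta^t\}$.

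First I would dispatch \algname{SGD}, which is essentially immediate. Take $\gA_i^t=\tilde{\vg}_i^t$ with $\varphi_{i,j}^t=\nabla f_{i,j}(\vx^t)$ and $\gB_i^t\equiv \bm{0}$; then $G_A=G$ and $G_B=0$ follow from Assumption~\ref{ass:bounded-gradient}, and the standard minibatch variance calculation (uniform sampling without replacement of size $b$ from $m$ summands of norm $\le G$) gives \eqref{ass:var-g} with $C_1=0$ and $C_2=(m-b)G^2/(mb)$. Since there is no auxiliary state, $\Delta^t\equiv 0$ trivially fulfils \eqref{ass:var-Delta} with $\theta=1$, $C_3=C_4=0$. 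For \algname{SVRG} and \algname{SAGA} I would use the decompositions $\gA_i^t=\frac{1}{b}\sum_{j\in\gI_b}[\nabla f_{i,j}(\vx^t)-\nabla f_{i,j}(\vw^t)]$ and $\gB_i^t=\frac{1}{m}\sum_j\nabla f_{i,j}(\vw^t)$ (respectively with $\vw_{i,j}^t$ in place of $\vw^t$ for SAGA); the triangle inequality together with Assumption~\ref{ass:bounded-gradient} gives $G_A=2G$, $G_B=G$, and unbiasedness is the usual control-variate identity. The variance bound \eqref{ass:var-g} follows by writing $\tilde{\vg}_i^t-\nabla f_i(\vx^t)$ as a centered minibatch average of $Y_j:=\nabla f_{i,j}(\vx^t)-\nabla f_{i,j}(\vw^t)$ (resp.\ $\vw_{i,j}^t$), bounding $\E\|Y_j\|^2\le L^2\|\vx^t-\vw^t\|^2$ by $L$-smoothness, and then averaging over $i$; this yields $C_1=L^2/b$, $C_2=0$ with $\Delta^t$ as stated.

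The main obstacle is the recursion \eqref{ass:var-Delta} for SVRG/SAGA. Conditioning on the Bernoulli coin in Line~\ref{line:w_prob-svrg} (resp.\ the refresh event $j\in\gI_b$ in Line~\ref{line:w_prob-saga}, which has marginal probability $b/m$), I obtain, for SVRG,
\[
\E_t\Delta^{t+1}=p\,\E_t\|\vx^{t+1}-\vx^t\|^2+(1-p)\,\E_t\|\vx^{t+1}-\vw^t\|^2,
\]
and an analogous per-sample identity for SAGA. Expanding $\vx^{t+1}-\vw^t=(\vx^t-\vw^t)-\eta\vv^t$ produces a cross term $-2\eta\,\E_t\langle\vv^t,\vx^t-\vw^t\rangle$; since the compressor and the Gaussian perturbation are both mean-zero and the shifts satisfy $\vs^t=\frac{1}{n}\sum_i\vs_i^t$ by induction from $\vs_i^0=0$, a short calculation in Algorithm~\ref{alg:soteriafl} gives $\E_t[\vv^t]=\frac{1}{n}\sum_i\nabla f_i(\vx^t)=\nabla f(\vx^t)$, so this reduces to $-2\eta\langle\nabla f(\vx^t),\vx^t-\vw^t\rangle$. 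I then apply Young's inequality $2\langle a,b\rangle\le \mu\|a\|^2+\mu^{-1}\|b\|^2$ with the specific choice $\mu=2(1-p)/p$ (resp.\ $2(m-b)/b$) so that the $\Delta^t$ coefficient shrinks exactly to $1-p/2$ (resp.\ $1-b/(2m)$); the $\|\vx^{t+1}-\vx^t\|^2$ contributions from the two branches sum to $p+(1-p)=1$, giving $C_4=1$, while the gradient term yields $C_3=2(1-p)\eta^2/p$ (resp.\ $2(m-b)\eta^2/b$), after absorbing a harmless $(1-p)\le 1$ factor. The delicate point is that the Young's parameter must be tuned so that both $\theta$ and $C_4$ take their claimed values simultaneously, and that the cross term can be evaluated in closed form, which critically relies on the identity $\E_t[\vv^t]=\nabla f(\vx^t)$ coming from the shift-bookkeeping of \soteriafl.
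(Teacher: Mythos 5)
Your proposal follows essentially the same route as the paper's proof: the same decompositions $\gA_i^t,\gB_i^t$ with $G_A,G_B$ from the bounded-gradient assumption, the same minibatch-variance computation giving $C_1,C_2$, and the same recursion argument for SVRG/SAGA that conditions on the refresh event, evaluates the cross term via $\E_t[\vv^t]=\nabla f(\vx^t)$, and tunes Young's inequality so that $\theta=p/2$ (resp.\ $b/(2m)$), $C_4=1$, and $C_3=2(1-p)\eta^2/p$ (resp.\ $2(m-b)\eta^2/b$). The only immaterial difference is that the paper models the minibatch as independent Bernoulli inclusion with probability $b/m$ (consistent with its privacy lemma), whereas your without-replacement remark changes the SGD variance constant only by a harmless $m/(m-1)$ factor.
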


With Lemma~\ref{lem:para-sgd-svrg-saga} in hand, we can plug their corresponding parameters into the unified Theorem~\ref{thm:utility} to obtain detailed utility and communication bounds for the resulting methods (\soteriaflSGD/\soteriaflGD, \soteriaflSVRG, and \soteriaflSAGA). Formally, we have the following three corollaries.

\begin{corollary}[\soteriaflSGD/\soteriaflGD]\label{cor:sgd}
    Suppose that Assumptions~\ref{ass:smoothness} and \ref{ass:bounded-gradient} hold and we combine Theorem~\ref{thm:utility} and Lemma~\ref{lem:para-sgd-svrg-saga}, i.e., choosing stepsize
    $
    \eta_t \equiv \eta \leq  \frac{1}{(1+2\sqrt{(1+\omega)^3/n})L},
    $
    where we set $\beta=\frac{\tau}{2(1+\omega)}$ and  $\tau:=\frac{(1+\omega)^{3/2}}{n^{1/2}}$, 
    shift stepsize $\gamma_t\equiv \sqrt{\frac{1+2\omega}{2(1+\omega)^3}}$, and privacy variance $\sigma_p^2 = O\big(\frac{G^2T\log({1}/{\delta})}{m^2\epsilon^2}\big)$.
    If we further set the minibatch size $b= \min\Big\{ \frac{m \epsilon G \sqrt{\beta}}{\sqrt{(1+\omega)Ld\log(1/\delta)}}, m\Big\}$ and the total number of communication rounds 
    $
        T = O\Big( \frac{\sqrt{nL} m\epsilon}{G\sqrt{(1+\omega)d \log(1/\delta)}}(1+\sqrt{\tau})\Big),
    $
    then \soteriaflSGD satisfies $(\epsilon,\delta)$-LDP and the following utility guarantee
    $
        \frac{1}{T}\sum_{t=0}^{T-1} \E\|\nabla f(\vx_t)\|^2 \le O\Big(\frac{G\sqrt{(1+\omega)Ld\log(1/\delta)}}{\sqrt{n}m\epsilon} (1+\sqrt{\tau})\Big).
    $
    If we choose a minibatch size $b=m$ (local full gradient) in \soteriaflSGD, the result of \soteriaflSGD leads to that of \soteriaflGD.            
\end{corollary}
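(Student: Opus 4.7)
\textbf{Proof plan for Corollary~\ref{cor:sgd}.}
The strategy is direct specialization: substitute the SGD parameter values from Lemma~\ref{lem:para-sgd-svrg-saga} into the unified Theorem~\ref{thm:utility}, tune $\beta$ so the two stepsize bounds coincide, and then balance the two terms inside the \emph{max} in \eqref{eq:utility} by choosing the minibatch size $b$. The SGD case is particularly clean because $C_1=C_3=C_4=0$, $\Delta^t\equiv 0$, and $G_B=0$, which eliminates essentially every coupling that the unified theorem must handle in full generality.

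First I would plug in the SGD parameters. Since $C_1=0$ we get $\alpha=3\beta C_1/(2(1+\omega)\theta L^2)=0$, and since $C_3=C_4=0$ the stepsize ceiling in Theorem~\ref{thm:utility} reduces to
\[
    \eta \;\le\; \min\Bigl\{\tfrac{1}{(1+4\beta(1+\omega))L},\;\tfrac{\sqrt{\beta n}}{\sqrt{1+4\beta(1+\omega)}\,(1+\omega)L}\Bigr\}.
\]
Setting $\beta=\tau/(2(1+\omega))$ with $\tau=(1+\omega)^{3/2}/n^{1/2}$ makes $4\beta(1+\omega)=2\tau$, and a short calculation shows the second bound then becomes at least $1/((1+2\tau)L)$ up to a constant, so the admissible range collapses to $\eta\le 1/((1+2\sqrt{(1+\omega)^3/n})L)$, matching the statement. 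Since $\Delta^0=0$, $\vs_i^0=0$, and Assumption~\ref{ass:bounded-gradient} bounds $\|\nabla f_i(\vx^0)\|\le G$, the initial potential simplifies to $\Phi_0\le f(\vx^0)-f^\star+\beta G^2/L=O(1)$. The privacy requirement reduces to $\sigma_p^2=O(G^2T\log(1/\delta)/(m^2\epsilon^2))$ since $G_A^2/4+G_B^2=G^2/4$, which is just Theorem~\ref{thm:privacy} applied to the SGD estimator.

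Next I would feed these into the utility bound \eqref{eq:utility}, which reads
\[
    O\!\left(\max\Bigl\{\tfrac{G\sqrt{\beta d\log(1/\delta)}}{\eta m\epsilon\sqrt{(1+\omega)L}},\;\tfrac{\beta(m-b)G^2}{(1+\omega)Lmb\eta}\Bigr\}\right).
\]
The second term is driven by $C_2=(m-b)G^2/(mb)\le G^2/b$, so choosing
$b=\min\{m\epsilon G\sqrt{\beta}/\sqrt{(1+\omega)Ld\log(1/\delta)},\,m\}$ equalizes the two terms (or saturates at $b=m$, which is precisely the GD special case). Substituting $\sqrt{\beta}=(1+\omega)^{1/4}/(n^{1/4}\sqrt{2})$ and $\eta=\Theta(1/((1+\tau)L))$, routine algebra then collapses both terms to
\[
    O\!\left(\tfrac{G\sqrt{(1+\omega)Ld\log(1/\delta)}}{\sqrt{n}\,m\epsilon}(1+\sqrt{\tau})\right),
\]
giving the claimed utility. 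The communication-round count $T$ in \eqref{eq:choice-T} is handled analogously: the first term dominates and, after inserting $\beta$, produces $O(\sqrt{nL}m\epsilon/(G\sqrt{(1+\omega)d\log(1/\delta)})\cdot\sqrt{\tau})$, while the ``$1$'' part of $(1+\sqrt{\tau})$ emerges from the $\eta^{-1}$ contribution through the $(1+2\tau)$ stepsize constraint, yielding the advertised $T$.

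\textbf{Main obstacle.} The proof itself is bookkeeping rather than conceptual, but the delicate part is tracking the precise exponents of $(1+\omega)$, $n$, and $\beta$ so that (i) the chosen $\beta=\tau/(2(1+\omega))$ actually equalizes the two stepsize caps (otherwise the unified bound is loose by a factor of $\tau$ or $\sqrt{\tau}$), and (ii) the balance between the privacy-noise term and the stochastic-variance term in \eqref{eq:utility} produces exactly the $(1+\sqrt{\tau})$ inflation factor in both the utility bound and $T$, rather than a larger power. Everything else, including the reduction to \soteriaflGD via $b=m$, is an immediate corollary once these balancing choices are verified.
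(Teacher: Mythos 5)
Your proposal is correct and follows essentially the same route as the paper's proof of this corollary: specialize the SGD parameters from Lemma~\ref{lem:para-sgd-svrg-saga} (so $\alpha=0$ and the stepsize condition collapses), set $\beta=\tau/(2(1+\omega))$ so that $\eta\le 1/((1+2\tau)L)$ satisfies both caps, invoke Theorem~\ref{thm:privacy} with $G_A^2/4+G_B^2=G^2/4$, choose $b$ so the $C_2$-term in \eqref{eq:utility} is dominated by the privacy-noise term (with $b=m$ giving the GD case), and then plug $\beta,\eta,b$ into \eqref{eq:choice-T} and \eqref{eq:utility}. The only difference is cosmetic: you additionally note $\Phi_0\le f(\vx^0)-f^*+\beta G^2/L$ explicitly, while the paper simply absorbs $\Phi_0$ into the $O(\cdot)$, and both treatments leave the final $(1+\sqrt{\tau})$ bookkeeping as routine algebra.
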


\begin{corollary}[\soteriaflSVRG]\label{cor:svrg}
    Suppose that Assumptions~\ref{ass:smoothness} and \ref{ass:bounded-gradient} hold and we combine Theorem~\ref{thm:utility} and Lemma~\ref{lem:para-sgd-svrg-saga}, i.e., choosing stepsize
    $
        \eta_t \equiv \eta \leq  \frac{p^{2/3}b^{1/3}{\min\{1, \sqrt{n/(1+\omega)^3}\}}}{2L},
    $
    where we set $\beta = \frac{p^{4/3}b^{2/3}(1+\omega)^2\min\{1, n/(1+\omega)^3\}}{n}$, $p^{2/3}b^{1/3}\leq 1/4$ and $p\leq 1/4$,
    shift stepsize $\gamma_t\equiv \sqrt{\frac{1+2\omega}{2(1+\omega)^3}}$, and privacy variance $\sigma_p^2 = O\big(\frac{G^2T\log({1}/{\delta})}{m^2\epsilon^2}\big)$.
    If we further let the minibatch size $b=\frac{m^{2/3}}{4}$, the probability $p=b/m$, and the total number of communication rounds 
    $
        T 
         = O\Big( \frac{\sqrt{nL} m\epsilon}{G\sqrt{(1+\omega)d \log(1/\delta)}} \max\big\{1, \tau \big\}\Big),
    $
    where $\tau:=\frac{(1+\omega)^{3/2}}{n^{1/2}}$,
    then \soteriaflSVRG satisfies $(\epsilon,\delta)$-LDP
    and the following utility guarantee
    $
        \frac{1}{T}\sum_{t=0}^{T-1} \E\ns{\nabla f(\vx^t)}
        \le O\Big( \frac{G\sqrt{{(1+\omega)}Ld\log(1/\delta)}}{\sqrt{{n}}m\epsilon}\Big).
    $
\end{corollary}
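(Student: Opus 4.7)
The plan is to derive Corollary~\ref{cor:svrg} as a direct instantiation of the unified Theorem~\ref{thm:utility}. Lemma~\ref{lem:para-sgd-svrg-saga} already certifies that the local \algname{SVRG} estimator satisfies Assumption~\ref{ass:unified} with $G_A=2G$, $G_B=G$, $C_1=L^2/b$, $C_2=0$, $C_3=2(1-p)\eta^2/p$, $C_4=1$, and $\theta=p/2$, so $G_A^2/4+G_B^2=2G^2$. Plugging this into \eqref{eq:para-sigma-p} yields the claimed noise variance $\sigma_p^2=O(G^2T\log(1/\delta)/(m^2\epsilon^2))$, and the $(\epsilon,\delta)$-LDP guarantee is then immediate from Theorem~\ref{thm:privacy}. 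What remains is to justify the stepsize, utility, and communication bounds.

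First I compute the auxiliary constant $\alpha=\frac{3\beta C_1}{2(1+\omega)\theta L^2}=\frac{3\beta}{bp(1+\omega)}$ and insert the scheduled values $b=m^{2/3}/4$ and $p=b/m=1/(4m^{1/3})$. These give the clean identities $p^{2/3}b^{1/3}=1/4$, $p^{4/3}b^{2/3}=1/16$, $bp=m^{1/3}/16$, and $bp^2=1/64$, so that
\[
\beta=\frac{(1+\omega)^2\min\{1,n/(1+\omega)^3\}}{16n},\qquad
\sqrt{\beta}=\frac{(1+\omega)\min\{1,\sqrt{n/(1+\omega)^3}\}}{4\sqrt{n}}.
\]

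Next I verify admissibility of the prescribed $\eta$. The subtle term inside the stepsize bound of Theorem~\ref{thm:utility} is $2\alpha C_3/\eta^2$: since $C_3$ is itself proportional to $\eta^2$, the $\eta$ dependence cancels and this term collapses to $4\alpha(1-p)/p=12\beta(1-p)/(bp^2(1+\omega))\le 768\beta/(1+\omega)$, which is bounded by a constant in both regimes $n\ge(1+\omega)^3$ and $n<(1+\omega)^3$. The remaining correction terms $2\alpha C_4=\Theta(\beta/((1+\omega)m^{1/3}))$ and $4\beta(1+\omega)\le 1/4$ are likewise absolute constants (or smaller), so $1+2\alpha C_4+4\beta(1+\omega)+2\alpha C_3/\eta^2=\Theta(1)$. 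The first entry in the $\min\{\cdot,\cdot\}$ of Theorem~\ref{thm:utility} becomes $\Theta(1/L)$, while the second becomes $\Theta(\sqrt{\beta n}/((1+\omega)L))=\Theta(\min\{1,\sqrt{n/(1+\omega)^3}\}/L)$; the stated $\eta\le p^{2/3}b^{1/3}\min\{1,\sqrt{n/(1+\omega)^3}\}/(2L)$ fits comfortably inside both.

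Finally, I substitute into \eqref{eq:utility} and \eqref{eq:choice-T}. Because $C_2=0$, the max in each formula is attained by its first argument. The decisive cancellation is $\sqrt{\beta}/\eta=2L(1+\omega)/\sqrt{n}$, obtained by removing the common factor $p^{2/3}b^{1/3}\min\{1,\sqrt{n/(1+\omega)^3}\}$ between $\sqrt{\beta}$ and $\eta$; inserting this into \eqref{eq:utility} produces the advertised utility $O(G\sqrt{(1+\omega)Ld\log(1/\delta)}/(\sqrt{n}m\epsilon))$. For the communication bound, $1/\sqrt{\beta}=4\sqrt{n}\max\{1,\tau\}/(1+\omega)$ with $\tau=(1+\omega)^{3/2}/n^{1/2}$, so \eqref{eq:choice-T} simplifies to $T=O(\sqrt{nL}m\epsilon\max\{1,\tau\}/(G\sqrt{(1+\omega)d\log(1/\delta)}))$. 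The main obstacle will be the stepsize verification, specifically correctly handling the $\eta^2$ cancellation inside $2\alpha C_3/\eta^2$ and confirming the bound holds uniformly in the two regimes of $\min\{1,n/(1+\omega)^3\}$; the remaining manipulations are routine algebra.
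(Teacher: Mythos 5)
Your proposal is correct and follows essentially the same route as the paper's own proof: instantiate Theorem~\ref{thm:utility} with the SVRG parameters from Lemma~\ref{lem:para-sgd-svrg-saga} (so $G_A^2/4+G_B^2=2G^2$, $\alpha=\frac{3\beta}{(1+\omega)pb}$, and the $\eta^2$-cancellation in $2\alpha C_3/\eta^2$), verify the stepsize condition for the chosen $\beta$, and read off $T$ and the utility from \eqref{eq:choice-T} and \eqref{eq:utility}, with privacy coming from Theorem~\ref{thm:privacy}. The only caveat---shared with the paper's own argument---is that the assertion that $\eta\le \frac{p^{2/3}b^{1/3}\min\{1,1/\tau\}}{2L}$ ``fits comfortably'' inside the stepsize condition is loose by an absolute constant in corner regimes (e.g., $\omega=0$, $n=1$ gives $2\alpha C_3/\eta^2\approx 48$, so the first entry of the min is about $\frac{1}{50L}$ versus the prescribed $\frac{1}{8L}$), which is harmless since all final guarantees are stated in $O(\cdot)$ notation.
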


\begin{corollary}[\soteriaflSAGA]\label{cor:saga}
    Suppose that Assumptions~\ref{ass:smoothness} and \ref{ass:bounded-gradient} hold and we combine Theorem~\ref{thm:utility} and Lemma~\ref{lem:para-sgd-svrg-saga}, i.e., choosing stepsize
    $
        \eta_t \equiv \eta \leq  \frac{\min\{1, \sqrt{n/(1+\omega)^3}\}}{3L},
    $
    where we set $\beta = \frac{(1+\omega)^2\min\{1, n/(1+\omega)^3\}}{3n}$, minibatch size $b=3m^{2/3}$,
    shift stepsize $\gamma_t\equiv \sqrt{\frac{1+2\omega}{2(1+\omega)^3}}$, and privacy variance $\sigma_p^2 = O\big(\frac{G^2T\log({1}/{\delta})}{m^2\epsilon^2}\big)$.
    If we further let the communication rounds 
    $
        T 
         = O\Big( \frac{\sqrt{nL} m\epsilon}{G\sqrt{(1+\omega)d \log(1/\delta)}} \max\big\{1, \tau \big\}\Big),
    $
    where $\tau:=\frac{(1+\omega)^{3/2}}{n^{1/2}}$,
    then \soteriaflSAGA satisfies $(\epsilon,\delta)$-LDP
    and the following utility guarantee
    $
        \frac{1}{T}\sum_{t=0}^{T-1} \E\ns{\nabla f(\vx^t)}
        \le O\Big( \frac{G\sqrt{{(1+\omega)}Ld\log(1/\delta)}}{\sqrt{{n}}m\epsilon}\Big).
    $
\end{corollary}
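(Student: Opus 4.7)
}
The plan is to verify Corollary~\ref{cor:saga} by a direct specialization of Theorem~\ref{thm:utility} using the SAGA parameters supplied by Lemma~\ref{lem:para-sgd-svrg-saga}. Concretely, the SAGA estimator gives $G_A = 2G$, $G_B = G$, $C_1 = L^2/b$, $C_2 = 0$, $C_3 = 2(m-b)\eta^2/b$, $C_4 = 1$, $\theta = b/(2m)$, and $\Delta^t = \frac{1}{nm}\sum_{i,j}\ns{\vx^t - \vw_{i,j}^t}$. The LDP guarantee follows immediately from Theorem~\ref{thm:privacy} once we plug in $G_A^2/4 + G_B^2 = 2G^2$, which yields exactly the stated $\sigma_p^2 = O(G^2 T\log(1/\delta)/(m^2\epsilon^2))$.

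The key step is to verify that the stepsize condition in Theorem~\ref{thm:utility} is satisfied by $\eta \leq \min\{1,\sqrt{n/(1+\omega)^3}\}/(3L)$ under our choices $b = 3m^{2/3}$ and $\beta = (1+\omega)^2\min\{1, n/(1+\omega)^3\}/(3n)$. With the SAGA parameters, $\alpha = \frac{3\beta C_1}{2(1+\omega)\theta L^2} = \frac{3\beta m}{(1+\omega)b^2}$, and one has to check each additive term inside $1 + 2\alpha C_4 + 4\beta(1+\omega) + 2\alpha C_3/\eta^2$ is $O(1)$. Specifically, $4\beta(1+\omega) = \tfrac{4}{3}(1+\omega)^3 \min\{1, n/(1+\omega)^3\}/n = O(1)$; with $b = 3m^{2/3}$ one gets $m/b^2 = O(m^{-1/3})$, so $2\alpha C_4 = O(\beta/(1+\omega)) \cdot m^{-1/3} = O(1)$; and $2\alpha C_3/\eta^2 = \frac{12\beta m(m-b)}{(1+\omega)b^3} = O(\beta/(1+\omega)) = O(1)$. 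Therefore the first bound in the $\min$ is $\Theta(1/L)$, and the second bound becomes $\sqrt{\beta n}/((1+\omega)L) = \Theta(\sqrt{\min\{1,n/(1+\omega)^3\}}/L)$, matching the stated condition on $\eta$.

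Given this stepsize, I substitute back into the utility bound of Theorem~\ref{thm:utility}. Since $C_2 = 0$, the max reduces to the first term, giving $O\bigl(\tfrac{\sqrt{2\beta G^2 d\log(1/\delta)}}{\eta m\epsilon\sqrt{(1+\omega)L}}\bigr)$. A short computation shows $\sqrt{\beta}/\eta = \Theta(L(1+\omega)/\sqrt{n})$ in both regimes of $\min\{1,\cdot\}$, which collapses the bound to $O\bigl(\tfrac{G\sqrt{(1+\omega)Ld\log(1/\delta)}}{\sqrt{n}\, m\epsilon}\bigr)$. Likewise, since $C_2 = 0$, only the first term inside the max defining $T$ in \eqref{eq:choice-T} is active, and the same algebra on $\sqrt{\beta}$ yields $T = O\bigl(\tfrac{\sqrt{nL}m\epsilon}{G\sqrt{(1+\omega)d\log(1/\delta)}} \cdot \max\{1, \tau\}\bigr)$ with $\tau = (1+\omega)^{3/2}/n^{1/2}$, which is exactly the claimed communication rounds.

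The main obstacle is not any single inequality but the bookkeeping around the case split on $\min\{1, n/(1+\omega)^3\}$, because $\beta$ and $\eta$ both depend on it and one must check that their ratio $\sqrt{\beta}/\eta$ is regime-independent (namely $\Theta((1+\omega)L/\sqrt{n})$) so that the utility bound is clean, while the factor $\max\{1,\tau\}$ in $T$ only kicks in in the other regime. The choice $b = 3m^{2/3}$ is what keeps both $2\alpha C_4$ and $2\alpha C_3/\eta^2$ from blowing up (they scale as $m/b^2$ and $m^2/b^3$ respectively), so this balance is the critical design point that needs to be checked carefully.
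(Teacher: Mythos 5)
Your proposal is correct and follows essentially the same route as the paper: plug the SAGA parameters from Lemma~\ref{lem:para-sgd-svrg-saga} into Theorems~\ref{thm:privacy} and \ref{thm:utility}, verify that with $b=3m^{2/3}$ and the given $\beta$ the terms $2\alpha C_4$, $4\beta(1+\omega)$, $2\alpha C_3/\eta^2$ stay $O(1)$ so $\eta\le\min\{1,1/\tau\}/(3L)$ satisfies the stepsize condition, then use $C_2=0$ and the cancellation $\sqrt{\beta}/\eta=\Theta((1+\omega)L/\sqrt{n})$ to read off the stated utility and $T=O\bigl(\tfrac{\sqrt{nL}m\epsilon}{G\sqrt{(1+\omega)d\log(1/\delta)}}\max\{1,\tau\}\bigr)$. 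This matches the paper's own argument, up to your slightly more asymptotic (order-of-magnitude) treatment of the constants that the paper tracks explicitly.
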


Interestingly, \soteriafl-style algorithms are more communication-efficient than \cdpSGD when the local dataset size $m$ is large, with a communication complexity of $O(m)$, in contrast to $O(m^2)$ for \cdpSGD. In terms of utility, \soteriaflSVRG and \soteriaflSAGA can achieve the same utility as \cdpSGD, while \soteriaflGD and \soteriaflSGD achieve a slightly worse guarantee than that of \cdpSGD by a factor of $1+\sqrt{\tau}$, where $\tau:=\frac{(1+\omega)^{3/2}}{n^{1/2}}$ is small when the number of clients $n$ is large.
 
\begin{table}[t]
	\caption{Gradient complexity for our proposed \soteriafl-style algorithms, which is computed as the product of the total number of communication rounds $T$ and the minibatch size $b$. Here, for notation simplicity, $K := \frac{\sqrt{nL} m\epsilon}{G\sqrt{(1+\omega)d \log(1/\delta)}}$ and $\tau:=\frac{(1+\omega)^{3/2}}{n^{1/2}}$.}
	\label{tab:gradient-complexity}
	\centering
	\small
	\begin{threeparttable}
	    \renewcommand{\arraystretch}{3}
		\begin{tabular}{|c|c|c|c|}
			\hline
			\bf Algorithms 
			& \makecell{\soteriaflGD \\ 
			(\blue{Option \RomanNumeralCaps{1}} in Algorithm \ref{alg:soteriafl-detail} \\ with $b=m$)} 
			& \makecell{\soteriaflSGD \\ (\blue{Option \RomanNumeralCaps{1}} in Algorithm \ref{alg:soteriafl-detail})} 
			& \makecell{\soteriaflSVRG \\ \soteriaflSAGA \\ (\blue{Option \RomanNumeralCaps{2}, \RomanNumeralCaps{3}} in Algorithm \ref{alg:soteriafl-detail})} \\
			\hline
			
			\makecell{\bf Gradient \\ \bf Complexity}  
			& $K(1+\sqrt{\tau})m$ 
			& $K(1+\sqrt{\tau})b$
			& $K(1+\tau)m^{2/3}$\\
			\hline
		\end{tabular}
	\end{threeparttable}
\end{table}

\paragraph{Gradient complexity of \soteriafl-style algorithms.} Although the utility and the communication complexity are the most important considerations in private FL, another worth-noting criterion is the \emph{gradient complexity}, which is defined as the total number of stochastic gradients computed by each client. 
Although \soteriaflGD, \soteriaflSGD, \soteriaflSVRG and \soteriaflSAGA have similar communication complexity (see  Table~\ref{tab:results}), they actually have very different gradient complexities---summarized in Table \ref{tab:gradient-complexity}---since the minibatch sizes and gradient update rules for these algorithms vary a lot. 
The gradient complexity of \soteriaflSVRG/\soteriaflSAGA is usually smaller than \soteriaflSGD, and all of them are smaller than \soteriaflGD. 
In sum, we recommend \soteriaflSVRG/\soteriaflSAGA due to its superior utility and gradient complexity while maintaining almost the same communication complexity as \soteriaflSGD/\soteriaflGD.

\section{Numerical Experiments}
\label{sec:exp}

In this section, we conduct experiments on standard real-world datasets to numerically verify privacy-utility-communication trade-offs among different algorithms.  The code can be accessed at:
\url{https://github.com/haoyuzhao123/soteriafl}.
Concretely, we compare the direct compression algorithm \cdpSGD (Algorithm~\ref{alg:cdp-sgd}), shifted compression algorithms \soteriaflSGD (Algorithm~\ref{alg:soteriafl-detail} with Option \RomanNumeralCaps{1}) and \soteriaflSVRG (Algorithm~\ref{alg:soteriafl-detail} with Option \RomanNumeralCaps{2}), and algorithms without compression {\sf LDP-SGD}~\citep{abadi2016deep, lowy2022private} and {\sf LDP-SVRG}~\citep{lowy2022private} on two nonconvex problems (logistic regression with nonconvex regularization in Section~\ref{sec:logistic} and shallow neural network training in Section~\ref{sec:nn}).

\paragraph{Experiment setup.} In our experiments, we use random-$k$ sparsification (see Example 1 in Section~\ref{sec:pre}) as the compression operator, and we set $k=\lfloor \frac{d}{20}\rfloor$, i.e., randomly select 5\% coordinates over $d$ dimension to communicate. In other words, the number of communication bits \emph{per round} of uncompressed algorithms equals to that of \emph{20 rounds} of compressed algorithms. The number of nodes $n$ is 10.  For the algorithmic parameters, we tune the stepsizes (learning rates) for all algorithms for each nonconvex problem and select their best ones from the set $\{0.01, 0.03, 0.06, 0.1, 0.3, 0.6, 1\}$.  Other parameters are set according to their theoretical values. We would like point out that, in order to achieve privacy guarantee, bounded gradient (Assumption~\ref{ass:bounded-gradient}) is required. However, it is not easy to obtain this upper bound $G$ or it is somewhat large especially for neural networks. Thus, following experiments in previous works \cite{wang2019efficient, zhang2020private, ding2021differentially,lowy2022private}, we also apply gradient clipping (i.e. $\text{clip}_G(\vg)= \min (1, \frac{G}{\| \vg \|} ) \cdot \vg$) in our experiments.
In particular, we choose $G=0.5$ for logistic regression with nonconvex regularization in Section~\ref{sec:logistic} and $G=1$ for shallow neural network training in Section~\ref{sec:nn}.
For the Gaussian perturbation $\vxi$, we will run experiments for different levels of $(\epsilon, \delta)$-LDP guarantee, and compute the variance of $\vxi$ according to the theory.

\subsection{{Logistic regression with nonconvex regularization}}
\label{sec:logistic}
The first task is the logistic regression with a nonconvex regularizer, where the objective function over a data sample $(\va,b) \in D$ is defined as
    \[f(\vx; (\va,b)) :=   \log\left(1+\exp(-b \va^\top\vx )\right) + \lambda\sum \limits_{j=1}^d\frac{x_j^2}{1+x_j^2}.\]
Here, $\va\in\R^d$ denotes the features, $b$ is its label, and $\lambda$ is the regularization parameter.  
We choose $\lambda=0.2$ and run the experiments on the standard \dataset{a9a} dataset~\cite{chang2011libsvm}.  
To demonstrate the privacy-utility-communication trade-offs, we consider three levels of $(\epsilon, \delta)$-LDP with different $\epsilon = 1, 5, 10$ and a common $\delta=10^{-3}$, where the experimental results are reported in Figures~\ref{fig:a9a_eps_1}--\ref{fig:a9a_eps_10} respectively.

\begin{figure}[!htb]
\centering
	\begin{tabular}{cc}
		\includegraphics[width=0.34\textwidth]{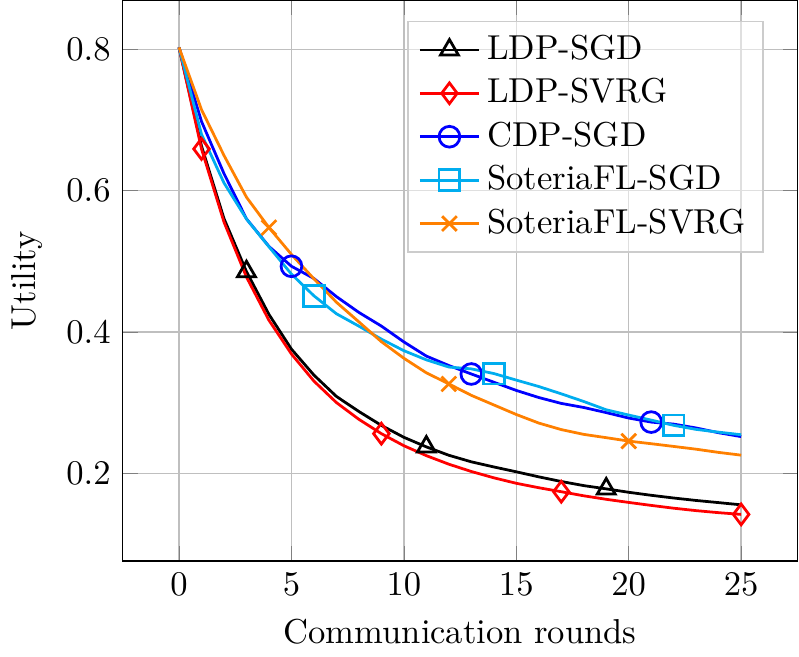} \hspace{5mm}
&
		\includegraphics[width=0.34\textwidth]{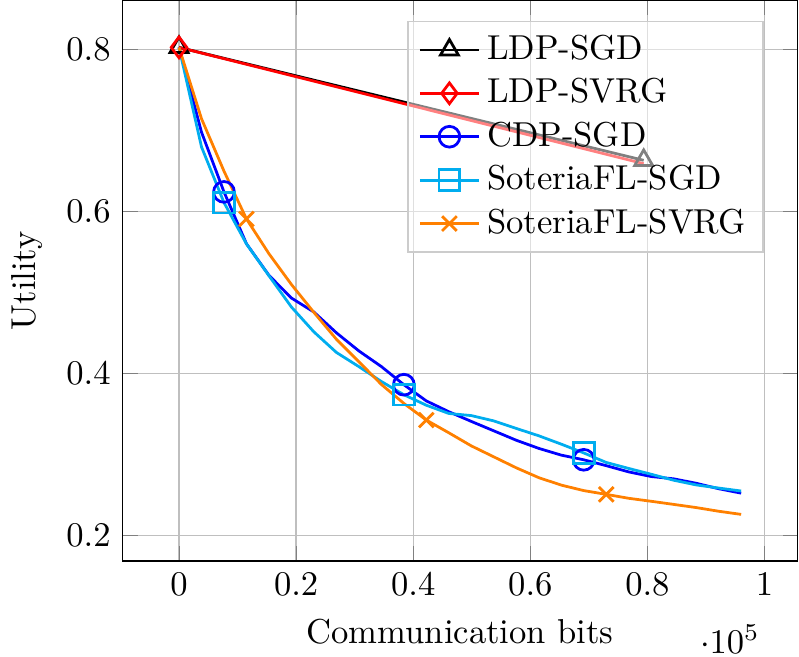} \\
		\includegraphics[width=0.34\textwidth]{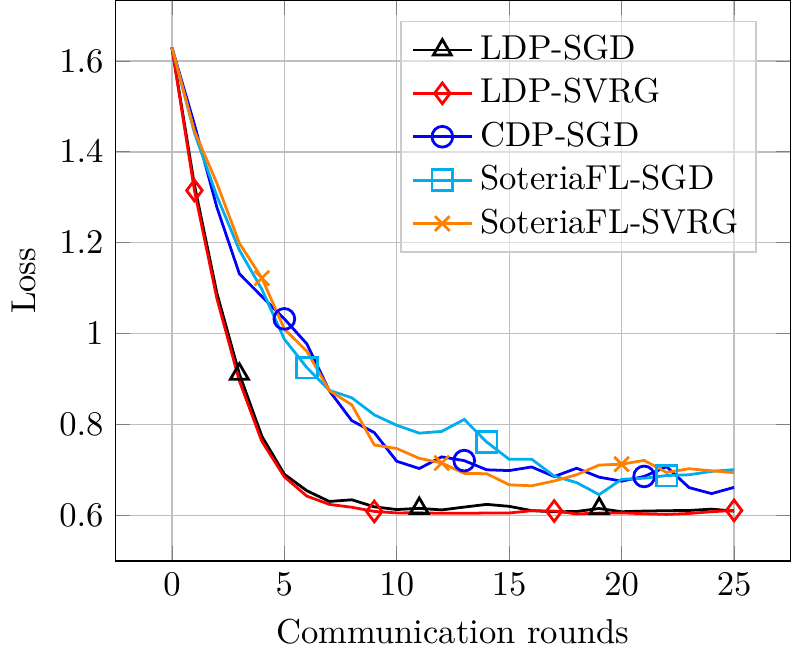} \hspace{5mm}
&
		\includegraphics[width=0.34\textwidth]{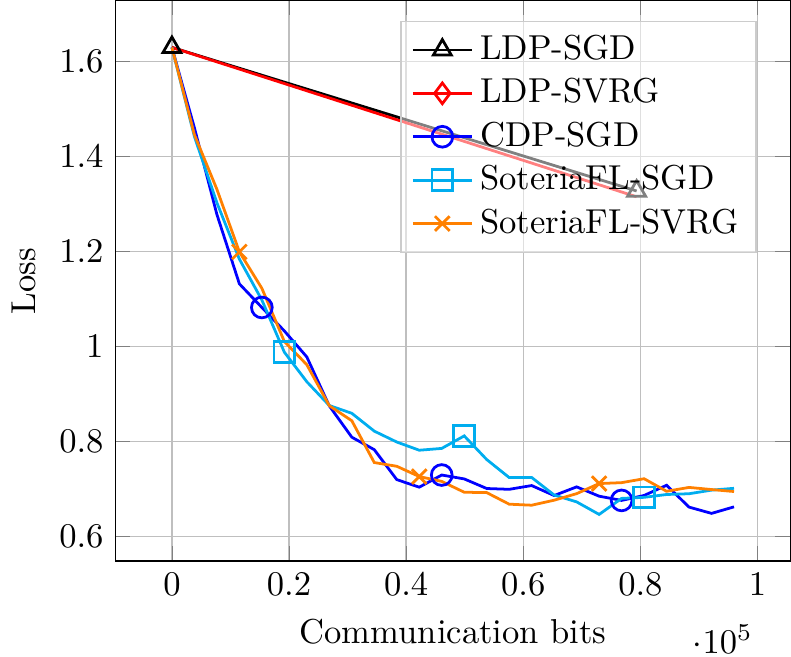}
\end{tabular}
	\caption{Logistic regression with nonconvex regularization on the \dataset{a9a} dataset under ($\epsilon,\delta$)-LDP with \blue{$\epsilon=1$} and $\delta=10^{-3}$. The top (resp. bottom) row is for utility (resp. training loss) vs. communication rounds and communication bits.}
	\label{fig:a9a_eps_1}
%
	\vspace{5mm}
	\centering
	\begin{tabular}{cc}
		\includegraphics[width=0.34\textwidth]{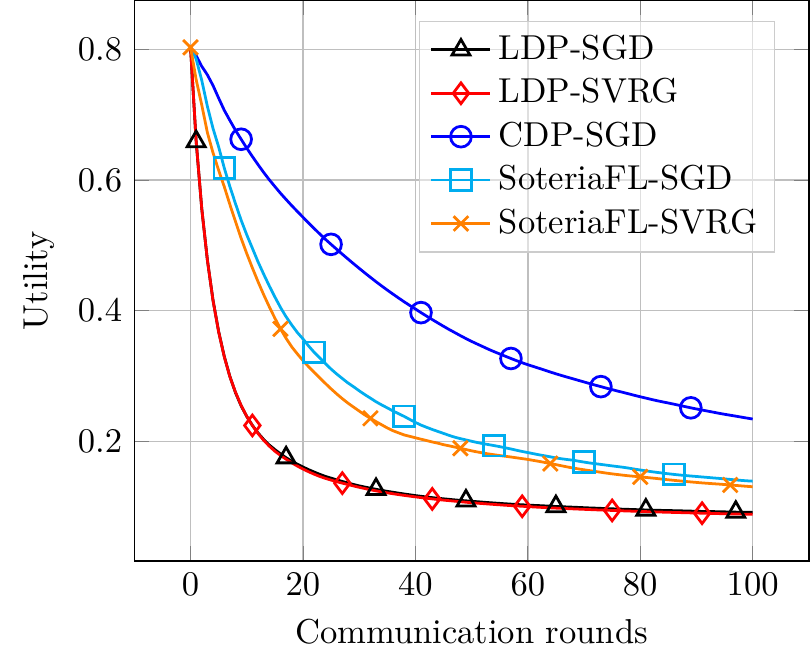} \hspace{5mm}
&
		\includegraphics[width=0.34\textwidth]{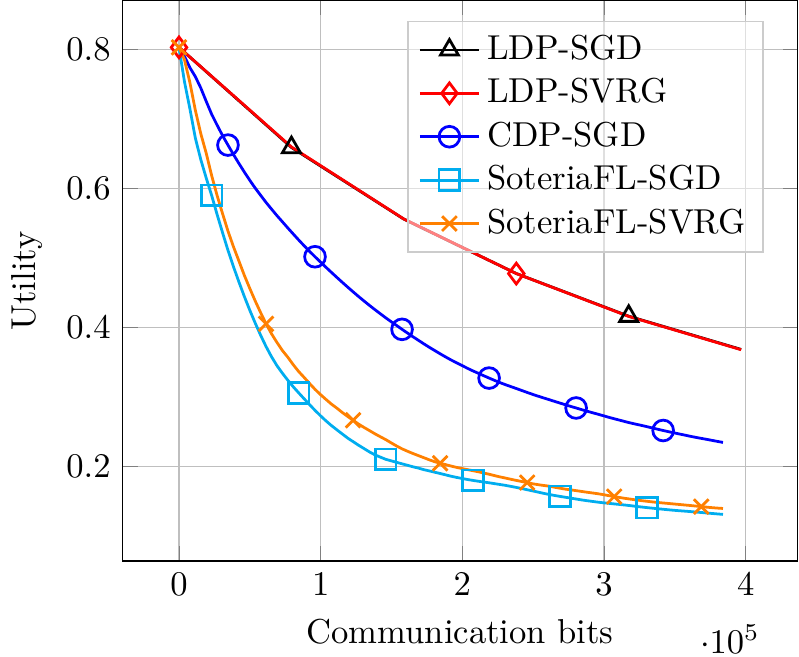} \\
		\includegraphics[width=0.34\textwidth]{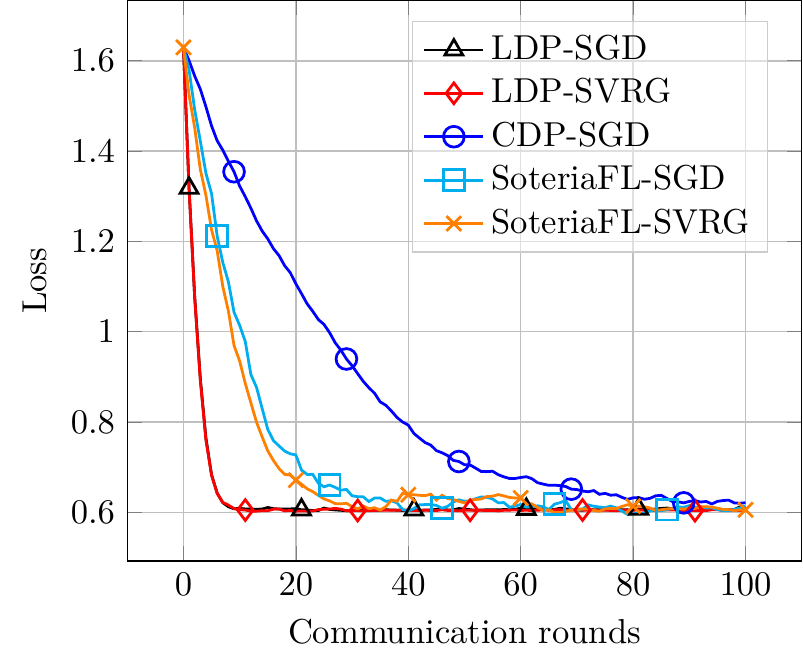} \hspace{5mm}
&
		\includegraphics[width=0.34\textwidth]{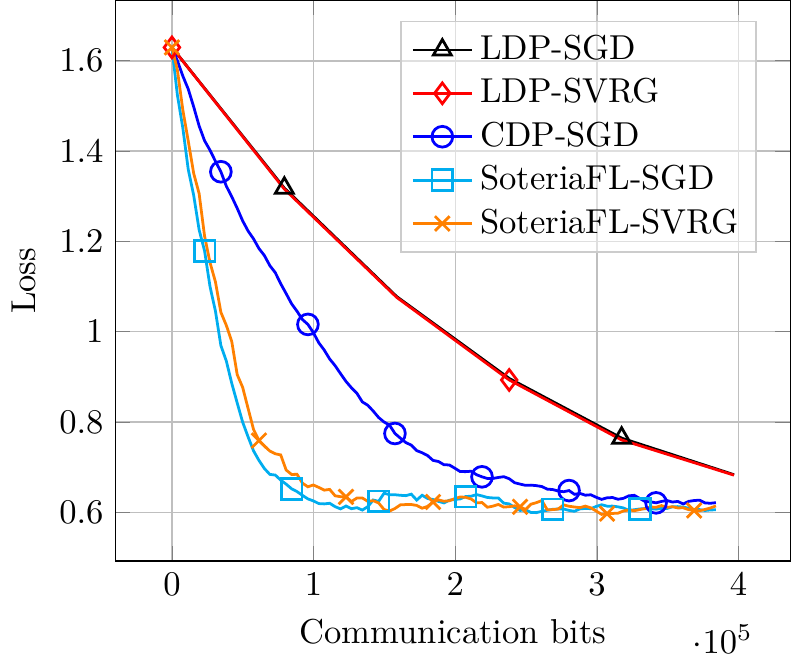}
\end{tabular}
	\caption{Logistic regression with nonconvex regularization on the \dataset{a9a} dataset under ($\epsilon,\delta$)-LDP with \blue{$\epsilon=5$} and $\delta=10^{-3}$. The top (resp. bottom) row is for utility (resp. training loss) vs. communication rounds and  communication bits.}
	\label{fig:a9a_eps_5}
\end{figure}

\clearpage
 \begin{figure}[!htb]
	\centering
	\begin{tabular}{cc}
		\includegraphics[width=0.34\textwidth]{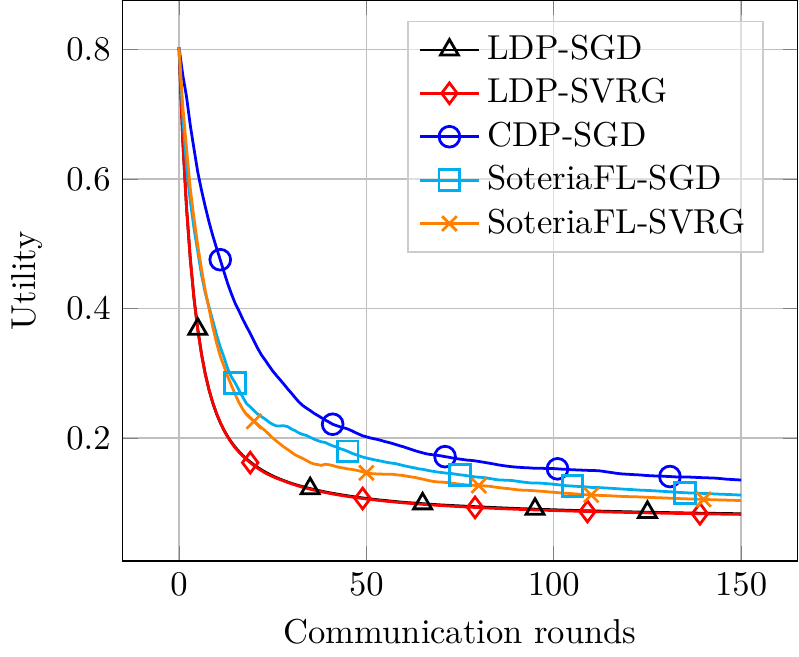} \hspace{5mm}
		&
		\includegraphics[width=0.34\textwidth]{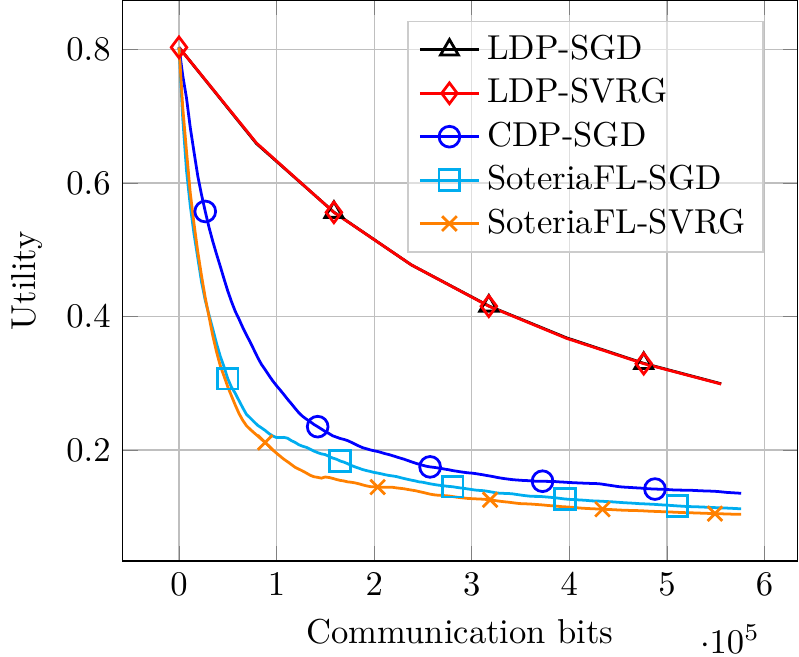} \\
		\includegraphics[width=0.34\textwidth]{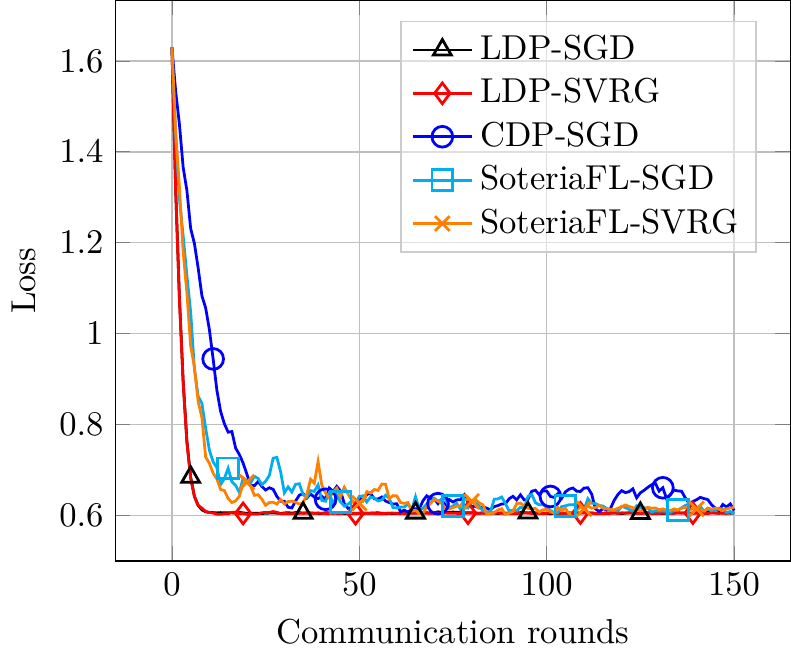} \hspace{5mm}
		&
		\includegraphics[width=0.34\textwidth]{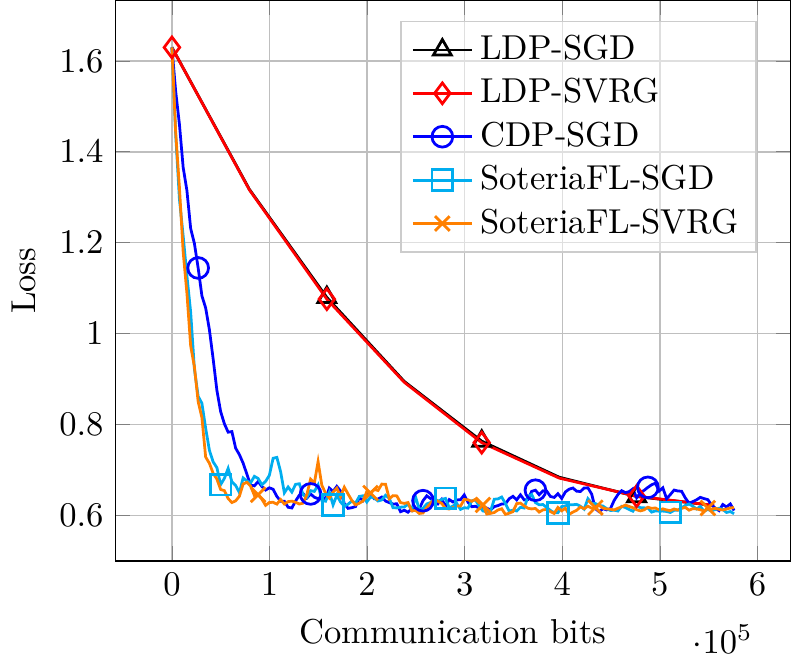}
\end{tabular}
	\caption{Logistic regression with nonconvex regularization on the \dataset{a9a} dataset under ($\epsilon,\delta$)-LDP with \blue{$\epsilon=10$} and $\delta=10^{-3}$. The top (resp. bottom) row is for utility (resp. training loss) vs. communication rounds and  communication bits.}
	\label{fig:a9a_eps_10}
\end{figure}

\paragraph{Remark.} 
From the experimental results, it can be seen that the two uncompressed algorithms ({\sf LDP-SGD} and {\sf LDP-SVRG}) converge faster than the three compressed algorithms (\cdpSGD, \soteriaflSGD, \soteriaflSVRG) in terms of \emph{communication rounds} (see left columns in each figure).
However, in terms of \emph{communication bits} (see right columns in each figure), compressed algorithms perform better than the uncompressed algorithms. This validates that communication compression indeed provide significant savings in terms of communication cost.
The figures also confirm that shifted compression based \soteriafl typically performs better than direct compression based \cdpSGD in both utility and training loss. For \soteriafl-style algorithms, it turns out that \soteriaflSVRG performs slightly better than \soteriaflSGD in the utility (see top rows in each figure). This is quite consistent with our theoretical results.

\vspace{3mm}
\subsection{{Shallow neural network training}}
\label{sec:nn}

We consider a simple 1-hidden layer neural network training task, with $64$ hidden neurons, sigmoid activation functions, and the cross-entropy loss.
The objective function over a data sample $(\va,b)$ is defined as 
$$f(\vx; (\va,b)) = \ell(\mathsf{softmax}(\bm W_2 ~\mathsf{sigmoid}( \bm W_1 \va + \bm c_1) + \bm c_2), b),$$
where $\ell(\cdot, \cdot)$ denotes the cross-entropy loss, the optimization variable is collectively denoted by $\vx = \text{vec}(\bm W_1, \bm c_1, \bm W_2, \bm c_2)$, with the dimensions of the network parameters $\bm W_1$, $\bm c_1$, $\bm W_2$, $\bm c_2$ being $64 \times 784$, $64 \times 1$, $10 \times 64$, and $10 \times 1$, respectively.
Here, we run the experiments on the standard \dataset{MNIST} dataset~\citep{lecun1998gradient}.
To demonstrate the privacy-utility-communication trade-offs, we consider five levels of $(\epsilon, \delta)$-LDP with $\epsilon = 1, 2, 4, 8, 16$ and a common $\delta=10^{-3}$, where
the experimental results are reported in Figures~\ref{fig:NN_eps_1}--\ref{fig:NN_eps_16}, respectively.

\begin{figure}[!htb]
	\centering
	\begin{tabular}{cc}
		\includegraphics[width=0.34\textwidth]{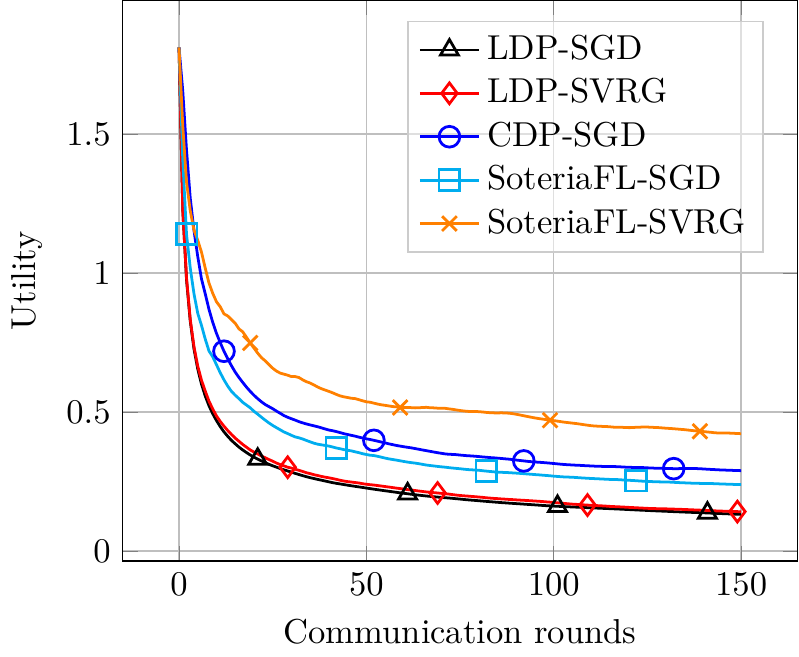} \hspace{5mm}
&
		\includegraphics[width=0.34\textwidth]{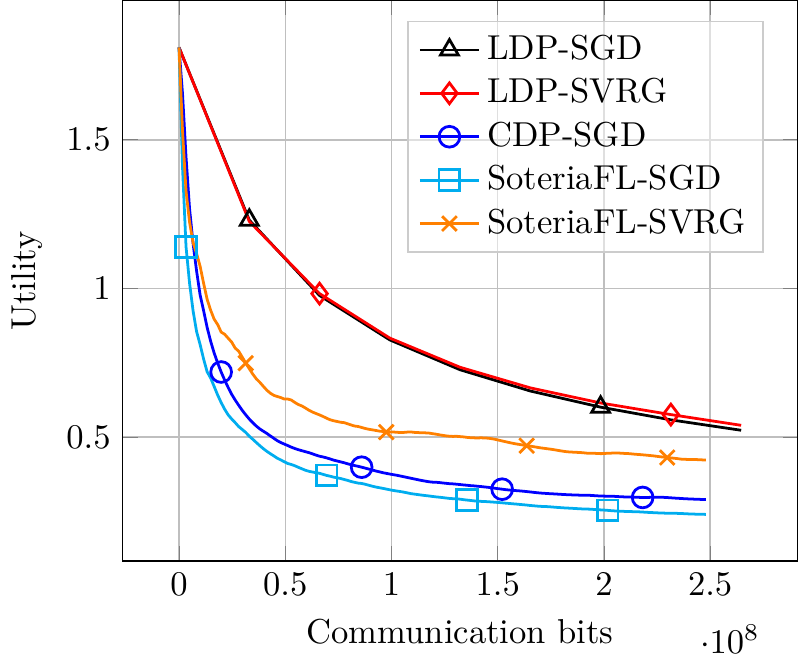} \\
		\includegraphics[width=0.34\textwidth]{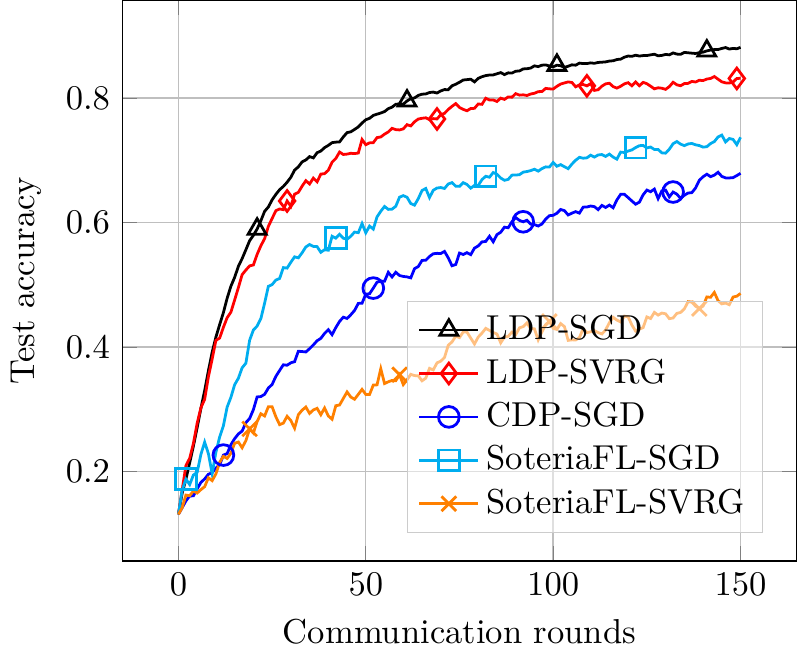} \hspace{5mm}
&
		\includegraphics[width=0.34\textwidth]{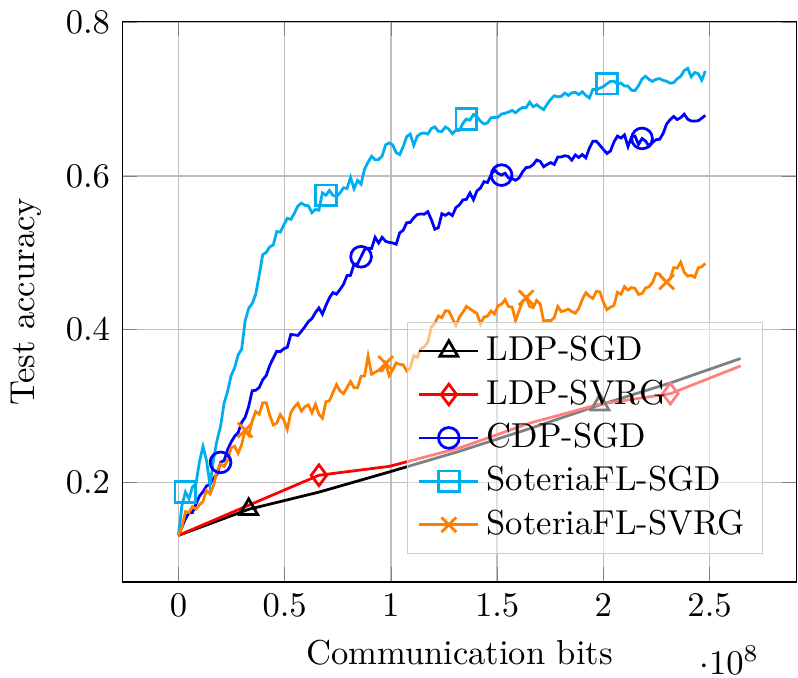}
\end{tabular}
	\caption{Shallow neural network training on the \dataset{MNIST} dataset under ($\epsilon,\delta$)-LDP with \blue{$\epsilon=1$} and $\delta=10^{-3}$. The top (resp. bottom) row is for utility (resp. test accuracy) vs. communication rounds and  communication bits.}
	\label{fig:NN_eps_1}
%
%
	
	\vspace{5mm}
	\centering
	\begin{tabular}{cc}
		\includegraphics[width=0.34\textwidth]{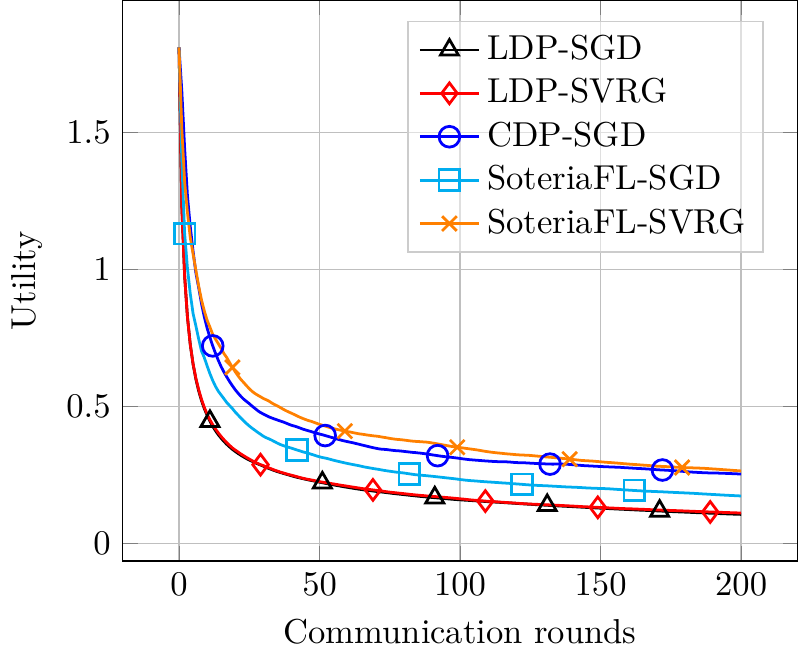} \hspace{5mm}
		&
		\includegraphics[width=0.34\textwidth]{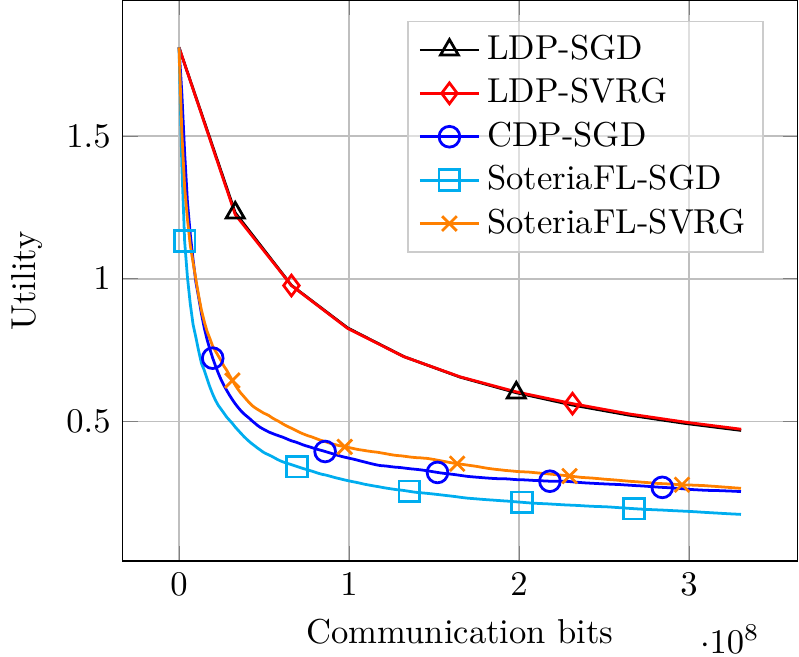} \\
		\includegraphics[width=0.34\textwidth]{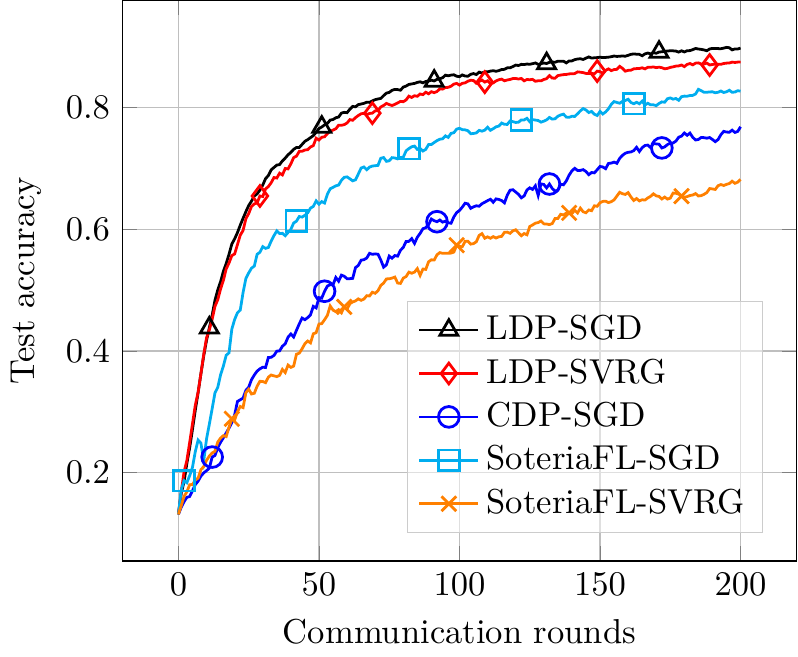} \hspace{5mm}
		&
		\includegraphics[width=0.34\textwidth]{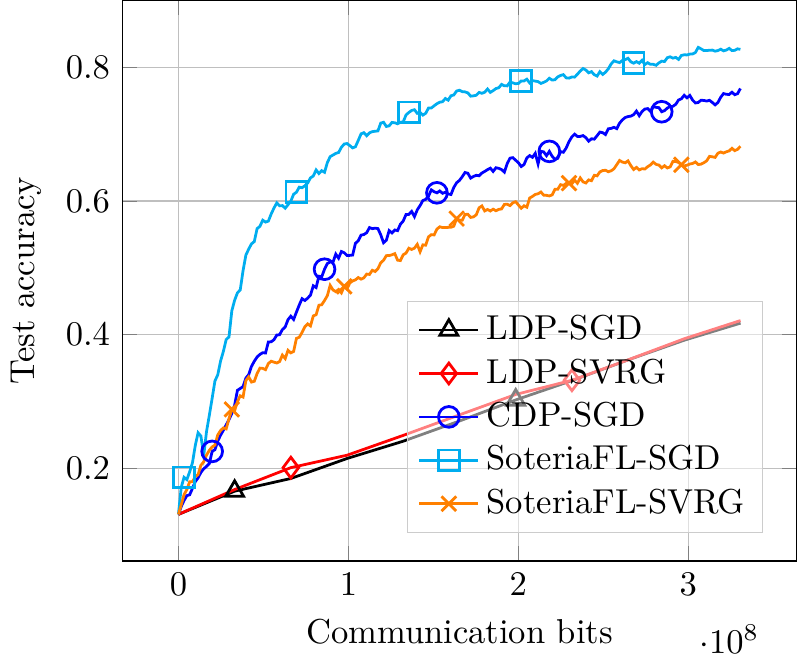}
	\end{tabular}
	\caption{Shallow neural network training on the \dataset{MNIST} dataset under ($\epsilon,\delta$)-LDP with \blue{$\epsilon=2$} and $\delta=10^{-3}$. The top (resp. bottom) row is for utility (resp. test accuracy) vs. communication rounds and  communication bits.}
	\label{fig:NN_eps_2}
\end{figure}

\begin{figure}[!htb]
	\centering
	\begin{tabular}{cc}
		\includegraphics[width=0.34\textwidth]{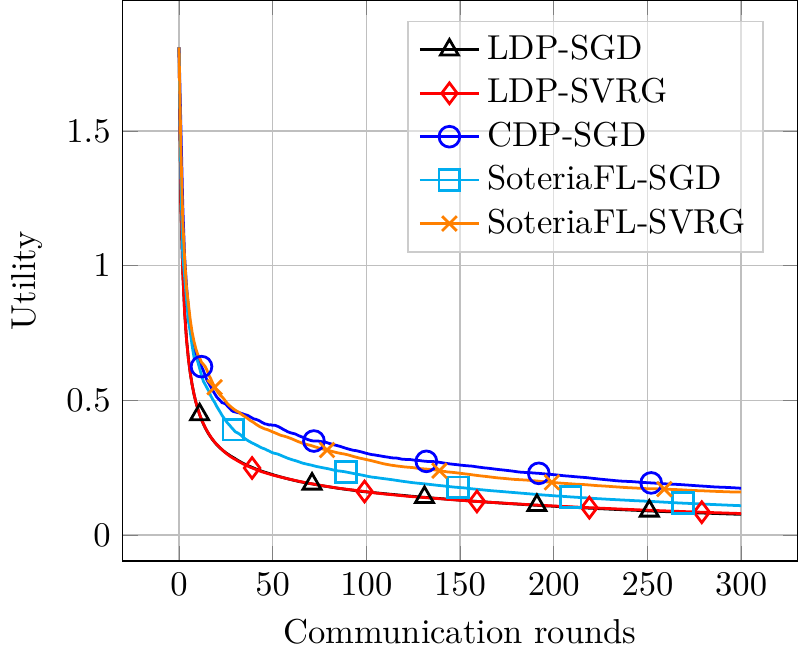} \hspace{5mm}
		&
		\includegraphics[width=0.34\textwidth]{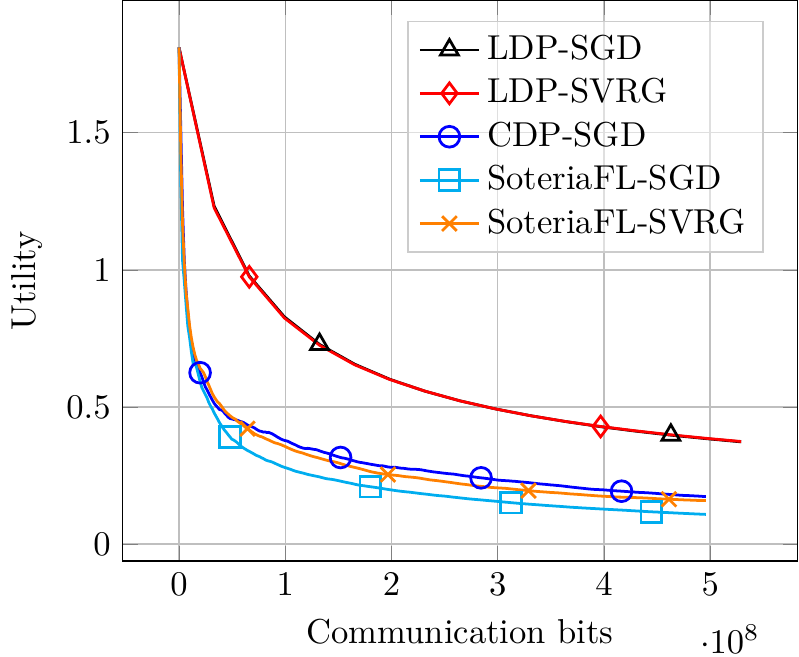} \\
		\includegraphics[width=0.34\textwidth]{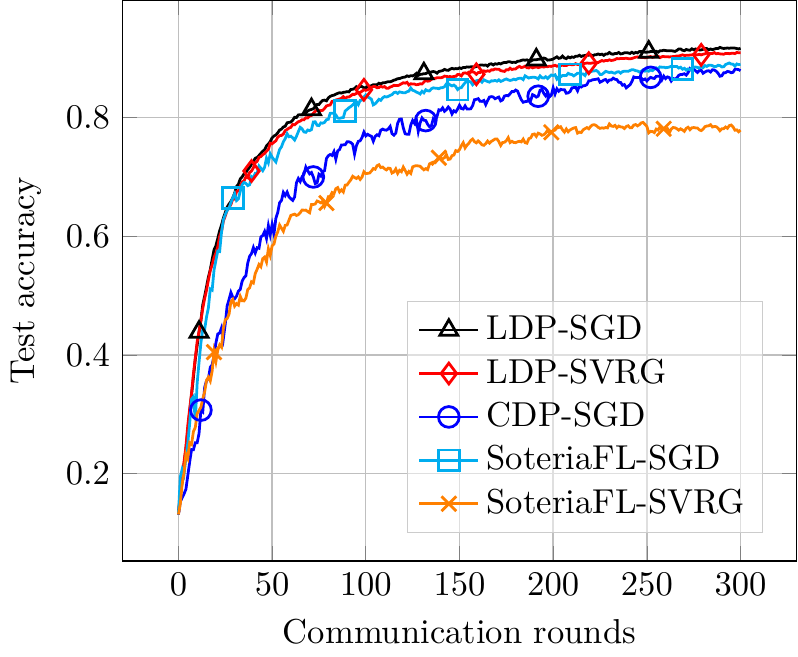} \hspace{5mm}
		&
		\includegraphics[width=0.34\textwidth]{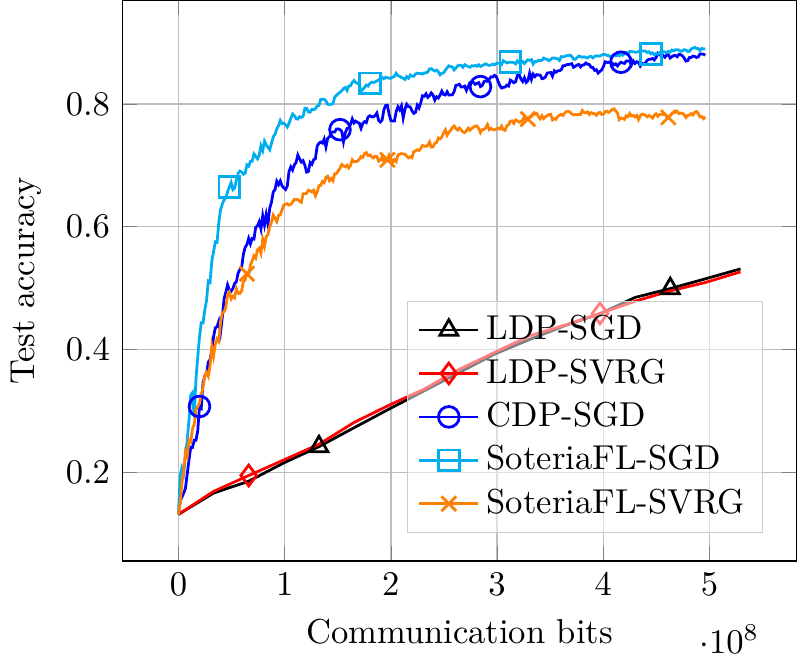}
	\end{tabular}
	\caption{Shallow neural network training on the \dataset{MNIST} dataset under ($\epsilon,\delta$)-LDP with \blue{$\epsilon=4$} and $\delta=10^{-3}$. The top (resp. bottom) row is for utility (resp. test accuracy) vs. communication rounds and  communication bits.}
	\label{fig:NN_eps_4}
%
%
	
	\vspace{5mm}
	\centering
	\begin{tabular}{cc}
		\includegraphics[width=0.34\textwidth]{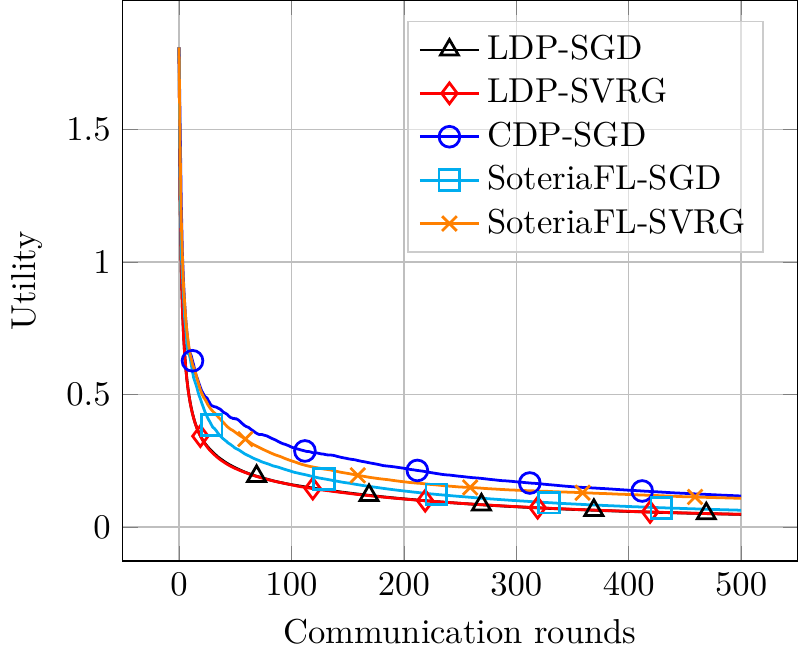} \hspace{5mm}
		&
		\includegraphics[width=0.34\textwidth]{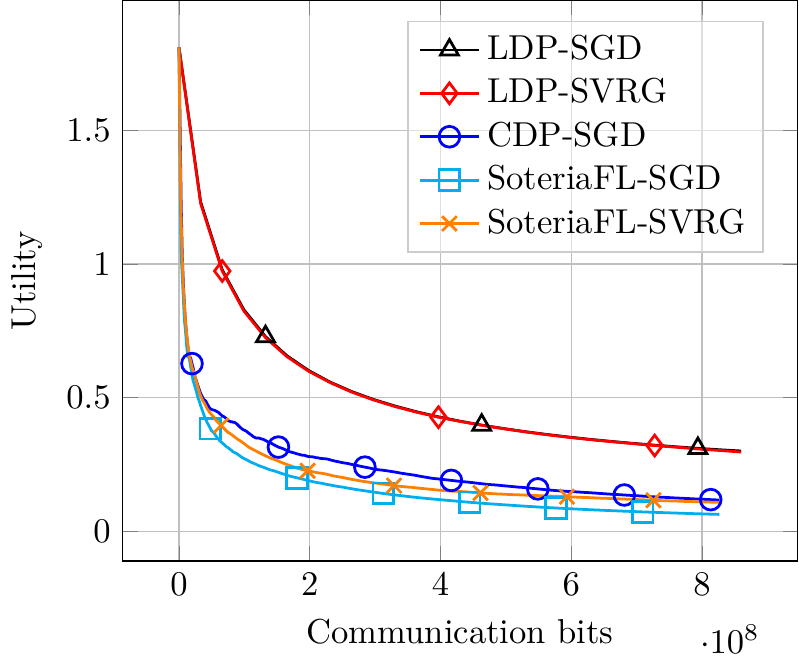} \\
		\includegraphics[width=0.34\textwidth]{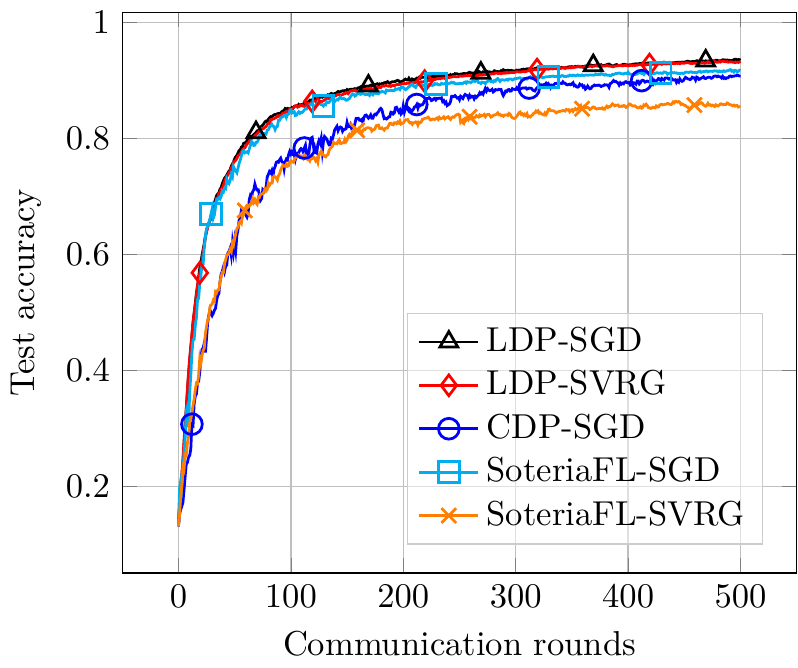} \hspace{5mm}
		&
		\includegraphics[width=0.34\textwidth]{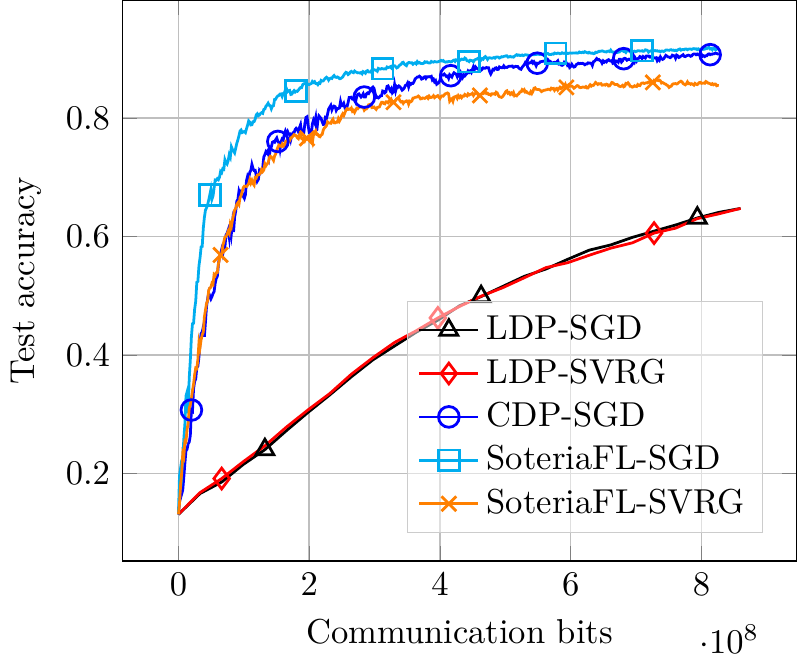}
	\end{tabular}
	\caption{Shallow neural network training on the \dataset{MNIST} dataset under ($\epsilon,\delta$)-LDP with \blue{$\epsilon=8$} and $\delta=10^{-3}$. The top (resp. bottom) row is for utility (resp. test accuracy) vs. communication rounds and  communication bits.}
	\label{fig:NN_eps_8}
\end{figure}

\clearpage
\begin{figure}[!htb]
	\centering
	\begin{tabular}{cc}
		\includegraphics[width=0.34\textwidth]{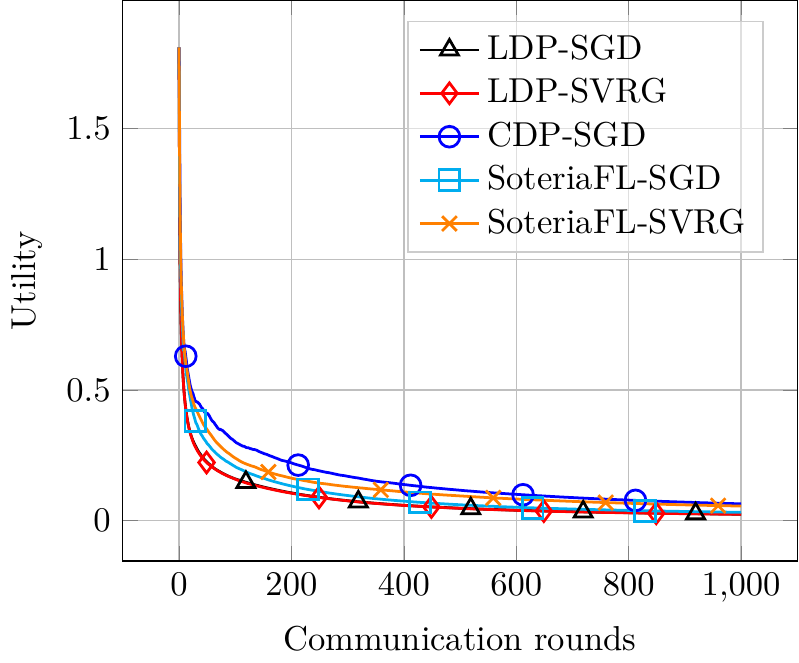} \hspace{5mm}
		&
		\includegraphics[width=0.34\textwidth]{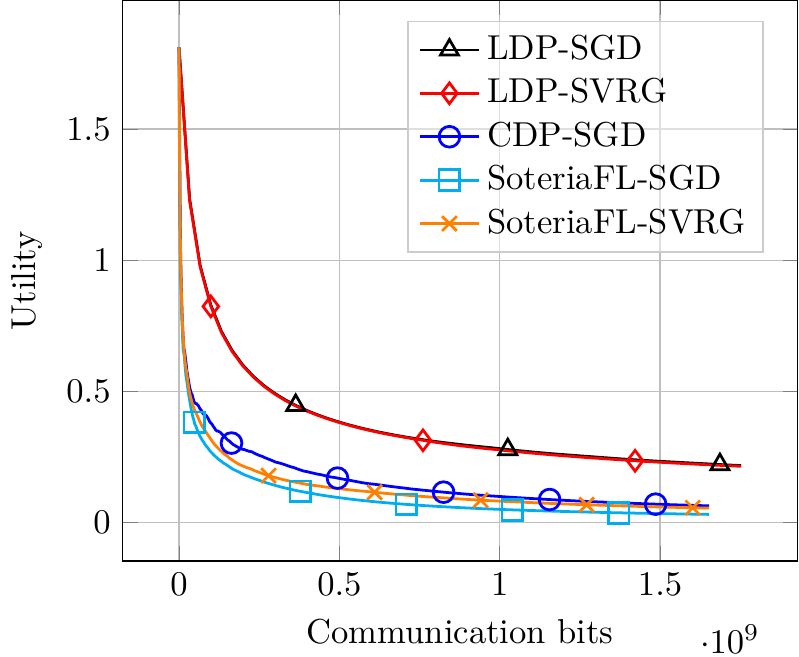} \\
		\includegraphics[width=0.34\textwidth]{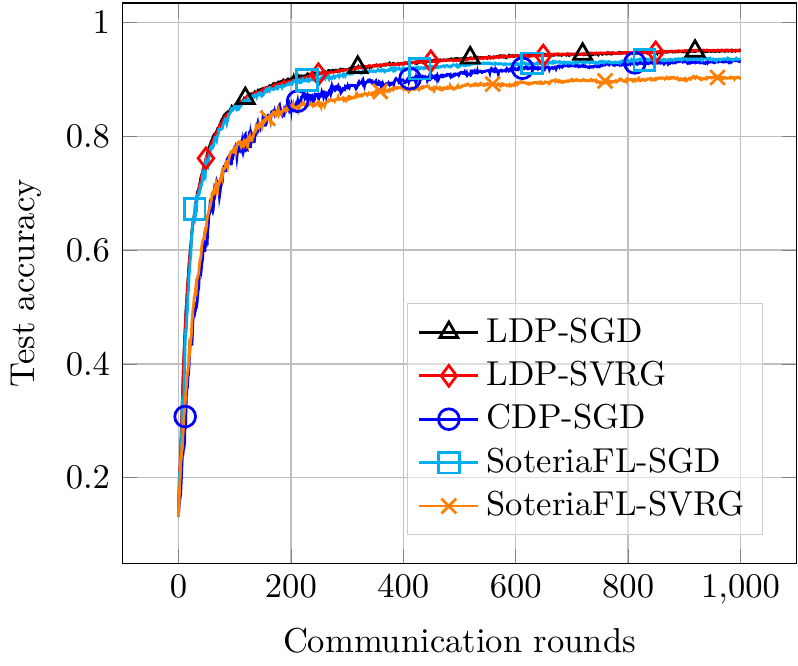} \hspace{5mm}
		&
		\includegraphics[width=0.34\textwidth]{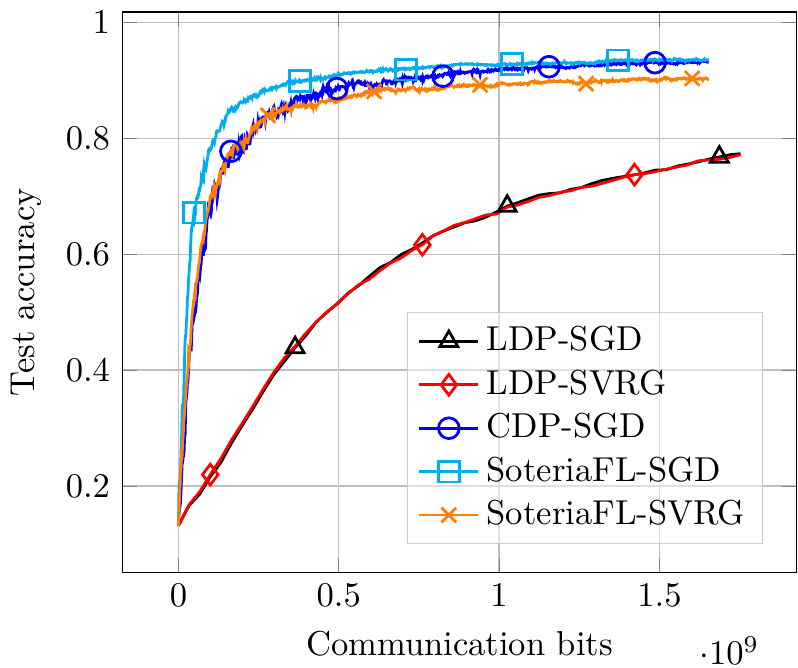}
	\end{tabular}
	\caption{Shallow neural network training on the \dataset{MNIST} dataset under ($\epsilon,\delta$)-LDP with \blue{$\epsilon=16$} and $\delta=10^{-3}$. The top (resp. bottom) row is for utility (resp. test accuracy) vs. communication rounds and  communication bits.}
	\label{fig:NN_eps_16}
\end{figure}

\paragraph{Remark.} 
Note that here we report the test accuracy for training the neural network instead of the training loss as earlier (see bottom rows in Figures~\ref{fig:NN_eps_1}--\ref{fig:NN_eps_16} vs. in Figures~\ref{fig:a9a_eps_1}--\ref{fig:a9a_eps_10}). The takeaways from the experimental results are similar to previous experiments on logistic regression with nonconvex regularization (Figures~\ref{fig:a9a_eps_1}--\ref{fig:a9a_eps_10}). 
Again, the two uncompressed algorithms ({\sf LDP-SGD} and {\sf LDP-SVRG}) converge faster than the three compressed algorithms (\cdpSGD, \soteriaflSGD, \soteriaflSVRG) in terms of \emph{communication rounds} (see left columns in each figure), {but the gap becomes smaller when the privacy level $\epsilon$ gets larger (i.e. less privacy guarantee)}.
However, in terms of \emph{communication bits} (see right columns in each figure), compressed algorithms again perform much better than the uncompressed algorithms, validating the advantage of communication compression schemes. Last but not least, shifted compression based \soteriafl-SGD performs better than direct compression based \cdpSGD in both utility and test accuracy. 
However, it turns out that \soteriaflSVRG may perform worse than \cdpSGD for training this shallow neural network.

\section{Conclusion}
\label{sec:conclusion}

We propose \soteriafl, a unified framework for private FL, which accommodates a general family of local gradient estimators including popular stochastic variance-reduced gradient methods and the state-of-the-art shifted compression scheme. 
A unified characterization of its performance trade-offs in terms of privacy, utility (convergence accuracy), and communication complexity is presented, which is then instantiated to arrive at several new private FL algorithms. All of these algorithms are shown to perform better than the plain \cdpSGD algorithm especially when the local dataset size is large, and have lower communication complexity compared with other private FL  algorithms without compression.

\section*{Acknowledgements}

The work of Z. Li, B. Li and Y. Chi is supported in part by ONR N00014-19-1-2404, by AFRL under FA8750-20-2-0504, and by NSF
under CCF-1901199, CCF-2007911, DMS-2134080 and CNS-2148212.
The work of H. Zhao is supported in part by NSF, ONR, Simons Foundation, DARPA and SRC through awards to S. Arora.
B. Li is also gratefully supported by Wei Shen and Xuehong Zhang
Presidential Fellowship at Carnegie Mellon University.

\bibliographystyle{abbrvnat}
\bibliography{ref}


\newpage
\appendix

\newcommand{\Et}[1]{\mathbb{E}_t[ #1 ]}
\newcommand{\Etb}[1]{\mathbb{E}_t\Big[ #1 \Big]}
\newcommand{\EtB}[1]{\mathbb{E}_t\left[ #1 \right]}
\newcommand{\xt}{\vx^t}
\newcommand{\xtn}{\vx^{t+1}}
\newcommand{\xT}{\vx^T}
\newcommand{\xTn}{\vx^{T+1}}
\newcommand{\wt}{\vw^t}
\newcommand{\wtn}{\vw^{t+1}}
\newcommand{\wT}{\vw^T}
\newcommand{\wTn}{\vw^{T+1}}
\newcommand{\wijt}{\vw_{i,j}^t}
\newcommand{\wijtn}{\vw_{i,j}^{t+1}}
\newcommand{\sigp}{{\sigma_p^2}}

\newcommand{\st}{\vs^t}
\newcommand{\stn}{\vs^{t+1}}
\newcommand{\sit}{{\vs_i^t}}
\newcommand{\sitn}{\vs_i^{t+1}}
\newcommand{\siT}{\vs_i^T}
\newcommand{\siTn}{\vs_i^{T+1}}

\newcommand{\git}{\vg_i^{t}}
\newcommand{\tgit}{\tilde{\vg}_i^t}
\newcommand{\xit}{\vxi_i^{t}}
\newcommand{\cit}{\cC_i^{t}}
\newcommand{\vit}{{\vv_i^t}}
\newcommand{\vitn}{\vv_i^{t+1}}
\newcommand{\vvt}{{\vv^t}}
\newcommand{\vvtn}{\vv^{t+1}}

\newcommand{\etat}{\eta_{t}}
\newcommand{\thetat}{\theta_{t}}
\newcommand{\alphat}{\alpha_{t}}
\newcommand{\pt}{p_{t}}
\newcommand{\betat}{\beta_{t}}
\newcommand{\gammat}{\gamma_{t}}
\newcommand{\cst}{\sum_{i=1}^n \ns{\nabla f_i(\xt) - \sit}}
\newcommand{\cstn}{\sum_{i=1}^n \ns{\nabla f_i(\xtn) - \sitn}}
\newcommand{\sumn}{\sum_{i=1}^n}
\newcommand{\gSt}{\gS^t}
\newcommand{\gStn}{\gS^{t+1}}
\newcommand{\Deltat}{\Delta^t}
\newcommand{\Deltatn}{\Delta^{t+1}}

\newcommand{\etaT}{\eta_{T}}
\newcommand{\thetaT}{\theta_{T}}
\newcommand{\alphaT}{\alpha_{T}}
\newcommand{\pT}{p_{T}}
\newcommand{\betaT}{\beta_{T}}
\newcommand{\gammaT}{\gamma_{T}}
\newcommand{\csT}{\sum_{i=1}^n \ns{\nabla f_i(\wT) - \siT}}
\newcommand{\csTn}{\sum_{i=1}^n \ns{\nabla f_i(\wTn) - \siTn}}
\newcommand{\cszero}{\sum_{i=1}^n \ns{\nabla f_i(\vw^0) - \vs_i^0}}

\section*{\Large Appendix}
We now provide all missing proofs. 
Concretely, Appendix~\ref{sec:proof-privacy} and \ref{sec:proof-utility} provide the detailed proofs for our unified privacy guarantee in Theorem~\ref{thm:privacy} and unified utility and communication complexity analysis in Theorem~\ref{thm:utility}, respectively.
Appendix~\ref{sec:proof-cdp-sgd} provides the proof for \cdpSGD (Theorem~\ref{thm:cdp-sgd}).
Finally, Appendix~\ref{sec:proof-lemma} provides the proofs for Section~\ref{sec:algorithms}, including Lemma~\ref{lem:para-sgd-svrg-saga} (showing that several local gradient estimators satisfy the generic Assumption~\ref{ass:unified}) and Corollaries~\ref{cor:sgd}--\ref{cor:saga} (instantiating Lemma~\ref{lem:para-sgd-svrg-saga} in the unified Theorem~\ref{thm:utility}) for the proposed \soteriafl-style algorithms.

\section{Proof of Theorem \ref{thm:privacy}}
\label{sec:proof-privacy}

In the proof of Theorem \ref{thm:privacy}, we apply a moment argument (similar to \citep{abadi2016deep}) to prove the local differential privacy guarantees. Before going into the detailed proof, we first define some concepts.

\paragraph{Moment generating function.} Assume that there is a mechanism $\gM:\gD\to\gR$. For neighboring datasets $D,D'\in\gD$, a mechanism $\gM$, auxiliary inputs $\text{aux}$, and an outcome $o\in\gR$, we define the private loss at $o$ as
\begin{equation*}
    c(o; \gM, \text{aux}, D, D') := \log \frac{\Pr\{\gM(\text{aux},D) = o\}}{\Pr\{\gM(\text{aux}, D') = o\}}.
\end{equation*}
We also define
\begin{align*}
    &\alpha^{\gM}(\lambda; \text{aux}, D, D')
   :=  \log \E_{o\sim \gM(\text{aux}, D)}\left[\exp\left(\lambda\cdot c(o; \gM, \text{aux}, D, D')\right)\right]
\end{align*}
and
\begin{align*}
    \alpha_i^{\gM}(\lambda) := \max_{\text{aux},D,D'}\alpha^{\gM}(\lambda; \text{aux}, D, D'),
\end{align*}
where $D = (D_{-i},D_i), D' = (D_{-i},D'_i)$ are neighboring datasets that differ only at client $i$, $D_{-i}$ denoting all the data at clients other than client $i$. We call $\alpha^{\gM}(\lambda; \text{aux}, D, D')$ and $\alpha_i^{\gM}(\lambda)$ the log moment generating functions.

\paragraph{Sub-mechanisms.} We assume that there are $n\times T$ sub-mechanisms $\{\gM_i^t\}_{i\in [n], t\le T}$ in $\gM$, where $\gM_i^t$ corresponds to the mechanism for client $i$ in round $t$. We further let $\gM_i^t := \gA\circ \overline{\gM}_i^t$ be the composition of mechanism $\overline{\gM}_i^t$ and the mechanism $\gA$. Here, $\gA:\gR\to\gR$ is a random mechanism that maps an outcome to another outcome, and $\overline{\gM}_i^t$ is possibly an adaptive mechanism that takes the input of all the outputs before time $t$, i.e. $o_i^{s}$ for all $s < t$ and $i\in [n]$. We assume that given all the previous outcomes $o_i^{s}$ for $s <t$, the random mechanisms $\overline{\gM}_i^t$ for all $i\in [n]$ are independent w.r.t. each other (this is satisfied in \soteriafl).
In \soteriafl (Algorithm~\ref{alg:soteriafl}), $\gA$ corresponds to the compression step, and $\overline{\gM}_i^t$ corresponds the Gaussian perturbation.

Before proving Theorem~\ref{thm:privacy}, we first state the following result from \cite{abadi2016deep}.

\begin{proposition}[Theorem 2 in \citep{abadi2016deep}] \label{prop:tail}
    For any $\epsilon > 0$, the mechanism $\gM$ is $(\epsilon, \delta)$-LDP for client $i$ with $\delta = \min_{\lambda} \exp\left(\alpha_i^{\gM}(\lambda) - \lambda \epsilon\right)$.
\end{proposition}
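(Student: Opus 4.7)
The plan is to establish the standard conversion from a moment-generating-function bound on the privacy loss to an $(\epsilon,\delta)$-LDP guarantee, using a Markov-style tail inequality followed by a two-set decomposition of the output space. Throughout, I would fix a client index $i$, an auxiliary input $\text{aux}$, and a pair of neighboring datasets $D = (D_{-i},D_i)$ and $D' = (D_{-i},D'_i)$ that differ only at client $i$, and at the end take a worst case over these choices.

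First, view the privacy loss $c(o) := c(o;\gM,\text{aux},D,D')$ as a random variable under $o \sim \gM(\text{aux},D)$. For any $\lambda > 0$, applying Markov's inequality to the nonnegative random variable $\exp(\lambda c(o))$ gives
\begin{equation*}
\Pr_{o\sim \gM(\text{aux},D)}\bigl[c(o) \geq \epsilon\bigr]
= \Pr\bigl[\exp(\lambda c(o)) \geq e^{\lambda\epsilon}\bigr]
\leq e^{-\lambda \epsilon}\,\E\bigl[\exp(\lambda c(o))\bigr]
= \exp\bigl(\alpha^{\gM}(\lambda;\text{aux},D,D') - \lambda\epsilon\bigr).
\end{equation*}
Taking the supremum over $(\text{aux},D,D')$ on the right and then minimizing over $\lambda > 0$, I obtain
$\Pr_{o\sim \gM(\text{aux},D)}[c(o)\geq \epsilon] \leq \delta$, where $\delta := \min_{\lambda}\exp(\alpha_i^{\gM}(\lambda) - \lambda\epsilon)$, uniformly over the choice of $(\text{aux},D,D')$.

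Second, I convert this tail bound into the LDP inequality. Let $B := \{o\in\gR : c(o) \geq \epsilon\}$ be the ``bad'' subset of the output space. For any measurable event $S\subseteq \gR$, decompose
\begin{equation*}
\Pr[\gM(\text{aux},D)\in S] = \Pr[\gM(\text{aux},D)\in S\cap B^c] + \Pr[\gM(\text{aux},D)\in S\cap B].
\end{equation*}
On $B^c$, the definition of $c(o)$ gives pointwise $\Pr\{\gM(\text{aux},D)=o\} \leq e^{\epsilon}\Pr\{\gM(\text{aux},D')=o\}$, so integrating over $S\cap B^c$ yields $\Pr[\gM(\text{aux},D)\in S\cap B^c] \leq e^{\epsilon}\Pr[\gM(\text{aux},D')\in S]$. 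The second term is at most $\Pr[\gM(\text{aux},D)\in B] \leq \delta$ by the previous step. Combining these gives the required $\Pr[\gM(\text{aux},D)\in S] \leq e^{\epsilon}\Pr[\gM(\text{aux},D')\in S] + \delta$, establishing $(\epsilon,\delta)$-LDP at client $i$.

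The main obstacles are bookkeeping rather than technical: (i) the minimum in the statement must be understood as over $\lambda \geq 0$, so that the Markov step points in the correct direction; (ii) the pointwise density comparison on $B^c$ is standard but requires a clean measure-theoretic formulation (via Radon--Nikodym derivatives in the continuous case, or summation in the discrete case); and (iii) because $\alpha_i^{\gM}(\lambda)$ is defined as the supremum of $\alpha^{\gM}(\lambda;\text{aux},D,D')$ over all neighboring $(D,D')$ and all $\text{aux}$, the tail bound is uniform, so no additional composition or adaptivity argument is needed for this proposition itself---composition across the sub-mechanisms $\{\gM_i^t\}$ is handled elsewhere when computing $\alpha_i^{\gM}(\lambda)$.
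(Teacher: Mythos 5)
Your proposal is correct: the Markov bound on $\exp(\lambda c(o))$ followed by the good/bad-set decomposition of the output space is exactly the standard tail-bound argument behind Theorem~2 of \citet{abadi2016deep}, which this paper imports without reproving. The caveats you flag (restricting to $\lambda \ge 0$, and noting that the uniformity over $(\text{aux}, D, D')$ is already baked into the definition of $\alpha_i^{\gM}(\lambda)$ so no composition argument is needed here) are the right ones.
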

According to Proposition~\ref{prop:tail}, we know that if the log moment generating function $\alpha_i^{\gM}(\lambda)$ is bounded, then we can show that the mechanism $\gM$ satisfies $(\epsilon,\delta)$-LDP with some parameters $\epsilon$ and $\delta$.
To prove that the log moment generating function $\alpha_i^{\gM}(\lambda)$ is bounded, we divide it into two parts: 
1) the log moment generating function $\alpha_i^{\gM}(\lambda)$ for the whole mechanism can be bounded by the summation of the log moment generating function of all sub mechanisms $\alpha_i^{\overline{\gM}_i^t}(\lambda)$ from $t=1$ to $T$;
and 
2) the log moment generating function for each sub mechanism is bounded, i.e. $\alpha_i^{\overline{\gM}_i^t}(\lambda)$ is bounded. 
To this end, we provide the following two lemmas to formalize these two parts respectively.

\begin{restatable}[Privacy for composition]{lemma}{lemcomposition}\label{lem:composition}
    For any client $i$ and any $\lambda$, the following holds
    \begin{equation*}
        \alpha_i^{\gM}(\lambda) \le \sum_{t=1}^T\alpha^{\overline{\gM}_i^t}(\lambda).
    \end{equation*}
\end{restatable}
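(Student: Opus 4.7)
The plan is to combine the standard adaptive composition argument for the moments accountant (in the style of \citet{abadi2016deep}) with the post-processing property of differential privacy. Fix a client $i$ and two neighboring datasets $D=(D_{-i},D_i)$ and $D'=(D_{-i},D_i')$ that differ only at client $i$. Let $o=\{o_j^t\}_{j\in[n],\,t\le T}$ denote the full transcript produced by $\gM$, and write $o_{<t}$ for everything output before round $t$. The starting point is to decompose the privacy loss via the chain rule of densities into a double sum over rounds $t$ and clients $j$.

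The first main step is to show that only the terms with $j=i$ survive. Since the sub-mechanisms across clients are assumed independent conditional on $o_{<t}$, the conditional density of $o_j^t$ factors, and for $j\ne i$ the sub-mechanism $\gM_j^t$ depends on the data only through $D_j$, which is identical under $D$ and $D'$. Hence $\log\bigl(\Pr[o_j^t\mid o_{<t},D]/\Pr[o_j^t\mid o_{<t},D']\bigr)=0$, reducing the privacy loss to $c(o;\gM,D,D')=\sum_{t=1}^T c(o_i^t;\gM_i^t,o_{<t},D_i,D_i')$.

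The second main step is the telescoping bound on the moment generating function. Starting from the outer expectation of $\exp\bigl(\lambda\sum_t c_t\bigr)$, I condition on $o_{<T}$, use the definition of $\alpha^{\gM_i^T}(\lambda;o_{<T},D_i,D_i')$ to pull out a factor bounded by $\exp\bigl(\alpha^{\gM_i^T}(\lambda)\bigr)$ (taking the worst case over auxiliary inputs), and iterate backwards over $t=T-1,T-2,\ldots,1$. Taking logarithms yields $\alpha_i^{\gM}(\lambda)\le\sum_{t=1}^T\alpha^{\gM_i^t}(\lambda)$.

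The third step is post-processing: because $\gM_i^t=\gA\circ\overline{\gM}_i^t$ where $\gA$ is a randomized compression mapping whose distribution does not depend on the input data, the standard data processing inequality for Rényi-type divergences gives $\alpha^{\gM_i^t}(\lambda)\le\alpha^{\overline{\gM}_i^t}(\lambda)$, and chaining this with the previous bound produces the claimed inequality. The main technical subtlety I expect is carefully tracking the worst-case-over-auxiliary-inputs inside the tower-of-expectations step, since in the adaptive composition each $\gM_i^t$ may depend on the random past $o_{<t}$; this is resolved by upper-bounding the conditional MGF by its supremum over $o_{<t}$ before taking the outer expectation, which is the standard device in the moments accountant framework.
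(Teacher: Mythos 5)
Your proposal is correct and follows essentially the same route as the paper's proof: the $j\neq i$ terms vanish by conditional independence across clients together with $D_j=D_j'$, the per-round log moment generating functions are composed over $t=1,\dots,T$, and the data processing inequality for R\'enyi divergence handles the compression post-processing $\gM_i^t=\gA\circ\overline{\gM}_i^t$. Your tower-property/worst-case-conditioning treatment of the adaptive composition is just a more careful rendering of the step the paper writes as a product of expectations, so the two arguments coincide.
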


\begin{restatable}[Privacy for sub mechanism]{lemma}{lemsub}\label{lem:sub}
    Suppose that Assumption \ref{ass:bounded-gradient} and \ref{ass:unified} are satisfied. For any client $i$, let $\sigma_p \ge 1$ and let $\gI_b$ be a random minibatch from local dataset $D_i = \{d_{i,j}\}_{j=1}^m$ where each data sample $d_{i,j}$ is chosen independently with probability $q = \frac{b}{m} < \frac{G_A}{16b\sigma_p}$. Then for any positive integer $\lambda \le \frac{2b^2\sigma_p^2}{3G_A^2}\log\frac{G_A}{qb\sigma_p}$, the sub mechanism $\overline{\gM}_i^t$ satisfies
    \[\alpha^{\overline{\gM}_i^t}(\lambda) \le \frac{6 \lambda(\lambda+1)(G_A^2/4 + G_B^2)}{(1-q)m^2 \sigma_p^2} + O\left(\frac{q^3\lambda^3}{\sigma_p^3}\right).\]
\end{restatable}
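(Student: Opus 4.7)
The plan is to adapt the moments accountant argument of \citet{abadi2016deep} (specifically, their Lemma~3) to the decomposable gradient estimator $\tgit = \gA_i^t + \gB_i^t$, where only $\gA_i^t$ benefits from minibatch subsampling while $\gB_i^t = \frac{1}{m}\sum_j \psi_{i,j}^t$ is computed from the full local dataset. Fix neighboring datasets $D=(D_{-i},D_i)$ and $D'=(D_{-i},D_i')$ differing only in the single sample $d_{i,j^*}$ on client $i$, and let $\Delta_1 := (\varphi_{i,j^*}^t - \varphi_{i,j^*}^{t,\prime})/b$ and $\Delta_2 := (\psi_{i,j^*}^t - \psi_{i,j^*}^{t,\prime})/m$, which by Assumption~\ref{ass:unified} satisfy $\|\Delta_1\| \le 2G_A/b$ and $\|\Delta_2\| \le 2G_B/m$. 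Then the output of $\overline{\gM}_i^t$ on $D'$ is a single Gaussian $P = \gN(\tgit(D'), \sigma_p^2 \mI)$, while on $D$ it is a two-component Gaussian mixture $Q = (1-q)\,\gN(\tgit(D') + \Delta_2, \sigma_p^2 \mI) + q\,\gN(\tgit(D') + \Delta_1 + \Delta_2, \sigma_p^2 \mI)$, where $q = b/m$ is the probability that $j^* \in \gI_b$.

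Next, by translation and rotation invariance of the Gaussian density, I would WLOG recenter $P$ at the origin and align coordinates so that $\Delta_1, \Delta_2$ live in a two-dimensional subspace, reducing the ambient $d$-dimensional integral to a two-dimensional one. The quantity to bound is then $\alpha^{\overline{\gM}_i^t}(\lambda) = \log \E_{z\sim Q}[(Q(z)/P(z))^\lambda]$. Writing $Q(z)/P(z) = (1-q)E_2(z) + q E_{12}(z)$, where $E_2(z), E_{12}(z)$ are explicit Gaussian density ratios that are exponentials of quadratic forms in $z$ and the shifts $\Delta_1,\Delta_2$, and then expanding via the binomial theorem $(1+u)^\lambda = \sum_k \binom{\lambda}{k} u^k$ centered around the ``no-change'' component (as in the Abadi et al. proof), the odd-order expectations under $P$ vanish, the second-order (leading) term produces the factor $\lambda(\lambda+1)$ times a quadratic in $\Delta_1,\Delta_2$ divided by $\sigma_p^2$, and the higher-order terms contribute the $O(q^3\lambda^3/\sigma_p^3)$ tail under the imposed conditions $q < G_A/(16 b\sigma_p)$ and $\lambda \le \tfrac{2b^2\sigma_p^2}{3 G_A^2}\log\tfrac{G_A}{qb\sigma_p}$.

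Putting together the coefficient: $\Delta_1$ enters only in the subsampled branch, so its contribution to the leading term carries a $q^2$ prefactor and is bounded by $q^2\|\Delta_1\|^2 \le 4G_A^2/m^2$ (using $q^2/b^2 = 1/m^2$); the $\Delta_2$ term is common to both branches and contributes at scale $\|\Delta_2\|^2 \le 4G_B^2/m^2$; the $(1-q)^{-1}$ factor arises as in Abadi et al. from normalization of the dominant Gaussian component. Absorbing the cross terms by AM--GM into the pure squared terms and tracking universal constants yields the bound $\tfrac{6\lambda(\lambda+1)(G_A^2/4 + G_B^2)}{(1-q)m^2\sigma_p^2}$.

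The main obstacle I anticipate is controlling the cross terms that couple $\Delta_1$ and $\Delta_2$. Because $\Delta_2$ appears in both mixture components, it does not enjoy the $q$-amplification that $\Delta_1$ does, so a naive Cauchy--Schwarz on a cross term could produce a contribution of order $\|\Delta_1\|\|\Delta_2\|/(b\sigma_p^2) \sim G_AG_B/(m b\sigma_p^2)$, which would be larger than the stated bound when $b$ is small. The resolution will be to perform the decoupling \emph{after} carrying out the Gaussian expectation (so that orthogonality of the Gaussian moments kills certain cross products) and only then to apply AM--GM, so that the surviving coupling is absorbed into $\|\Delta_1\|^2 + \|\Delta_2\|^2$, preserving the additive structure $G_A^2 + G_B^2$ in the numerator of the final bound.
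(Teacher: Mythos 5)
Your route is genuinely different from the paper's. The paper never analyzes the mixture-with-common-shift directly: it splits the injected noise $\vxi_i^t\sim\gN(\bm{0},\sigma_p^2\mI)$ into two \emph{independent} Gaussians with variances $\tfrac{2}{3}\sigma_p^2$ and $\tfrac{1}{3}\sigma_p^2$ as in \eqref{eq:mec-decomposition}, attaches the first to the subsampled part $\gA_i^t=\tfrac1b\sum_{j\in\gI_b}\varphi_{i,j}^t$ and the second to the full-average part $\gB_i^t=\tfrac1m\sum_j\psi_{i,j}^t$, and treats $\overline{\gM}_i^t$ as a composition of these two mechanisms. Then $\alpha^{\overline{\gM}_i^t}(\lambda)$ is bounded by the sum of the two log-moments \citep[Theorem~2.1]{abadi2016deep}: the subsampled piece is exactly the setting of \citep[Lemma~3]{abadi2016deep} with $\sigma^2=\tfrac{2b^2\sigma_p^2}{3G_A^2}$, and the non-subsampled piece is an exact R\'enyi divergence between two Gaussians, handled via Lemma~\ref{lem:mgf-renyi} and \citep[Lemma~17]{bun2016concentrated}. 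This is also why the lemma's hypotheses and constants look the way they do: the $\tfrac23$ in the $\lambda$-condition and the coefficient $6(G_A^2/4+G_B^2)=\tfrac32 G_A^2+6G_B^2$ are artifacts of the $2/3$--$1/3$ noise split. Your approach, which keeps the full variance $\sigma_p^2$ for both shifts, would in principle give a sharper constant, and since the lemma is an upper bound that would still suffice.

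The gap is that your plan replaces the one off-the-shelf ingredient with a new computation that is asserted rather than carried out: a moment bound for the privacy loss between $P=\gN(0,\sigma_p^2\mI)$ and the mixture $Q=(1-q)\gN(\Delta_2,\sigma_p^2\mI)+q\gN(\Delta_1+\Delta_2,\sigma_p^2\mI)$ with a \emph{common} shift $\Delta_2$. This is not covered by \citep[Lemma~3]{abadi2016deep} (their mixture components coincide when the differing sample is excluded), so the binomial expansion, the treatment of both orderings of $(D,D')$, the range of $(q,\lambda)$ under which the remainder is $O(q^3\lambda^3/\sigma_p^3)$, and the final constants must all be re-derived; none of this is in your sketch. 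Moreover, the claim that ``odd-order expectations under $P$ vanish'' is not correct as stated: in the Abadi et al.\ argument only the first-order term vanishes exactly (because densities integrate to one), while third- and higher-order terms are merely bounded and are what produce the cubic remainder, and with $\Delta_2\neq 0$ even that bookkeeping changes. Your instinct on the cross term is right --- after taking the Gaussian expectation, $\Delta_1$ only enters through the $q$-weighted component, so the coupling appears as $\lambda^2 q\langle\Delta_1,\Delta_2\rangle/\sigma_p^2\lesssim \lambda^2 G_AG_B/(m^2\sigma_p^2)$ and can be absorbed by AM--GM --- but this, together with recovering a constant dominated by $6(G_A^2/4+G_B^2)/(1-q)$, is exactly the heavy lifting your proposal defers. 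If you want to avoid redoing that computation, the noise-splitting-plus-composition device above reduces everything to citable results.
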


The detailed proofs for Lemmas~\ref{lem:composition} and \ref{lem:sub} are provided in Appendix~\ref{sec:composition-2} and \ref{sec:sub-3}, respectively.

\paragraph{Proof of Theorem~\ref{thm:privacy}.} Now, we are ready to prove our privacy guarantee in Theorem~\ref{thm:privacy} using Proposition~\ref{prop:tail}, and Lemmas~\ref{lem:composition} and \ref{lem:sub}.

\begin{proof}[Proof of Theorem \ref{thm:privacy}]
Assume for now that $\sigma_p, \lambda$ satisfy the conditions in Lemma \ref{lem:sub}, namely
\begin{equation}\label{eq:req3}
 \lambda \le \frac{2b^2\sigma_p^2}{3G_A^2}\log\frac{G_A}{q\sigma_p b}.
\end{equation} 
By Lemmas \ref{lem:sub} and \ref{lem:composition}, there exists some constant $\hat c$ such that for small enough $q$, the log moment generating function of Algorithm \ref{alg:soteriafl} can be bounded as follows 
\begin{equation*}
    \alpha_i^{\gM}(\lambda) \le \hat c \frac{T\lambda^2(G_A^2/4 + G_B^2)}{m^2\sigma_p^2}, \qquad \forall i\in [n].
\end{equation*}
Combining the above bound and Proposition \ref{prop:tail}, to guarantee Algorithm \ref{alg:soteriafl} to be
$(\epsilon, \delta)$-LDP, it suffices to establish that there exists some $\lambda$ that satisfies \eqref{eq:req3} and the following two conditions:
\begin{align}
      \hat c \frac{T\lambda^2(G_A^2/4 + G_B^2)}{m^2\sigma_p^2} &\le \frac{\lambda\epsilon}{2} \qquad \mbox{or equivalently}\quad \lambda \le \frac{\epsilon m^2\sigma_p^2}{2 \hat c T(G_A^2/4 + G_B^2)}, \label{eq:req1}\\
    \exp\left(-\frac{\lambda\epsilon}{2}\right) & \le \delta \qquad \mbox{or equivalently}\quad \lambda \ge \frac{2}{\epsilon}\log\frac{1}{\delta}. \label{eq:req2}
\end{align}
It is now easy to verify that when $\epsilon =c'q^2 T$ for some constant $c'$, we can satisfy
all these conditions by setting
\[\sigma_p^2 = c\frac{(G_A^2/4+G_B^2)T\log (1/\delta)}{m^2\epsilon^2}\]
for some constant $c$.
\end{proof}

\subsection{Proof of Lemma \ref{lem:composition}}
\label{sec:composition-2}

Before embarking on the proof of Lemma \ref{lem:composition}, we begin with an observation that connects the log moment generation function with the R\'enyi divergence of distributions $\Pr\{\gM(\text{aux},D) = o\}$ and $\Pr\{\gM(\text{aux},D') = o\}$.

\begin{lemma}\label{lem:mgf-renyi}
Denote the R\'enyi divergence between any two distributions $\gP$ and $\gQ$ with parameter $\lambda +1$ as
\[D_{\lambda+1}^{\text{R\'enyi}}(\gP\|\gQ) = \frac{1}{\lambda}\log\E_{\gP}\left(\frac{\dd \gP}{\dd \gQ}\right)^{\lambda}.\]
Then, the log moment generating function has the following form
    \begin{equation}
        \alpha^{\gM}(\lambda; \text{aux}, D, D') = \lambda D^{\text{R\'enyi}}_{\lambda+1}\left(\gM(\text{aux}, D) \| \gM(\text{aux}, D')\right).
    \end{equation}
\end{lemma}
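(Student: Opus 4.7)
The proof is a direct unfolding of definitions, so my plan is simply to align the two expressions by substitution. Let me write $\gP := \gM(\text{aux}, D)$ and $\gQ := \gM(\text{aux}, D')$ for brevity, so that the privacy loss random variable becomes $c(o) = \log(\dd\gP/\dd\gQ)(o)$.

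The key step is to observe that exponentiating $\lambda c(o)$ converts the logarithm into a power:
\begin{equation*}
\exp(\lambda \cdot c(o)) = \exp\!\left(\lambda \log \frac{\dd \gP}{\dd \gQ}(o)\right) = \left(\frac{\dd \gP}{\dd \gQ}(o)\right)^{\lambda}.
\end{equation*}
Taking expectation under $o \sim \gP$ and then the logarithm gives
\begin{equation*}
\alpha^{\gM}(\lambda; \text{aux}, D, D') = \log \E_{o\sim\gP}\!\left[\left(\tfrac{\dd \gP}{\dd \gQ}(o)\right)^{\lambda}\right],
\end{equation*}
which is precisely $\lambda$ times the Rényi divergence of order $\lambda+1$ of $\gP$ from $\gQ$, as defined in the lemma statement.

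Thus the plan is: (i) rewrite $c(o)$ as $\log(\dd\gP/\dd\gQ)(o)$ (valid on the support of $\gP$, which is absolutely continuous with respect to $\gQ$ whenever the loss is finite); (ii) apply $\exp$ inside the expectation to turn the log into a power; (iii) match the resulting expression against the Rényi-divergence definition and pull out the factor $\lambda$. There is no real obstacle here — the only mild subtlety is handling the case where $\gP$ is not absolutely continuous with respect to $\gQ$, in which case both sides are $+\infty$ by the usual convention, and the equality still holds trivially. The lemma will then feed directly into the proof of Lemma~\ref{lem:composition}, where the additivity of $\alpha^{\gM}(\lambda)$ over adaptively composed mechanisms follows from the well-known additivity of Rényi divergence under product (conditional) distributions.
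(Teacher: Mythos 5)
Your proof is correct and follows essentially the same route as the paper's: both unfold the definition of $\alpha^{\gM}(\lambda;\text{aux},D,D')$, exponentiate $\lambda\cdot c(o)$ to turn the log-likelihood ratio into the $\lambda$-th power of $\dd\gP/\dd\gQ$, and identify the result with $\lambda D^{\text{R\'enyi}}_{\lambda+1}(\gP\|\gQ)$. Your added remark on absolute continuity (both sides being $+\infty$ otherwise) is a harmless refinement the paper omits.
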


\begin{proof}[Proof of Lemma \ref{lem:mgf-renyi}]
By direct computation, we have
\begin{align*}
    \alpha^{\gM}(\lambda; \text{aux}, D, D')
    &= \log \E_{o\sim \gM(\text{aux}, D)}\left[\exp\left(\lambda\cdot c(o; \gM, \text{aux}, D, D')\right)\right] \\
    &= \log \E_{o\sim \gM(\text{aux}, D)}\left[\exp\left(\lambda\cdot \log \frac{\Pr\{\gM(\text{aux},D) = o\}}{\Pr\{\gM(\text{aux}, D') = o\}}\right)\right] \\
    &= \log \E_{o\sim \gM(\text{aux}, D)}\left[\left( \frac{\Pr\{\gM(\text{aux},D) = o\}}{\Pr\{\gM(\text{aux}, D') = o\}}\right)^{\lambda}\right] \\
    &= \lambda D^{\text{R\'enyi}}_{\lambda+1}\left(\gM(\text{aux}, D) \| \gM(\text{aux}, D')\right).
\end{align*}
\end{proof}

We will also need the following data processing inequality for R\'enyi divergence.

\begin{restatable}[Data processing inequality for R\'enyi divergence~\citep{van2014renyi}]{lemma}{lemdpi}\label{lem:dpi}
    Let $\gP, \gQ$ be two distributions over $\gR$, $\gS:\gR\to\gR$ be a random mapping, and $D^{\text{R\'enyi}}_{\lambda}$ denote the R\'enyi Divergence, then we have
    \begin{equation*}
        D^{\text{R\'enyi}}_{\lambda+1}(\gS(\gP) \| \gS(\gQ)) \le D^{\text{R\'enyi}}_{\lambda+1}(\gP \| \gQ),
    \end{equation*}
    where $\gS(\gP)$ stands for the resulting distribution of applying random mapping $\gS$ on distribution $\gP$.
\end{restatable}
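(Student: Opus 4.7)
This is the classical data processing inequality for R\'enyi divergence, and the plan is to derive it from Jensen's inequality applied to the convex function $\phi(u)=u^{\lambda+1}$. Using the equivalent representation
$$D_{\lambda+1}^{\text{R\'enyi}}(\gP\|\gQ) \;=\; \tfrac{1}{\lambda}\log\E_\gQ\!\left[\big(d\gP/d\gQ\big)^{\lambda+1}\right],$$
and the fact that $\tfrac{1}{\lambda}\log(\cdot)$ is monotone increasing for $\lambda>0$, the entire claim reduces to establishing the unnormalized ``Hellinger-integral'' inequality
$$\E_{\gS(\gQ)}\!\left[\big(d\gS(\gP)/d\gS(\gQ)\big)^{\lambda+1}\right] \;\le\; \E_\gQ\!\left[\big(d\gP/d\gQ\big)^{\lambda+1}\right].$$

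The steps would proceed as follows. First, model the random map $\gS$ by a Markov transition kernel $K(\cdot\mid x)$, so that $\gS(\gP)(y)=\int K(y\mid x)\,d\gP(x)$ and similarly for $\gS(\gQ)$. Second, establish the crucial identity that, with $X\sim\gQ$ and $Y=\gS(X)$,
$$\frac{d\gS(\gP)}{d\gS(\gQ)}(y) \;=\; \E_\gQ\!\left[\tfrac{d\gP}{d\gQ}(X)\,\Big|\,Y=y\right],$$
since the density ratio at $y$ is precisely the $K(y\mid\cdot)$-weighted average of $d\gP/d\gQ$ across preimages $x$. Third, apply Jensen's inequality pointwise in $y$ to this conditional expectation with the convex map $u\mapsto u^{\lambda+1}$ (convex because $\lambda+1\ge 1$). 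Finally, integrate against $\gS(\gQ)$ and invoke the tower property; the right-hand side collapses to $\E_\gQ[(d\gP/d\gQ)^{\lambda+1}]$, which, after taking $\tfrac{1}{\lambda}\log(\cdot)$, is exactly the stated inequality.

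The main obstacle is not the algebra but the measure-theoretic bookkeeping: one has to justify the conditional-expectation representation via a regular conditional distribution of $X$ given $Y$, and verify the absolute continuity $\gS(\gP)\ll\gS(\gQ)$. Both go through whenever the output space is Polish---which covers every case relevant for \soteriafl, in particular $\gR=\R^d$ for the Gaussian-perturbation-plus-compression mechanism---and $\gP\ll\gQ$ is a standing hypothesis for the R\'enyi divergence to be finite in the first place. Since the statement is a well-known classical result (originally due to van Erven and Harremo\"es), an equally acceptable alternative is to omit the proof and cite \citep{van2014renyi} directly.
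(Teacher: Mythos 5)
Your proposal is correct, but note that the paper itself does not prove this lemma at all: it is imported verbatim from the literature with the citation \citep{van2014renyi}, exactly as the closing sentence of your proposal anticipates. Your Jensen-plus-Markov-kernel argument is the standard proof of the result you would find in that reference: write $\gS$ as a kernel $K(\cdot\mid x)$, verify via test functions that $\frac{d\,\gS(\gP)}{d\,\gS(\gQ)}(y)=\E_{\gQ}\big[\tfrac{d\gP}{d\gQ}(X)\,\big|\,Y=y\big]$ (which also yields $\gS(\gP)\ll\gS(\gQ)$ for free, since the conditional expectation is nonnegative), apply conditional Jensen with the convex map $u\mapsto u^{\lambda+1}$, integrate, and use monotonicity of $\tfrac{1}{\lambda}\log(\cdot)$ for $\lambda>0$ (in the paper's application $\lambda$ is a positive integer, so this is not an issue). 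Two minor simplifications: you do not actually need Polish spaces or a regular conditional distribution of $X$ given $Y$ --- the identity only requires the conditional expectation of the $\gQ$-integrable variable $\tfrac{d\gP}{d\gQ}(X)$ given $\sigma(Y)$, which exists by Radon--Nikodym on any probability space; and the case $\gP\not\ll\gQ$ need not be excluded by hypothesis, since then the right-hand side is $+\infty$ for orders $\lambda+1>1$ and the inequality is vacuous. So your write-up is a sound, self-contained substitute for the citation, at the cost of some measure-theoretic bookkeeping the paper chose to avoid by deferring to \citep{van2014renyi}.
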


\begin{proof}[Proof of Lemma \ref{lem:composition}]
We divide the proof of Lemma \ref{lem:composition} into two steps: 1) $\alpha_i^{\gM}(\lambda) \le \sum_{t=1}^T\alpha^{\gM_i^t}(\lambda)$; and 2) $\alpha^{\gM_i^t}(\lambda) \le \alpha^{\overline{\gM}_i^t}(\lambda)$. Combining these two steps directly leads to the declared bound, namely
    \begin{align*}
        \alpha_i^{\gM}(\lambda) & \le \sum_{t=1}^T\alpha^{\gM_i^t}(\lambda)  \le \sum_{t=1}^T \alpha^{\overline{\gM}_i^t}(\lambda).
    \end{align*}

The rest of this proof is thus dedicated to establishing the two steps. For simplicity, we use $o_{1:n}^{1:T}$ to denote the outcomes $\{o_i^t\}_{i\in [n], t\in [T]}$, and $\gM_{1:n}^{1:T}$ to denote the mechanisms $\{\gM^t_i\}_{i\in [n], t\in [T]}$.

    \paragraph{Step 1: establishing $\alpha_i^{\gM}(\lambda) \le \sum_{t=1}^T\alpha^{\gM_i^t}(\lambda)$.} For neighboring datasets $D = (D_{-i}, D_i), D' = (D_{-i}, D'_i)$ that differ only on client $i$, we have
    \begin{align*}
        c(o_{1:n}^{1:T}; \gM_{1:n}^{1:T}, o_{1:n}^{1:T-1}, D, D') &= \log \frac{\Pr\{\gM_{1:n}^{1:T}(o_{1:n}^{1:T-1},D) = o_{1:n}^{1:T}\}}{\Pr\{\gM_{1:n}^{1:T}(o_{1:n}^{1:T-1},D') = o_{1:n}^{1:T}\}} \\
      &  = \log \prod_{t=1}^T\prod_{j=1}^n\frac{\Pr\{\gM_j^{t}(o_{1:n}^{1:T-1},D) = o_j^{t}\}}{\Pr\{\gM_j^{t}(o_{1:n}^{1:T-1},D') = o_j^{t}\}} \\
      &  = \log \prod_{t=1}^T\frac{\Pr\{\gM_i^{t}(o_{1:n}^{1:T-1},D) = o_i^{t}\}}{\Pr\{\gM_i^{t}(o_{1:n}^{1:T-1},D') = o_i^{t}\}} \\
      &  = \sum_{t=1}^T\log \frac{\Pr\{\gM_i^{t}(o_{1:n}^{1:T-1},D) = o_i^{t}\}}{\Pr\{\gM_i^{t}(o_{1:n}^{1:T-1},D') = o_i^{t}\}} \\
      &  = \sum_{t=1}^T c(o_{i}^{t}; \gM_{i}^{t}, o_{1:n}^{1:T-1}, D, D').
    \end{align*}
    Here, the second line comes from the fact that the mechanisms of different clients at the same round are independent, and the third line comes from the fact that for any client $j\neq i$, $D_j = D'_j$, and thus $\frac{\Pr\{\gM_j^{t}(o_{1:n}^{1:T-1},D) = o_j^{t}\}}{\Pr\{\gM_j^{t}(o_{1:n}^{1:T-1},D') = o_j^{t}\}} = 1$.
    Then we have
    \begin{align*}
        \E_{o_{1:n}^{1:T}\sim\gM_{1:n}^{1:T}}\left[\exp\left(\lambda c(o_{1:n}^{1:T}; \gM_{1:n}^{1:T}, o_{1:n}^{1:T-1}, D, D') \right)\right] 
        &= \E_{o_{1:n}^{1:T}\sim\gM_{1:n}^{1:T}}\left[\exp\left(\lambda \sum_{t=1}^T c(o_{i}^{t}; \gM_{i}^{t}, o_{1:n}^{1:T-1}, D, D') \right)\right] \\
        &= \E_{o_{1:n}^{1:T}\sim\gM_{1:n}^{1:T}}\left[\prod_{t=1}^T\exp\left(\lambda  c(o_{i}^{t}; \gM_{i}^{t}, o_{1:n}^{1:T-1}, D, D') \right)\right] \\
        &= \prod_{t=1}^T\E_{o_{1:n}^{1:T}\sim\gM_{1:n}^{1:T}}\left[\exp\left(\lambda  c(o_{i}^{t}; \gM_{i}^{t}, o_{1:n}^{1:T-1}, D, D') \right)\right] \\
        &= \prod_{t=1}^T\exp\left(\alpha^{\gM_i^t}(\lambda; o_{1:n}^{1:T-1},D,D')\right) \\
        &= \exp\left(\sum_{t=1}^T\alpha^{\gM_i^t}(\lambda; o_{1:n}^{1:T-1},D,D')\right).
    \end{align*}
    Taking logarithm on both sides and maximizing over $o_{1:n}^{1:T-1}, D, D'$, we can show that
    \[\alpha_i^{\gM}(\lambda) \le \sum_{t=1}^T\alpha^{\gM_i^t}(\lambda).\]
    
    \paragraph{Step 2: establishing $\alpha^{\gM_i^t}(\lambda) \le \alpha^{\overline{\gM}_i^t}(\lambda)$.}  This step follows directly from Lemma \ref{lem:dpi}. Namely, for fixed $i$ and $t$, we can compute
    \begin{align*}
        \alpha^{\gM_i^t}(\lambda; o_{1:n}^{1:T-1},D,D') &= \lambda D^{\text{R\'enyi}}_{\lambda+1}\left(\gM_i^t(o_{1:n}^{1:T-1}, D) \| \gM_i^t(o_{1:n}^{1:T-1}, D')\right) \\
       & = \lambda D^{\text{R\'enyi}}_{\lambda+1}\left((\gA\circ \overline{\gM}_i^t)(o_{1:n}^{1:T-1}, D) \| (\gA\circ \overline{\gM}_i^t)(o_{1:n}^{1:T-1}, D')\right) \\
       & \le \lambda D^{\text{R\'enyi}}_{\lambda+1}\left( \overline{\gM}_i^t(o_{1:n}^{1:T-1}, D) \|  \overline{\gM}_i^t(o_{1:n}^{1:T-1}, D')\right) \\
       & = \alpha^{\overline{\gM}_i^t}(\lambda; o_{1:n}^{1:T-1},D,D').
    \end{align*}
    Then, taking the maximum over $o_{1:n}^{1:T-1}, D, D'$, we have
    \[\alpha^{\gM_i^t}(\lambda) \le \alpha^{\overline{\gM}_i^t}(\lambda).\]
\end{proof}

\subsection{Proof of Lemma \ref{lem:sub}}
\label{sec:sub-3}

It is worth noting that the proof does not requires $f(\vx; d_{i,j})$ to be a function with respect to the data sample $d_{i,j}$ at point $\vx$, it can be any function related to $d_{i,j}$, for example, $\phi_{i,j}$ in Assumption \ref{ass:unified}.
Inspired by \citep{wang2017differentially}, we decompose the gradient estimator into two parts and bound the privacy respectively. 
Now we provide the detailed proofs below.
\begin{proof}[Proof of Lemma \ref{lem:sub}]
    From Assumption \ref{ass:unified}, we first write out and decouple the sub-mechanism $\overline{\gM}_i^t$ (corresponding to the Gaussian perturbation) as
    \begin{equation}\label{eq:mec-decomposition}
        \frac{1}{b} \sum_{j\in \gI_b} \varphi_{i,j}^t + \frac{1}{m} \sum_{j=1}^m \psi_{i,j}^t + \vxi_i^t = \left(\frac{1}{b} \sum_{j\in \gI_b} \varphi_{i,j}^t + \vxi_{i,1}^t\right) + \left(\frac{1}{m} \sum_{j=1}^m \psi_{i,j}^t + \vxi_{i,2}^t\right),
    \end{equation}
    where $\vxi_i^t$ is generated from $\gN(0,\sigma_p^2\mI)$ and $\vxi_{i,1}^t,\vxi_{2,1}^t$ are generated from $\gN(0,\frac{2\sigma_p^2}{3}\mI)$, $\gN(0,\frac{\sigma_p^2}{3}\mI)$ independently. Now, $\overline{\gM}_i^t$ can be viewed as a composition of two mechanisms $\overline{\gM}_{i,1}^t$ and $\overline{\gM}_{i,2}^t$, where $\overline{\gM}_{i,1}^t$ denote the first term and $\overline{\gM}_{i,2}^t$ denote the second term in the right-hand-side (RHS) of \eqref{eq:mec-decomposition}. From \citep[Theorem 2.1]{abadi2016deep}, we have
    \begin{align}
        \alpha^{\overline{\gM}_i^t}(\lambda) \le \alpha^{\overline{\gM}_{i,1}^t}(\lambda) + \alpha^{\overline{\gM}_{i,2}^t}(\lambda). \label{eq:alpha-first}
    \end{align}
    For the first term of \eqref{eq:alpha-first}, according to \citep[Lemma 3]{abadi2016deep}, we have
    \begin{align}
        \alpha^{\overline{\gM}_{i,1}^t}(\lambda) \le \frac{3 \lambda(\lambda+1)G_A^2}{2(1-q)m^2 \sigma_p^2} + O\left(\frac{q^3\lambda^3}{\sigma_p^3}\right), \label{eq:alpha-1}
    \end{align}
    for $q = \frac bm < \frac{G_A}{16\sigma_p b}$ and any positive integer $\lambda \le \frac{2b^2\sigma_p^2}{3G_A^2}\log\frac{G_A}{q\sigma_p b}$, where we set $\sigma^2$ in  \citep[Lemma 3]{abadi2016deep} to be $\frac{2b^2 \sigma_p^2}{3 G_A^2}$.
    
    For the second term of \eqref{eq:alpha-first}, according to Lemma~\ref{lem:mgf-renyi} (the relationship between R\'enyi divergence and the moment generating function), we have
    \begin{align}
        \alpha^{\overline{\gM}_{i,2}^t}(\lambda) = \lambda D_{\lambda+1}^{\text{R\'enyi}}(\gP\|\gQ), \label{eq:alpha-2}
    \end{align}
    where $\gP = \frac{1}{m} \sum_{j=1}^m \psi_{i,j}^t + \gN(0,\frac{2\sigma_p^2}{3}\mI)$ and $\gQ = \frac{1}{m} \sum_{j=1}^m (\psi_{i,j}^t)' + \gN(0,\frac{\sigma_p^2}{3}\mI)$. Here, $\{\psi_{i,j}^t,\; j\in [m]\}$ contains the functions corresponding to the data in dataset $D$, and $\{(\psi_{i,j}^t)',\; j\in [m]\}$ contains the functions corresponding to the data in dataset $D'$. We note that all functions except one in $\{\psi_{i,j}^t,\; j\in [m]\}$ and $\{(\psi_{i,j}^t)',\; j\in [m]\}$ are the same, since the datasets $D$ and $D'$ only differ by one element. According to \citep[Lemma 17]{bun2016concentrated}, we have 
    \begin{align}
        \lambda D_{\lambda+1}^{\text{R\'enyi}}(\gP\|\gQ) = \frac{3\lambda(\lambda+1) \norm{\frac{1}{m} \sum_{j=1}^m \psi_{i,j}^t - \frac{1}{m} \sum_{j=1}^m (\psi_{i,j}^t)'}^2}{2\sigma_p^2} \le \frac{6\lambda(\lambda+1)G_B^2}{m^2\sigma_p^2}. \label{eq:alpha-last}
    \end{align}
    The proof is finished by combining \eqref{eq:alpha-first}--\eqref{eq:alpha-last}.
\end{proof}

\section{Proof of Theorem~\ref{thm:utility}}
\label{sec:proof-utility}
We now provide the detailed proofs for our unified Theorem~\ref{thm:utility}.
First, according to the update rule $\xtn = \xt - \etat \vvt$ (Line~\ref{line:update-soteriafl} in Algorithm~\ref{alg:soteriafl}) and the smoothness assumption (Assumption~\ref{ass:smoothness}), we have 
\begin{align}
	\Et{f(\xtn)} \leq \EtB{f(\xt) - \etat \inner{\nabla f(\xt)}{\vvt} + \frac{L\etat^2}{2}\ns{\vvt}}, \label{eq:first}
\end{align}
where $\E_t$ takes the expectation conditioned on all history before round $t$.
To begin, we show that $\vvt$ is unbiased as follows:
\begin{align}
	\Et{\vvt} 
	= \EtB{\st + \frac{1}{n}\sumn\vit} 
	&= \EtB{\frac{1}{n}\sumn \sit + \frac{1}{n}\sumn \cit(\git - \sit)} \notag\\
	&= \EtB{\frac{1}{n}\sumn \git} 
	= \EtB{\frac{1}{n}\sumn (\tgit + \xit ) }  \label{eq:usecompression} \\
	&= \EtB{\frac{1}{n}\sumn \tgit}   \label{eq:usenoise} \\
	&= \EtB{\frac{1}{n}\sumn \nabla f_i(\xt)} = \nabla f(\xt), \label{eq:useunbiase}   
\end{align}
where \eqref{eq:usecompression} follows from \eqref{eq:comp}, \eqref{eq:usenoise} holds due to $\vxi_i^{t} \sim \gN(\bm{0},\sigma_p^2\mI)$, and \eqref{eq:useunbiase} is due to $\E_t[\tilde{\vg}_i^t] = \nabla f_i(\vx^t)$ from Assumption~\ref{ass:unified}.

Plugging \eqref{eq:usenoise} into \eqref{eq:first}, we get 
\begin{align}
	\Et{f(\xtn)} \leq \EtB{f(\xt) - \etat \ns{\nabla f(\xt)} + \frac{L\etat^2}{2}\ns{\vvt}}. \label{eq:variance-remained}
\end{align}
We then bound the last term $\Et{\ns{\vvt}}$ in the follow lemma, whose proof is provided in Appendix~\ref{sec:proof_vvt}.

\begin{lemma}\label{lem:vvt}
	Suppose that $\vvt$ is defined and computed in Algorithm~\ref{alg:soteriafl}, we have 
	\begin{align}
		\Et{\ns{\vvt}} 
		&\leq \EtB{\frac{(1+\omega)}{n^2}\sumn \ns{\tgit - \nabla f_i(\xt)}} +\frac{\omega}{n^2}\sumn \ns{\nabla f_i(\xt)-\sit} +\ns{\nabla f(\xt)} + \frac{(1+\omega)d \sigp}{n}. \label{eq:vvt}
	\end{align}
\end{lemma}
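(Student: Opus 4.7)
\noindent\textbf{Proof proposal for Lemma~\ref{lem:vvt}.}

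The plan is to exploit the fact that $\vv^t$ is an unbiased estimator of $\nabla f(\vx^t)$, as already established in \eqref{eq:useunbiase}, and then bound the variance by decomposing it into per-client contributions. Specifically, first I would write
\[
\Et{\nsB{\vvt}} = \nsB{\nabla f(\xt)} + \Et{\nsB{\vvt - \nabla f(\xt)}},
\]
using the standard bias-variance identity (since $\Et{\vvt}=\nabla f(\xt)$). This already produces the $\nsB{\nabla f(\xt)}$ term appearing on the RHS of \eqref{eq:vvt}, so the remaining work is to bound the variance term.

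Next, using the formula $\vvt = \frac{1}{n}\sumn \big[\sit + \cit(\git - \sit)\big]$, I would introduce the per-client random variables
\[
Z_i \;=\; \sit + \cit(\git - \sit) - \nabla f_i(\xt),
\]
so that $\vvt - \nabla f(\xt) = \frac{1}{n}\sumn Z_i$. A direct computation (using that $\sit$ is history-measurable, $\Et{\cit(\cdot)} = \cdot$ by \eqref{eq:comp}, $\Et{\xit}=0$, and $\Et{\tgit} = \nabla f_i(\xt)$) shows $\Et{Z_i}=0$. Crucially, the minibatch sampling, Gaussian perturbations, and compression operators are independent across clients, so the $Z_i$'s are mutually independent conditioned on the history, yielding
\[
\Et{\nsB{\vvt - \nabla f(\xt)}} = \frac{1}{n^2}\sumn \Et{\nsB{Z_i}}.
\]

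The main computation is then bounding $\Et{\nsB{Z_i}}$. I would first integrate out the compression by conditioning on $(\git, \sit)$: by the bias-variance identity and the second part of \eqref{eq:comp},
\[
\mathbb{E}_{\cit}\nsB{Z_i} = \nsB{\git - \nabla f_i(\xt)} + \mathbb{E}_{\cit}\nsB{\cit(\git - \sit) - (\git - \sit)} \le \nsB{\git - \nabla f_i(\xt)} + \omega\nsB{\git - \sit}.
\]
Next, using $\git = \tgit + \xit$ with $\xit \sim \gN(\bm 0,\sigp\mI)$ independent of $\tgit$, the cross terms vanish and I get $\Et{\nsB{\git-\nabla f_i(\xt)}} = \Et{\nsB{\tgit-\nabla f_i(\xt)}} + d\sigp$ and $\Et{\nsB{\git - \sit}} = \Et{\nsB{\tgit-\sit}} + d\sigp$. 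Finally, applying the same bias-variance decomposition to $\Et{\nsB{\tgit - \sit}}$ around $\nabla f_i(\xt)$ gives $\Et{\nsB{\tgit-\sit}} = \Et{\nsB{\tgit-\nabla f_i(\xt)}} + \nsB{\nabla f_i(\xt)-\sit}$. Combining these and summing $\tfrac{1}{n^2}\sumn$ yields exactly the four terms claimed in \eqref{eq:vvt}.

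The whole argument is essentially bookkeeping; the main thing to be careful about is the order of conditional expectations---in particular, cleanly separating the four independent sources of randomness (history, minibatch, Gaussian noise, compression) and ensuring that the cross terms vanish at each step. Once the $Z_i$ decomposition is set up, the rest is a tower of standard variance inequalities, and no new analytical ideas are required.
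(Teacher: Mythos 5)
Your proposal is correct and, at its core, is the same argument as the paper's: both rest on the unbiasedness and the $\omega$-variance bound of the compressor, the independent Gaussian noise contributing $d\sigma_p^2$ per coordinate, the unbiasedness of $\tilde{\vg}_i^t$, and conditional independence across clients, combined through a tower of bias--variance identities. The only difference is bookkeeping order: you pull out $\ns{\nabla f(\vx^t)}$ first and work per-client via the $Z_i$'s, whereas the paper first splits off the aggregate compression error $\frac{1}{n}\sum_i\big(\gC_i^t(\vg_i^t-\vs_i^t)-(\vg_i^t-\vs_i^t)\big)$ and then decomposes the two resulting terms; both yield exactly the four terms of \eqref{eq:vvt}, and there is no gap.
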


To continue, we need to bound the first two terms in \eqref{eq:vvt}.
The first term can be controlled via \eqref{ass:var-g} of Assumption~\ref{ass:unified}.
Now we show that the second term will shrink in the following lemma, whose proof is provided in Appendix~\ref{sec:proof_shift}.

\begin{lemma}\label{lem:shift}
	Suppose that Assumption~\ref{ass:smoothness} holds and the shift $\sitn$ is defined and computed in Algorithm~\ref{alg:soteriafl}. Then letting $\gamma_t=\sqrt{\frac{1+2\omega}{2(1+\omega)^3}}$, we have 
	\begin{align}
		\EtB{\frac{1}{n}\sumn \ns{\nabla f_i(\xtn)-\sitn}} 
		&\leq \E_t\bigg[\Big(1-\frac{1}{2(1+\omega)}\Big)\frac{1}{n}\sumn \nsB{\nabla f_i(\xt)-\sit}  \notag\\ 
		&\qquad\quad + \frac{1}{(1+\omega)n}\sumn \ns{\tgit-\nabla f_i(\xt)} \notag\\
		&\qquad\quad + 2(1+\omega)L^2\nsB{\xtn- \xt}   
		+ \frac{d\sigp}{1+\omega} \Big)\bigg].  \label{eq:shift}
	\end{align}
\end{lemma}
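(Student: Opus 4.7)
}
The plan is to bound $\E_t\ns{\nabla f_i(\xtn)-\sitn}$ in three stages: first decoupling the gradient drift due to moving from $\xt$ to $\xtn$, next analyzing the shift update at the fixed point $\xt$, and finally tuning the two free constants ($\gamma_t$ and a Young-inequality parameter) so that the coefficients coincide with the target. Throughout, I will work with a fixed client $i$ and only average over $i$ at the very end, which is clean because both sides of \eqref{eq:shift} are averages.

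First, I would split $\nabla f_i(\xtn)-\sitn = (\nabla f_i(\xtn)-\nabla f_i(\xt)) + (\nabla f_i(\xt)-\sitn)$ and apply Young's inequality with a free parameter $s>0$, getting
\[
\ns{\nabla f_i(\xtn)-\sitn} \leq (1+s)\ns{\nabla f_i(\xt)-\sitn} + (1+1/s)\ns{\nabla f_i(\xtn)-\nabla f_i(\xt)}.
\]
By $L$-smoothness (Assumption~\ref{ass:smoothness}), the last term is at most $(1+1/s)L^2\ns{\xtn-\xt}$. I will choose $s=\frac{1}{1+2\omega}$ so that $1+1/s=2(1+\omega)$, matching the coefficient $2(1+\omega)L^2\ns{\xtn-\xt}$ in \eqref{eq:shift}.

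Next, I would compute $\E_t\ns{\nabla f_i(\xt)-\sitn}$. Expanding $\sitn=\sit+\gamma_t\cit(\git-\sit)$ and using the bias/variance decomposition conditional on round $t$:
\[
\E_t\ns{(\nabla f_i(\xt)-\sit)-\gamma_t\cit(\git-\sit)} = (1-2\gamma_t)\ns{\nabla f_i(\xt)-\sit} + \gamma_t^2\,\E_t\ns{\cit(\git-\sit)},
\]
where I used $\E_t[\cit(\git-\sit)]=\E_t[\git-\sit]=\nabla f_i(\xt)-\sit$ (compression is unbiased, $\vxi_i^t$ has zero mean, and $\tgit$ is unbiased by Assumption~\ref{ass:unified}). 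For the second term, by Definition~\ref{def:comp} and the tower property, $\E_t\ns{\cit(\git-\sit)}\leq (1+\omega)\E_t\ns{\git-\sit}$. A further expansion using independence of $\xit\sim\gN(0,\sigp\mI)$ and $\tgit$, plus unbiasedness of $\tgit$, yields
\[
\E_t\ns{\git-\sit} = \ns{\nabla f_i(\xt)-\sit} + \E_t\ns{\tgit-\nabla f_i(\xt)} + d\sigp.
\]
Collecting terms, I obtain
\[
\E_t\ns{\nabla f_i(\xt)-\sitn} \leq \bigl[1-2\gamma_t+(1+\omega)\gamma_t^2\bigr]\ns{\nabla f_i(\xt)-\sit} + (1+\omega)\gamma_t^2\bigl[\E_t\ns{\tgit-\nabla f_i(\xt)}+d\sigp\bigr].
\]

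Multiplying by $1+s=\frac{2(1+\omega)}{1+2\omega}$ and matching coefficients forces $(1+s)(1+\omega)\gamma_t^2=\frac{1}{1+\omega}$, giving $\gamma_t^2=\frac{1+2\omega}{2(1+\omega)^3}$ --- exactly the prescribed shift stepsize. The remaining task is to verify $(1+s)\bigl[1-2\gamma_t+(1+\omega)\gamma_t^2\bigr]\leq 1-\frac{1}{2(1+\omega)}$, which after substitution reduces to the clean inequality $r+\tfrac{3}{2r}\leq 2\sqrt{2}$ with $r=\sqrt{(1+\omega)/(1+2\omega)}\in[\tfrac{1}{\sqrt{2}},1]$; this is where the delicate algebraic tuning lives and is the main obstacle, since the choice of $\gamma_t$ is forced by the variance term while the contraction coefficient must be checked a posteriori. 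Finally, averaging the per-$i$ bound over $i\in[n]$ gives \eqref{eq:shift}.
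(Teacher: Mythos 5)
Your proposal is correct and follows essentially the same route as the paper's proof: the same split of $\nabla f_i(\xtn)-\sitn$ via Young's inequality with parameter $\tfrac{1}{1+2\omega}$, the same unbiasedness/variance expansion of the shifted-compression step using Definition~\ref{def:comp} and the Gaussian noise, and the same choice $\gamma_t=\sqrt{\tfrac{1+2\omega}{2(1+\omega)^3}}$. The only difference is that you spell out the final check that $(1+s)\bigl[1-2\gamma_t+(1+\omega)\gamma_t^2\bigr]\le 1-\tfrac{1}{2(1+\omega)}$ (your reduction to $r+\tfrac{3}{2r}\le 2\sqrt{2}$ is valid), a verification the paper leaves implicit in its parameter choice.
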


To facilitate presentation, let us introduce the short-hand notation $\gSt:=\frac{1}{n}\cst$. Then we define the following potential function
\begin{align}\label{eq:def-potential}
	\Phi_t := f(\xt) - f^* + \alpha L\Deltat + \frac{\beta}{L}\gSt,
\end{align}
for some $\alpha\geq 0, \beta \geq 0$. With the help of Lemmas~\ref{lem:vvt} and \ref{lem:shift},
we show that this potential function decreases in each round in the following lemma, whose proof is provided in Appendix~\ref{sec:proof_potential}.
\begin{lemma}\label{lem:potential}
	Under Assumptions~\ref{ass:smoothness} and \ref{ass:unified}, if we choose the stepsize as
	\begin{align*}
	\eta_t \equiv \eta \leq \min\left\{ \frac{1}{(1+2\alpha C_4 + 4\beta(1+\omega) + 2\alpha C_3/\eta^2)L}, 
	\frac{\sqrt{\beta n}}{\sqrt{1+2\alpha C_4 + 4\beta(1+\omega)}(1+\omega)L}\right\},
	\end{align*}
	where $\alpha = \frac{3\beta C_1}{2(1+\omega)\theta L^2}$,  $\forall \beta>0$, and the shift stepsize as $\gammat\equiv \sqrt{\frac{1+2\omega}{2(1+\omega)^3}}$, then we have for any round $t\geq 0$,
	\begin{align}
		\Et{\Phi_{t+1}} \leq \Phi_t - \frac{\etat}{2}\ns{\nabla f(\xt)} + \frac{3\beta}{2(1+\omega)L}(C_2 + d\sigp). \label{eq:potential}
	\end{align}
\end{lemma}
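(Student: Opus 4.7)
The plan is to control each of the three components of $\Phi_{t+1} = f(\vx^{t+1}) - f^\ast + \alpha L \Delta^{t+1} + \frac{\beta}{L}\gS^{t+1}$ in conditional expectation and then collect terms. For the function-value part, apply \eqref{eq:variance-remained} and substitute Lemma~\ref{lem:vvt} combined with \eqref{ass:var-g} (which converts $\Et{\frac{1}{n}\sum_i \ns{\tgit - \nabla f_i(\xt)}}$ into $C_1\Deltat + C_2$) to get an upper bound involving $\|\nabla f(\xt)\|^2$, $\Deltat$, $\gSt$, and the noise contribution $\frac{(1+\omega)d\sigp}{n}$. For the $\alpha L \Deltatn$ part, use \eqref{ass:var-Delta} together with $\xtn-\xt = -\eta\vvt$ and the same $\Et{\ns{\vvt}}$ bound from Lemma~\ref{lem:vvt}. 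For the $\frac{\beta}{L}\gStn$ part, apply Lemma~\ref{lem:shift} with the prescribed $\gammat$ and, again, use \eqref{ass:var-g} to process the variance term appearing there.

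Abbreviating the prefactor that multiplies $\Et{\ns{\vvt}}$ in the sum as $A := \tfrac{L\eta^2}{2}\bigl(1 + 2\alpha C_4 + 4\beta(1+\omega)\bigr)$, after collection one obtains an inequality of the shape
\begin{align*}
\Et{\Phi_{t+1}} &\le \Phi_t + \Bigl(-\eta + \alpha L C_3 + A\Bigr)\ns{\nabla f(\xt)} + \Bigl(-\alpha L\theta + \tfrac{\beta C_1}{L(1+\omega)} + A\tfrac{(1+\omega)C_1}{n}\Bigr)\Deltat \\
&\quad + \Bigl(-\tfrac{\beta}{2L(1+\omega)} + A\tfrac{\omega}{n}\Bigr)\gSt + \Bigl(\tfrac{\beta}{L(1+\omega)} + A\tfrac{1+\omega}{n}\Bigr)(C_2 + d\sigp).
\end{align*}
The claim will follow once (i) the $\ns{\nabla f(\xt)}$-coefficient is $\le -\eta/2$, (ii) the $\Deltat$-coefficient is $\le 0$, (iii) the $\gSt$-coefficient is $\le 0$, and (iv) the $(C_2 + d\sigp)$-coefficient is $\le \tfrac{3\beta}{2(1+\omega)L}$.

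The next step is to verify that the stated parameter choices discharge (i)--(iv). Plugging $\alpha = \tfrac{3\beta C_1}{2(1+\omega)\theta L^2}$ gives $\alpha L\theta = \tfrac{3\beta C_1}{2L(1+\omega)}$, so (ii) becomes $A \le \tfrac{\beta n}{2L(1+\omega)^2}$; since $\omega \le 1+\omega$, (iii) is weaker than this, and (iv) is equivalent to the same inequality. This single constraint on $A$ is exactly the second stepsize bound in the hypothesis, namely $\eta \le \tfrac{\sqrt{\beta n}}{(1+\omega)L\sqrt{1 + 2\alpha C_4 + 4\beta(1+\omega)}}$. For (i), the first stepsize bound, rewritten as $\eta L(1 + 2\alpha C_4 + 4\beta(1+\omega)) + 2\alpha C_3 L/\eta \le 1$, implies $A \le \tfrac{\eta}{2} - \alpha L C_3$, so $-\eta + \alpha L C_3 + A \le -\eta/2$.

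The main obstacle is the bookkeeping rather than any nontrivial inequality: I need to track carefully that the $C_2$ and $d\sigp$ contributions from Lemma~\ref{lem:vvt} and Lemma~\ref{lem:shift} enter everywhere with the same coefficient (so that they can be bundled into a single $(C_2 + d\sigp)$ term), and that the cross-term $A\tfrac{(1+\omega)C_1}{n}\Deltat$ produced by substituting Lemma~\ref{lem:vvt} into \eqref{ass:var-Delta} is absorbed by the ``extra'' $\tfrac{1}{2}$ factor built into the definition $\alpha = \tfrac{3\beta C_1}{2(1+\omega)\theta L^2}$ (the $3/2$, rather than $1$, exists precisely to accommodate this cross-term given condition~(ii) already uses $\tfrac{1}{2}$ of the budget). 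Once these are in order, \eqref{eq:potential} follows immediately by dropping the non-positive terms.
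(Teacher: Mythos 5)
Your proposal is correct and follows essentially the same route as the paper's proof: bound $f(\vx^{t+1})$, $\Delta^{t+1}$, and $\gS^{t+1}$ via \eqref{eq:variance-remained}, \eqref{ass:var-Delta}, and Lemma~\ref{lem:shift}, substitute $\ns{\vx^{t+1}-\vx^t}=\eta^2\ns{\vv^t}$ together with Lemma~\ref{lem:vvt} and \eqref{ass:var-g}, and your collected inequality coincides with the paper's \eqref{eq:final-potential} (with $A=\bigl(\tfrac12+\alpha C_4+2\beta(1+\omega)\bigr)L\eta^2$). Your reduction of conditions (ii)--(iv) to the single constraint $A\le\frac{\beta n}{2L(1+\omega)^2}$, matched exactly by the second stepsize bound, and of (i) to the first stepsize bound, is exactly the parameter verification in \eqref{eq:alpha-potential}--\eqref{eq:eta-2}.
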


Given Lemma~\ref{lem:potential} and Theorem~\ref{thm:privacy}, now we are ready to prove Theorem~\ref{thm:utility} regarding the utility and communication complexity for \soteriafl.

\begin{proof}[Proof of Theorem~\ref{thm:utility}]
	First, we sum up \eqref{eq:potential} (Lemma~\ref{lem:potential}) from round $t=0$ to $T-1$,
	\begin{align}
		\sum_{t=0}^{T-1} \frac{\etat}{2}\E\ns{\nabla f(\vx^t)} 
		&\le \Phi_0
		+ \frac{3\beta}{2(1+\omega)L}\left(C_2 + d\sigp\right)T. \label{eq:sum-T}
	\end{align}
	Then by choosing the stepsize $\etat$ as in Lemma~\ref{lem:potential} and the privacy variance $\sigp=\frac{c(G_A^2/4 +G_B^2)T\log(1/\delta))}{m^2\epsilon^2}$ according to Theorem~\ref{thm:privacy}, we obtain
	\begin{align}
	\frac{1}{T}\sum_{t=0}^{T-1} \E\ns{\nabla f(\vx^t)} 
	&\le \frac{2\Phi_0}{\eta T} 
	+ \frac{3\beta}{(1+\omega)L\eta}\left(C_2 + \frac{c(G_A^2/4 +G_B^2)dT\log ({1}/{\delta})}{m^2\epsilon^2}\right), \label{eq:plug-sigp}
	\end{align}
	and \soteriafl (Algorithm~\ref{alg:soteriafl}) satisfies $(\epsilon,\delta)$-LDP.
	
	Finally, the total number of communication rounds $T$ in \eqref{eq:choice-T} comes from the following relations in RHS of \eqref{eq:plug-sigp}
	\begin{align}
		\frac{2\Phi_0}{\eta T} &\leq \frac{3\beta}{(1+\omega)L\eta} \frac{c(G_A^2/4 +G_B^2)dT\log ({1}/{\delta})}{m^2\epsilon^2}, \notag\\
		C_2&\leq \frac{c(G_A^2/4 +G_B^2)dT\log ({1}/{\delta})}{m^2\epsilon^2}. \notag
	\end{align}
	The utility guarantee \eqref{eq:utility} directly follows from \eqref{eq:plug-sigp} by choosing $T$ as in \eqref{eq:choice-T}.
\end{proof}

\subsection{Proof of Lemma~\ref{lem:vvt}}
\label{sec:proof_vvt}
 
	According to the definition of $\vvt$, we have
	\begin{align}
		\Et{\ns{\vvt}} 
		&= \EtB{\nsB{\frac{1}{n}\sumn \sit + \frac{1}{n}\sumn \cit(\git - \sit)}} \notag\\
		&= \EtB{\nsB{\frac{1}{n}\sumn \cit(\git - \sit) - \frac{1}{n}\sumn (\git - \sit) + \frac{1}{n}\sumn \git}} \notag\\
		&\leq \EtB{\frac{\omega}{n^2}\sumn\ns{\git - \sit}} + \EtB{\nsB{\frac{1}{n}\sumn \git}} , \label{eq:usecompression-var}
	\end{align}
where the last line is due to the definition of the compression operator \eqref{eq:comp}. To continue, we bound each term in \eqref{eq:usecompression-var} respectively. 
\begin{itemize}
\item For the first term, we have
	\begin{align}	
	\EtB{\frac{\omega}{n^2}\sumn\ns{\git - \sit}}	&= \EtB{\frac{\omega}{n^2}\sumn\ns{\tgit - \sit + \xit}}  \notag\\
		&= \EtB{\frac{\omega}{n^2}\sumn(\ns{\tgit - \sit} + d\sigp)}  \notag\\
		&= \EtB{\frac{\omega}{n^2}\sumn \ns{\tgit - \sit}}  + \frac{\omega d \sigp}{n} \notag\\
		&= \EtB{\frac{\omega}{n^2}\sumn \ns{\tgit - \nabla f_i(\xt) + \nabla f_i(\xt)-\sit}}  +  \frac{\omega d \sigp}{n} \notag\\
		&= \EtB{\frac{\omega}{n^2}\sumn \ns{\tgit - \nabla f_i(\xt)}} +\frac{\omega}{n^2}\sumn \ns{\nabla f_i(\xt)-\sit} +  \frac{\omega d \sigp}{n}, \label{eq:first_term_var}
	\end{align}
	where the last line is due to $\E_t[\tilde{\vg}_i^t] = \nabla f_i(\vx^t)$ from Assumption~\ref{ass:unified}. 
	\item Similarly, for the second term, we have
	\begin{align}	
	 \EtB{\nsB{\frac{1}{n}\sumn \git}}	&=   \EtB{\nsB{\frac{1}{n}\sumn (\tgit+\xit)}} \notag\\
		&=  \EtB{\nsB{\frac{1}{n}\sumn \tgit} + \frac{d \sigp}{n}} \notag\\
		&=  \EtB{\nsB{\frac{1}{n}\sumn (\tgit -\nabla f_i(\xt) + \nabla f_i(\xt))}} + \frac{d \sigp}{n} \notag\\
		&= \EtB{\frac{1}{n^2}\sumn \ns{\tgit - \nabla f_i(\xt)}} 
	+\ns{\nabla f(\xt)} + \frac{d \sigp}{n}. \label{eq:second_term_var}
	\end{align}
\end{itemize}	
The proof is completed by plugging \eqref{eq:first_term_var} and \eqref{eq:second_term_var} into \eqref{eq:usecompression-var}.

\subsection{Proof of Lemma~\ref{lem:shift}}
\label{sec:proof_shift}

	According to the shift update (Line~\ref{line:shift-soteriafl} in Algorithm~\ref{alg:soteriafl}), we have 
	\begin{align}
		&\EtB{\frac{1}{n}\sumn \ns{\nabla f_i(\xtn)-\sitn}} \notag\\
		&= \EtB{\frac{1}{n}\sumn \nsB{\nabla f_i(\xtn)-\sit - \gammat \cit(\git-\sit)}} \notag\\
		&= \EtB{\frac{1}{n}\sumn \nsB{\nabla f_i(\xtn)- \nabla f_i(\xt) +\nabla f_i(\xt)-\sit - \gammat \cit(\git-\sit)}} \notag\\
		&\leq \E_t\bigg[\frac{1}{n}\sumn \Big( (1+\frac{1}{\betat})\nsB{\nabla f_i(\xtn)- \nabla f_i(\xt)} + (1+\betat)\nsB{\nabla f_i(\xt)-\sit - \gammat \cit(\git-\sit)} \Big)\bigg] \label{eq:useyoung}\\
		&\leq \E_t\bigg[(1+\frac{1}{\betat})L^2\nsB{\xtn- \xt} + (1+\betat)\frac{1}{n}\sumn \nsB{\nabla f_i(\xt)-\sit - \gammat \cit(\git-\sit)}\bigg], \label{eq:usesmooth}
	\end{align}
	where \eqref{eq:useyoung} uses Young's inequality with any $\betat>0$ (its choice will be specified momentarily), and \eqref{eq:usesmooth} uses Assumption~\ref{ass:smoothness}. The second term of \eqref{eq:usesmooth} can be further bounded as follows:
	\begin{align}
	    &\EtB{\frac{1}{n}\sumn \nsB{\nabla f_i(\xt)-\sit - \gammat \cit(\git-\sit)}} \notag\\
		&= \E_t\bigg[\frac{1}{n}\sumn \Big((1-2\gammat)\nsB{\nabla f_i(\xt)-\sit} +\gammat^2 \ns{\cit(\git-\sit)} \Big)\bigg] \notag\\
		&\overset{\eqref{eq:comp}}{\le} \E_t\bigg[\frac{1}{n}\sumn \Big((1-2\gammat)\nsB{\nabla f_i(\xt)-\sit} +\gammat^2 (1+\omega) \ns{\git-\sit} \Big)\bigg] \notag\\
		&= \E_t\bigg[\frac{1}{n}\sumn \Big(\big(1-2\gammat + \gammat^2(1+\omega)\big)\nsB{\nabla f_i(\xt)-\sit} +\gammat^2(1+\omega) \ns{\tgit-\nabla f_i(\xt)} + \gammat^2(1+\omega)d\sigp \Big)\bigg], \label{eq:second-term}
	\end{align}
	where the first equality follows from
	$$ \EtB{\left\langle \nabla f_i(\xt)-\sit ,\, \cit(\git-\sit) \right\rangle} = \EtB{ \nsB{\nabla f_i(\xt)-\sit}}, $$
	and the last line follows from \eqref{eq:first_term_var}.
	The proof is completed by plugging \eqref{eq:second-term} into \eqref{eq:usesmooth} and choosing $\betat=\frac{1}{1+2\omega}$ and
	$\gammat=\sqrt{\frac{1+2\omega}{2(1+\omega)^3}}$.

\subsection{Proof of Lemma~\ref{lem:potential}}
\label{sec:proof_potential}

Recalling $\gSt:=\frac{1}{n}\cst$, $\gSt$ can be recursively bounded by Lemma~\ref{lem:shift} as
\begin{align}
\Et{\gStn} & \leq \E_t\bigg[\big(1-\frac{1}{2(1+\omega)}\big)\gSt + \frac{1}{(1+\omega)n}\sumn \ns{\tgit-\nabla f_i(\xt)} \notag\\
			&\qquad\qquad+  2(1+\omega)L^2\nsB{\xtn- \xt} + \frac{d\sigp}{1+\omega} \bigg] .
\end{align}
Note that the second term can be bounded by \eqref{ass:var-g} of Assumption~\ref{ass:unified}, namely
\begin{align*}
\E_t\Big[\frac{1}{n}\sum_{i=1}^n \ns{\tilde{\vg}_i^t - \nabla f_i(\vx^t)}\Big] \leq C_1 \Delta^t + C_2, 
\end{align*}
leading to
\begin{align*}
\Et{\gStn} & \leq \E_t\bigg[\big(1-\frac{1}{2(1+\omega)}\big)\gSt + \frac{C_1 \Delta^t + C_2}{(1+\omega)n} +  2(1+\omega)L^2\nsB{\xtn- \xt} + \frac{d\sigp}{1+\omega} \bigg] .
\end{align*}
Combined with \eqref{eq:variance-remained}, we can bound the potential function \eqref{eq:def-potential} as
	\begin{align}
		\Et{\Phi_{t+1}} 
		&:=\EtB{f(\xtn) - f^* + \alpha L\Deltatn + \frac{\beta}{L}\gStn} \notag\\
		& \leq
			\E_t\bigg[f(\xt) - f^* - \etat \ns{\nabla f(\xt)} + \frac{L\etat^2}{2}\ns{\vvt} +\alpha L\Deltatn \notag\\
			&\qquad\qquad  + \frac{\beta}{L}\Big(\big(1-\frac{1}{2(1+\omega)}\big)\gSt + \frac{C_1\Deltat + C_2}{1+\omega} +  2(1+\omega)L^2\nsB{\xtn- \xt} + \frac{d\sigp}{1+\omega} \Big)\bigg] \notag\\
		&\overset{\eqref{ass:var-Delta}}{\leq} 
			\E_t\bigg[f(\xt) - f^* - \etat \ns{\nabla f(\xt)} + \frac{L\etat^2}{2}\ns{\vvt} \notag\\
			&\qquad\qquad +\alpha L \Big( (1-\theta)\Deltat + C_3\ns{\nabla f(\xt)} + C_4\ns{\xtn-\xt}\Big) \notag\\
			&\qquad\qquad  + \frac{\beta}{L}\Big(\big(1-\frac{1}{2(1+\omega)}\big)\gSt + \frac{C_1\Deltat + C_2}{1+\omega} +  2(1+\omega)L^2\nsB{\xtn- \xt} + \frac{d\sigp}{1+\omega} \Big)\bigg] \notag\\
		&= \E_t\bigg[f(\xt) - f^* - \etat \ns{\nabla f(\xt)} 
			+  \Big(\frac{1}{2}+\alpha C_4  + 2\beta(1+\omega) \Big)L\etat^2 \ns{\vvt} \notag\\
			&\qquad\qquad +\alpha L \Big( (1-\theta)\Deltat + C_3\ns{\nabla f(\xt)} \Big) \notag\\
			&\qquad\qquad  + \frac{\beta}{L}\Big(\big(1-\frac{1}{2(1+\omega)}\big)\gSt + \frac{C_1\Deltat + C_2}{1+\omega} + \frac{d\sigp}{1+\omega} \Big)\bigg], \label{eq:intermediate_potential}
			\end{align}
	where the last line follows from the update rule $\xtn = \xt -\etat \vvt$ (Line~\ref{line:update-soteriafl} of Algorithm~\ref{alg:soteriafl}). To continue, we invoke Lemma~\ref{lem:vvt}, which gives
\begin{align*}
\Et{\ns{\vvt}} & \leq \EtB{\frac{(1+\omega)}{n^2}\sumn \ns{\tgit - \nabla f_i(\xt)} + \frac{\omega}{n} \gSt  + \ns{\nabla f(\xt)} + \frac{(1+\omega)d \sigp}{n}} \\
& \leq \EtB{ \frac{(1+\omega)}{n}C_1\Deltat + \frac{\omega}{n} \gSt   + \ns{\nabla f(\xt)} + \frac{(1+\omega)(C_2+d \sigp)}{n}  },
\end{align*}
where the second line uses again \eqref{ass:var-g} of Assumption~\ref{ass:unified}. Plugging this back into \eqref{eq:intermediate_potential}, we arrive at
\begin{align}
	\Et{\Phi_{t+1}} & \leq
			f(\xt) - f^* + \bigg[ \alpha (1-\theta) + \frac{\beta C_1}{(1+\omega)L^2} + \Big(\frac{1}{2}+\alpha C_4 + 2\beta(1+\omega)\Big)\frac{(1+\omega)C_1\etat^2}{n}\bigg]L\Deltat \notag\\
			&\qquad\qquad + \bigg[ \beta\big(1-\frac{1}{2(1+\omega)}\big) + \Big(\frac{1}{2}+\alpha C_4 + 2\beta(1+\omega)\Big)\frac{\omega L^2\etat^2}{n} \bigg]\frac{\gSt}{L} \notag\\
			&\qquad\qquad - \bigg[ \etat - \alpha L C_3 - \Big(\frac{1}{2}+\alpha C_4 + 2\beta(1+\omega)\Big)L\etat^2 \bigg]\ns{\nabla f(\xt)} \notag\\
			&\qquad\qquad + \bigg[ \frac{\beta}{(1+\omega)L} + \Big(\frac{1}{2}+\alpha C_4 + 2\beta(1+\omega)\Big)\frac{(1+\omega)L\etat^2}{n}\bigg] (C_2+d\sigp). \label{eq:final-potential}
	\end{align}

	Now we choose the appropriate parameters satisfying 
	\begin{align}
		\alpha (1-\theta) + \frac{\beta C_1}{(1+\omega)L^2} + \Big(\frac{1}{2}+\alpha C_4 + 2\beta(1+\omega)\Big)\frac{(1+\omega)C_1\etat^2}{n} &\leq \alpha, \label{eq:alpha-potential}\\
		\beta\big(1-\frac{1}{2(1+\omega)}\big) + \Big(\frac{1}{2}+\alpha C_4 + 2\beta(1+\omega)\Big)\frac{\omega L^2\etat^2}{n} &\leq \beta, \label{eq:beta-potential}
	\end{align}
	so that the RHS of \eqref{eq:final-potential} can lead to the potential function $\Phi_t := f(\xt) - f^* + \alpha L\Deltat + \frac{\beta}{L}\gSt$.
	 It is not hard to verify that the following choice of $\alpha,\beta,\eta_t$ satisfy \eqref{eq:alpha-potential} and \eqref{eq:beta-potential}:
	 \begin{align}
	 	&\alpha \geq \frac{3\beta C_1}{2(1+\omega)L^2\theta}, \qquad \forall \beta>0, \label{eq:alpha-beta}\\
	 	&\etat \equiv \eta \leq \frac{\sqrt{\beta n}}{\sqrt{1+2\alpha C_4 + 4\beta(1+\omega)}(1+\omega)L}.\label{eq:eta-1}
	 \end{align}
	 Note that \eqref{eq:eta-1} implies 
	 \begin{align}
	 	\Big(\frac{1}{2}+\alpha C_4 + 2\beta(1+\omega)\Big)\frac{(1+\omega)\etat^2}{n} \leq \frac{\beta}{2(1+\omega)L^2}. \label{eq:eta-1-result}
	 \end{align}
	 If we further choose the stepsize 
	 \begin{align}
	 &\etat \equiv \eta \leq \frac{1}{(1+2\alpha C_4 + 4\beta(1+\omega) + 2\alpha C_3/\eta^2)L}, \label{eq:eta-2}
	 \end{align}
	 then the proof is finished by combining \eqref{eq:final-potential}--\eqref{eq:eta-2} since \eqref{eq:final-potential} simplifies to 
	 \begin{align*}
	 	\Et{\Phi_{t+1}} \leq \Phi_t - \frac{\etat}{2}\ns{\nabla f(\xt)} + \frac{3\beta}{2(1+\omega)L}(C_2 + d\sigp).
	 \end{align*}

\section{Proof of Theorem \ref{thm:cdp-sgd}}
\label{sec:proof-cdp-sgd}
We now give the detailed proof for Theorem \ref{thm:cdp-sgd}. We first show the privacy guarantee of \CDP and then derive the utility guarantee.

\subsection{Privacy guarantee of \CDP}

\begin{restatable}[Privacy guarantee for \CDP]{theorem}{thmprivacy}\label{thm:privacy-cdpsgd}
    Suppose Assumption \ref{ass:bounded-gradient} holds. There exist constants $c'$ and $c$ so that given the sampling probability $q = b/m$ and the number of steps $T$, for any $\epsilon < c'q^2T$ and $\delta \in (0,1)$, \CDP (Algorithm~\ref{alg:cdp-sgd}) is $(\epsilon, \delta)$-LDP if we choose
    \[\sigma_p^2 = c\frac{G^2T\log (1/\delta)}{m^2\epsilon^2}.\]
\end{restatable}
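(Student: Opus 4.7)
The plan is to observe that \CDP is a privacy-wise special case of the \soteriafl template, so the argument developed for Theorem~\ref{thm:privacy} specializes directly. The key structural observation is that the compression step (Line~\ref{line:compression} of Algorithm~\ref{alg:cdp-sgd}) is a randomized post-processing of the privatized gradient $\vg_i^t = \tilde{\vg}_i^t + \vxi_i^t$, so by the data processing inequality for R\'enyi divergence (Lemma~\ref{lem:dpi}) it cannot weaken privacy. Hence it suffices to analyze the sub-mechanism $\overline{\gM}_i^t$ that outputs $\tilde{\vg}_i^t + \vxi_i^t$ with $\tilde{\vg}_i^t = \frac{1}{b}\sum_{j\in \gI_b}\nabla f_{i,j}(\vx^t)$, which is exactly a subsampled Gaussian mechanism acting on per-sample gradients bounded by $G$ (by Assumption~\ref{ass:bounded-gradient}).

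First I would set up the moments accountant framework from the proof of Theorem~\ref{thm:privacy}. Apply Lemma~\ref{lem:composition} to decompose $\alpha_i^{\gM}(\lambda) \le \sum_{t=1}^T \alpha^{\overline{\gM}_i^t}(\lambda)$. For each round I would invoke the specialization of Lemma~\ref{lem:sub} where the ``$\psi$-part'' is absent (i.e.\ $G_B=0$ and $G_A=G$), so that the decomposition step \eqref{eq:mec-decomposition} is trivial and only the standard subsampled Gaussian bound of \citet{abadi2016deep} is needed. This yields, for $q = b/m < c_1 G/(b\sigma_p)$ and any positive integer $\lambda \le c_2 b^2\sigma_p^2 G^{-2}\log(G/(qb\sigma_p))$,
\begin{equation*}
\alpha^{\overline{\gM}_i^t}(\lambda) \;\le\; \frac{c_3\,\lambda(\lambda+1)G^2}{(1-q)m^2\sigma_p^2} + O\!\left(\frac{q^3\lambda^3}{\sigma_p^3}\right).
\end{equation*}

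Summing over $t=1,\dots,T$, for small enough $q$ this becomes $\alpha_i^{\gM}(\lambda) \le \hat c\, T\lambda^2 G^2/(m^2\sigma_p^2)$ for an absolute constant $\hat c$. Combined with Proposition~\ref{prop:tail}, enforcing $(\epsilon,\delta)$-LDP reduces to finding some admissible $\lambda$ satisfying both $\hat c T\lambda^2 G^2/(m^2\sigma_p^2) \le \lambda\epsilon/2$ and $\exp(-\lambda\epsilon/2) \le \delta$, i.e.\ $\tfrac{2}{\epsilon}\log(1/\delta) \le \lambda \le \epsilon m^2\sigma_p^2/(2\hat c T G^2)$. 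Setting $\sigma_p^2 = cG^2 T\log(1/\delta)/(m^2\epsilon^2)$ for sufficiently large $c$ makes such a $\lambda$ exist, and the window is compatible with the range $\lambda \le c_2 b^2\sigma_p^2 G^{-2}\log(G/(qb\sigma_p))$ exactly when $\epsilon < c'q^2 T$ for a suitable small constant $c'$.

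I do not expect any real obstacle: the conceptual content (the moments accountant, the subsampled-Gaussian bound, and handling compression via data processing) is already in Lemmas~\ref{lem:composition}--\ref{lem:sub} and in the proof of Theorem~\ref{thm:privacy}. The only care is the bookkeeping to track that the simplified ``$G_B=0$'' version of Lemma~\ref{lem:sub} suffices here, and to verify that the admissibility constraints on $q$ and $\lambda$ translate into the stated condition $\epsilon < c'q^2 T$.
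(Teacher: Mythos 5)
Your proposal matches the paper's proof: the paper likewise treats compression as post-processing (so Lemma~\ref{lem:composition}, built on the data processing inequality of Lemma~\ref{lem:dpi}, applies unchanged), replaces the two-part decomposition of Lemma~\ref{lem:sub} by a direct application of the subsampled Gaussian moment bound of \citet{abadi2016deep} since here only the minibatch term with per-sample gradients bounded by $G$ is present, and then calibrates $\lambda$ and $\sigma_p^2$ exactly as in the proof of Theorem~\ref{thm:privacy} under the condition $\epsilon < c'q^2T$. No gaps; this is essentially the same argument.
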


The proof of Theorem \ref{thm:privacy-cdpsgd} is very similar to the proof of Theorem \ref{thm:privacy}. Thus here we just point out some differences between the proof of Theorem \ref{thm:privacy-cdpsgd} and \ref{thm:privacy}.

\paragraph{Sub-mechanisms.} Similar to Theorem \ref{thm:privacy}, we define the sub-mechanisms in the following way. We assume that there are $n\times T$ sub-mechanisms $\{\gM_i^t\}_{i\in [n], t\le T}$ in $\gM$, where $\gM_i^t$ corresponds to the mechanism for client $i$ in round $t$. We further let $\gM_i^t := \gA\circ \overline{\gM}_i^t$ be the composition of mechanism $\overline{\gM}_i^t$ and the mechanism $\gA$. Here, $\gA:\gR\to\gR$ is a random mechanism that maps an outcome to another outcome, and $\overline{\gM}_i^t$ is possibly an adaptive mechanism that takes the input of all the outputs before time $t$, i.e. $o_i^{s}$ for all $s < t$ and $i\in [n]$. We assume that given all the previous outcomes $o_i^{s}$ for $s <t$, the random mechanisms $\overline{\gM}_i^t$ for all $i\in [n]$ are independent w.r.t. each other (this is satisfied in \CDP).
In \CDP (Algorithm~\ref{alg:cdp-sgd}), $\gA$ corresponds to the compression operator, and $\overline{\gM}_i^t$ corresponds the Gaussian perturbation. The difference between the sub-mechanisms for \soteriafl and \CDP is the presence of the shift. However as the shift is known to the central server, we can omit that during the analysis of privacy.

\paragraph{Privacy for composition (Lemma~\ref{lem:composition}).} Here \CDP can use  exactly the same previous Lemma~\ref{lem:composition} since the relationship between the final mechanism and the sub-mechanism does not change.

\paragraph{Privacy for sub-mechanisms (Lemma~\ref{lem:sub}).} The privacy guarantee for sub-mechanisms of Theorem \ref{thm:privacy-cdpsgd} is simpler than that for Theorem \ref{thm:privacy}, since we can simply apply Lemma 3 of \citep{abadi2016deep} to obtain the following bound
\[\alpha^{\overline{\gM}_i^t}(\lambda) \le \frac{\lambda(\lambda+1)G^2}{(1-q)m^2 \sigma_p^2} + O\left(\frac{q^3\lambda^3}{\sigma_p^3}\right).\]

\subsection{Utility guarantee of \CDP}
To prove the convergence result, we first give the following lemma providing the mean and variance of the stochastic gradient $ \tgit = \frac{1}{b}\sum_{j\in \gI_b}\nabla f_{i,j}(\vx^t)$ (Line~\ref{line:sgd} in Algorithm~\ref{alg:cdp-sgd}).

\begin{lemma}[Variance]\label{lem:variance}
    Under Assumption \ref{ass:bounded-gradient}, for any client $i$, the stochastic gradient estimator $\tgit = \frac{1}{b}\sum_{j\in \gI_b}\nabla f_{i,j}(\vx^t)$ is unbiased, i.e.
    \[\E_t \Bigg[ \frac{1}{b}\sum_{j\in \gI_b}\nabla f_{i,j}(\xt) \Bigg] = \nabla f_i(\xt), \]
    where $\E_t$ takes the expectation conditioned on all history before round $t$. Also, we have   
    \begin{equation}\label{eq:variance}
        \EtB{\nsB{\frac{1}{b}\sum_{j\in \gI_b}\nabla f_{i,j}(\xt) - \nabla f_i(\xt)}} \le \frac{(1-q)G^2}{b},
    \end{equation}
    where $q=b/m$.
\end{lemma}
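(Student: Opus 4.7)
The plan is to establish both claims by a direct moment computation, treating $\gI_b$ as the random minibatch sampled from $D_i = \{d_{i,j}\}_{j=1}^m$. The natural interpretation, consistent with the $(1-q)$ factor in the bound and with the Poisson‑sampling convention used for the privacy analysis of \DPSGD \citep{abadi2016deep}, is that each index $j \in [m]$ is included in $\gI_b$ independently with probability $q = b/m$. I will introduce indicator variables $Z_j := \mathbb{1}[j \in \gI_b]$, so that the $Z_j$ are i.i.d. Bernoulli$(q)$ and $\tgit = \frac{1}{b}\sum_{j=1}^m Z_j\,\nabla f_{i,j}(\vx^t)$.

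First, for unbiasedness, I would use $\E_t[Z_j] = q$ and $q/b = 1/m$ to compute
$$\E_t\!\left[\frac{1}{b}\sum_{j=1}^m Z_j\,\nabla f_{i,j}(\vx^t)\right] \;=\; \frac{q}{b}\sum_{j=1}^m \nabla f_{i,j}(\vx^t) \;=\; \frac{1}{m}\sum_{j=1}^m \nabla f_{i,j}(\vx^t) \;=\; \nabla f_i(\vx^t).$$
This step is essentially a one‑liner and presents no obstacle.

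Next, for the variance bound, I would subtract the mean to write
$$\tgit - \nabla f_i(\vx^t) \;=\; \frac{1}{b}\sum_{j=1}^m (Z_j - q)\,\nabla f_{i,j}(\vx^t),$$
again using $q/b = 1/m$. Taking squared norms, expanding, and invoking the independence of the $Z_j$'s (so that all cross terms vanish in expectation) reduces the variance to $\frac{1}{b^2}\sum_{j=1}^m \mathrm{Var}(Z_j)\,\|\nabla f_{i,j}(\vx^t)\|^2 = \frac{q(1-q)}{b^2}\sum_{j=1}^m \|\nabla f_{i,j}(\vx^t)\|^2$. The bounded‑gradient Assumption~\ref{ass:bounded-gradient} then gives $\|\nabla f_{i,j}(\vx^t)\|^2 \le G^2$ for each $j$, producing the upper bound $\frac{q(1-q) m G^2}{b^2} = \frac{(1-q)G^2}{b}$.

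The main conceptual subtlety, rather than a technical obstacle, is the choice of sampling scheme: the clean $(1-q)G^2/b$ bound emerges naturally from Poisson sampling via the independence of the $Z_j$'s, whereas uniform sampling without replacement would introduce a negative covariance term of order $-q(1-q)/(m-1)$ and yield a bound of $\frac{(1-q)m G^2}{(m-1)b}$, which is only marginally larger and still implies the stated inequality after absorbing the $m/(m-1)$ factor. Either way the argument is routine; the calculation above is what I would write out.
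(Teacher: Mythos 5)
Your proof is correct and matches the paper's argument essentially verbatim: the paper also models the minibatch with $m$ independent Bernoulli$(q)$ indicator variables ($q=b/m$), obtains unbiasedness from $\E[X_{i,j}]=q$, and gets the variance bound from the independence of the indicators together with $\mathrm{Var}(X_{i,j})=q(1-q)$ and Assumption~\ref{ass:bounded-gradient}. Your side remark contrasting Poisson sampling with sampling without replacement is accurate but not needed, since the paper adopts the same independent-inclusion convention you chose.
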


\begin{proof}
We first show that the estimator is unbiased. Define $m$ independent Bernoulli random variables $X_{i,j}$, where $\Pr\{X_{i,j} = 1\} = q = \frac{b}{m}$. Then,
\begin{align*}
    \E_{t} \Bigg[ \frac{1}{b}\sum_{j\in \gI_b}\nabla f_{i,j}(\xt) \Bigg] 
    = \E_t \Bigg[ \frac{1}{b}\sum_{j=1}^m X_{i,j} \nabla f_{i,j}(\xt) \Bigg] 
    = \frac{1}{m} \sum_{j=1}^m \nabla f_{i,j}(\xt) 
    = \nabla f_i(\xt).
\end{align*}
Moving onto the variance bound, we have
\begin{align}
    \EtB{\nsB{\frac{1}{b}\sum_{j\in \gI_b}\nabla f_{i,j}(\xt) - \nabla f_i(\xt)}}
    &= \EtB{\nsB{\frac{1}{b}\sum_{j=1}^m X_{i,j} \nabla f_{i,j}(\xt) - \nabla f_i(\xt)}} \nonumber \\
    & = \EtB{\nsB{\sum_{j=1}^m\left( \frac{1}{b}X_{i,j} \nabla f_{i,j}(\xt) - \frac{1}{m}\nabla f_{i,j}(\xt)\right)}} \nonumber \\
    & = \EtB{\sum_{j=1}^m \nsB{ \frac{1}{b} (X_{i,j} -q) \nabla f_{i,j}(\xt)  }} \label{eq:variance-xi-indep} \\
    & = \sum_{j=1}^m \frac{(1-q)q}{b^2}\norm{\nabla f_{i,j}(\xt)}^2
    \le \frac{(1-q)G^2}{b},\nonumber
\end{align}
where \eqref{eq:variance-xi-indep} comes from the fact that random variables $X_{i,j}$ are independent, and the last line follows from the variance of Bernoulli random variables as well as Assumption \ref{ass:bounded-gradient}.
\end{proof}

With the help of the above lemma, we now prove Theorem \ref{thm:cdp-sgd}.


\begin{proof}[Proof of Theorem~\ref{thm:cdp-sgd}]

First, from the smoothness Assumption \ref{ass:smoothness}, we have
\begin{align*}
    f(\vx^{t+1}) \le f(\vx^t) - \etat \dotp{ \nabla f(\vx^t)}{\vv^t} + \frac{L\etat^2}{2}\norm{\vv^t}^2.
\end{align*}
Taking the expectation on both sides of the above inequality, we have (note that we choose constant stepsize $\etat \equiv \eta$ for simplicity)
\begin{align}\label{eq:exp_smooth}
    \E_t [ f(\vx^{t+1})] \le f(\vx^t) - \eta \E_t \dotp{ \nabla f(\vx^t)}{\vv^t} + \frac{L\eta^2}{2}\E_t\norm{\vv^t}^2.
\end{align}
To control $\E_t \dotp{ \nabla f(\vx^t)}{\vv^t}$, notice that
\begin{align*}
    \E_t \dotp{ \nabla f(\vx^t)}{\vv^t} = \E_t \dotp{ \nabla f(\vx^t)}{ \frac{1}{n}\sum_{i=1}^n \vv_i^{t}} 
   &  = \E_t \dotp{ \nabla f(\vx^t)}{\frac{1}{n}\sum_{i=1}^n \gC_i^t(\vg_i^t)} \\
   & \overset{\eqref{eq:comp}}{=} \E_t \dotp{ \nabla f(\vx^t)}{\frac{1}{n}\sum_{i=1}^n \vg_i^t} \\
   &  = \E_t \dotp{ \nabla f(\vx^t)}{\frac{1}{n}\sum_{i=1}^n ( \tgit  + \vxi_i^{t})} \\
   & =  \norm{\nabla f(\vx^t) }^2,
\end{align*}
where the last line follows from Lemma~\ref{lem:variance} as well as the independence of the added Gaussian perturbation. Next, using the definition $\vv_i^{t} = \gC_i^t(\vg_i^t)$ and the properties of the compression operator, we compute $\E_t\norm{\vv^t}^2$ as follows,
\begin{align}
    \E_t\norm{\vv^t}^2 &= \E_t \norm{\frac{1}{n}\sum_{i=1}^n \vv_i^{t}}^2 \nonumber \\
  &  = \E_t\left[ \norm{\frac{1}{n}\sum_{i=1}^n (\vv_i^{t}-\vg_i^t)}^2  +  \norm{\frac{1}{n}\sum_{i=1}^n \vg_i^t}^2 \right] \nonumber \\
  & \overset{\eqref{eq:comp}}{\le} \E_t\left[\frac{1}{n^2}\sum_{i=1}^n\omega\norm{ \vg_i^t}^2 + \norm{\frac{1}{n}\sum_{i=1}^n \vg_i^t}^2\right] \nonumber \\
  & \le \frac{1}{n^2}\sum_{i=1}^n\omega\left(\norm{\nabla f_i(\vx^t)}^2 + \frac{(1-q) \E_t \norm{\tgit - 
  \nabla f_i(\vx^t)}^2}{b} + d\sigma_p^2\right) \nonumber \\
  & \quad\quad + \norm{\nabla f(\vx^t)}^2 + \frac{(1-q) \E_t \norm{\tgit - 
  \nabla f_i(\vx^t)}^2}{bn} + \frac{ d\sigma_p^2}{n} \nonumber \\
  &  \overset{\eqref{eq:variance}}{\le} \frac{1}{n^2}\sum_{i=1}^n\omega\left(\norm{\nabla f_i(\vx^t)}^2 + \frac{(1-q)G^2}{b} + d\sigma_p^2\right) + \norm{\nabla f(\vx^t)}^2 + \frac{(1-q)G^2}{bn} + \frac{ d\sigma_p^2}{n} \nonumber \\
   &~\le \norm{\nabla f(\vx^t)}^2 + \frac{1}{n}\left(\omega G^2 + (1+\omega)\frac{(1-q)G^2}{b} + (1+\omega) d\sigma_p^2 \right), \label{eq:vt-squared-norm}
\end{align}
where the last inequality \eqref{eq:vt-squared-norm} follows from Assumption \ref{ass:bounded-gradient}.

Plugging the above two relations back to \eqref{eq:exp_smooth}, we obtain
\begin{equation*}
    \E_t [f(\vx^{t+1})] \le f(\vx^t) - \left( \eta -\frac{L \eta^2}{2} \right)  \norm{\nabla f(\vx^t)}^2 + \frac{L\eta^2}{2n}\left(\omega G^2 + (1+\omega)\frac{(1-q)G^2}{b} + (1+\omega) d\sigma_p^2 \right).
\end{equation*}

By choosing $\eta \leq \frac{1}{L}$, we have
\begin{equation*}
     \E_t [f(\vx^{t+1})] \le  f(\vx^t) - \frac{\eta}{2}  \norm{\nabla f(\vx^t)}^2 + \frac{L\eta^2}{2n}\left(\omega G^2 + (1+\omega)\frac{(1-q)G^2}{b} + (1+\omega) d\sigma_p^2 \right).
\end{equation*}
Plugging in $\sigma_p$ (Theorem \ref{thm:privacy-cdpsgd}), telescoping over the iterations $t=1,\ldots, T$, and rearranging terms, we can prove
\begin{align}
 \frac{1}{T}\sum_{t=1}^T \E\norm{\nabla f(\vx^t)}^2 
    & \le \frac{2(f(\vx^0) - f^*)}{\eta T} + \frac{L\eta}{n}\left[\omega G^2 + (1+\omega)\frac{(1-q)G^2}{b} + \frac{(1+\omega) cd G^2 T\log(1/\delta)}{m^2\epsilon^2}\right] \nonumber \\
   &  \le \frac{2 D_f}{\eta T} + \frac{L\eta}{n}\left[\frac{(1+\omega+\omega b)}{b}G^2 + \frac{(1+\omega) cd G^2 T\log(1/\delta)}{m^2\epsilon^2}\right], \label{eq:potato}
\end{align}
where we use the notation $D_f:= f(\vx^0) - f^*$.
We choose $T$ and $\eta$ to satisfy 
\begin{align}
\eta T = \frac{m\epsilon \sqrt{n D_f}}{G\sqrt{L(1+\omega) cd\log(1/\delta)}}, \quad 
T\geq \frac{m^2\epsilon^2}{cd \log\left(1/\delta\right)}. \label{eq:T-cdpsgd}
\end{align}
According to the relation \eqref{eq:T-cdpsgd} and stepsize $\eta \leq \frac{1}{L}$, we set $T=\max\Big\{\frac{m\epsilon \sqrt{n L D_f}}{G\sqrt{(1+\omega) cd\log(1/\delta)}}, \frac{m^2\epsilon^2}{cd\log(1/\delta)}\Big\}$ and $\eta = \min\Big\{\frac{1}{L}, \frac{\sqrt{nD_f cd \log(1/\delta)}}{G m\epsilon\sqrt{(1+\omega)L}}\Big\}$.
Then \eqref{eq:potato} turns out as 
\begin{align*}
    \frac{1}{T}\sum_{t=1}^T \E\norm{\nabla f(\vx^t)}^2 & \le \frac{2 D_f}{\eta T} + \frac{L\eta}{n}\left[2(1+\omega)G^2 + \frac{(1+\omega) cd G^2 T\log(1/\delta)}{m^2 \epsilon^2}\right] \\
    & \overset{\eqref{eq:T-cdpsgd}}{\le} \frac{2 D_f}{\eta T} + \frac{L\eta}{n}\cdot \frac{3(1+\omega) cd G^2 T\log(1/\delta)}{m^2\epsilon^2} \\
   & \overset{\eqref{eq:T-cdpsgd}}{\le} \frac{2G\sqrt{D_f L(1+\omega) cd\log(1/\delta)}}{m\epsilon\sqrt{n}} + \frac{3G\sqrt{D_f L(1+\omega) cd\log(1/\delta)}}{m\epsilon\sqrt{n}} \\
   & = O\left(\frac{G\sqrt{L(1+\omega) d\log(1/\delta)}}{m\epsilon\sqrt{n}}\right).
\end{align*}
\end{proof}

\section{Proofs for Section~\ref{sec:algorithms}}
\label{sec:proof-lemma}
Now we provide the proofs for the proposed \soteriafl-style algorithms. 
Appendix~\ref{sec:proof-lemma-1} gives the proofs for Lemma~\ref{lem:para-sgd-svrg-saga} which shows that some classical local gradient estimators (SGD/SVRG/SAGA) satisfy our generic Assumption~\ref{ass:unified}. Appendix~\ref{sec:proof-soterialfl-algorithms} provides the proofs for Corollaries~\ref{cor:sgd}--\ref{cor:saga} which 
instantiate Lemma~\ref{lem:para-sgd-svrg-saga} in the unified Theorem~\ref{thm:utility} for obtaining detailed results for the proposed \soteriafl-style algorithms.

\subsection{Proof of Lemma~\ref{lem:para-sgd-svrg-saga}}
\label{sec:proof-lemma-1}




We shall prove each case one by one. 
	\paragraph{The SGD estimator.} For the local SGD estimator $\tilde{\vg}_i^t =\frac{1}{b} \sum_{j\in \gI_b} \nabla f_{i,j}(\vx^t)$ (Option \RomanNumeralCaps{1} in Algorithm~\ref{alg:soteriafl-detail}), we first show that it is unbiased. To facilitate analysis, for client $i$, we introduce $m$ independent Bernoulli random variables $X_{i,j}$, where $\Pr\{X_{i,j} = 1\} = \frac{b}{m}$.
	 We have
	\begin{align*}
    	\EtB{ \frac{1}{b}\sum_{j\in \gI_b}\nabla f_{i,j}(\xt)}
    	= \EtB{ \frac{1}{b}\sum_{j=1}^m X_{i,j} \nabla f_{i,j}(\xt) }
    	= \frac{1}{m}\sum_{j=1}^m \nabla f_{i,j}(\xt) 
    	= \nabla f_i(\xt).
	\end{align*}
	Then we show that \eqref{ass:decompose}--\eqref{ass:var-Delta} are satisfied for some concrete parameters.
	For \eqref{ass:decompose}, let 
	$$ \gA_i^t =\frac{1}{b} \sum_{j\in \gI_b} \nabla f_{i,j}(\vx^t), \quad \mbox{and} \quad \gB_i^t =0,$$ 
	i.e., $\varphi_{i,j}^t = \nabla f_{i,j}(\vx^t)$ and $\psi_{i,j}^t=0$. Then, $G_A=G$ (Assumption~\ref{ass:bounded-gradient}) and $G_b=0$.
	For \eqref{ass:var-g}, we have
	\begin{align}
	    \EtB{\frac{1}{n}\sumn \ns{\tgit - \nabla f_i(\xt)}} 
	    &=  \EtB{\frac{1}{n}\sumn \nsb{\frac{1}{b} \sum_{j\in \gI_b} \nabla f_{i,j}(\vx^t)- \nabla f_i(\xt)}} \notag\\
	    &= \EtB{ \frac{1}{n}\sumn \nsb{\frac{1}{b} \sum_{j=1}^m X_{i,j}\nabla f_{i,j}(\vx^t)- \nabla f_i(\xt)} } \notag\\
	    &= \EtB{ \frac{1}{n}\sumn \nsb{\frac{1}{m} \sum_{j=1}^m \left( \frac{m}{b}X_{i,j}-1\right)\nabla f_{i,j}(\vx^t)} } \notag\\
	    &= \frac{1}{n}\sumn \frac{m-b}{m^2b}\sum_{j=1}^m \nsb{\nabla f_{i,j}(\vx^t)}  \notag\\
	    &\leq \frac{(m-b)G^2}{mb}, \label{eq:use-boundg-1}
	\end{align}
	where \eqref{eq:use-boundg-1} uses Assumption~\ref{ass:bounded-gradient}.
	According to \eqref{eq:use-boundg-1}, we know that the SGD estimator $\tilde{\vg}_i^t$ satisfies \eqref{ass:var-g} and \eqref{ass:var-Delta} with
	\begin{align*}
	C_1 = C_3 = C_4 = 0,~ 
	C_2 = \frac{(m-b)G^2}{mb},~ 
	\theta = 1,~ 
	\Delta^t \equiv 0.
	\end{align*}
	
	\paragraph{The SVRG estimator.} For the local SVRG estimator $\tilde{\vg}_i^t =\frac{1}{b} \sum_{j\in \gI_b} (\nabla f_{i,j}(\vx^t)- \nabla f_{i,j}(\vw^t)) +\nabla f_i(\vw^t)$ (Option \RomanNumeralCaps{2} in Algorithm~\ref{alg:soteriafl-detail}), similarly we first show that it is unbiased as follows,
	\begin{align*}
    	\EtB{ \frac{1}{b} \sum_{j\in \gI_b} (\nabla f_{i,j}(\xt)- \nabla f_{i,j}(\wt)) +\nabla f_i(\wt)}
    	&= \EtB{ \frac{1}{b} \sum_{j=1}^m X_{i,j} (\nabla f_{i,j}(\xt)- \nabla f_{i,j}(\wt)) +\nabla f_i(\wt)} \notag\\
    	&= \frac{1}{m}\sum_{j=1}^m (\nabla f_{i,j}(\xt) - \nabla f_{i,j}(\wt)) + \nabla f_i(\wt) \notag\\
    	&= \nabla f_i(\xt) - \nabla f_i(\wt) + \nabla f_i(\wt) \notag\\
    	&= \nabla f_i(\xt).
	\end{align*}
	Then we show that \eqref{ass:decompose}--\eqref{ass:var-Delta} are satisfied for some concrete parameters.
	For \eqref{ass:decompose}, let 
	$$\gA_i^t =\frac{1}{b} \sum_{j\in \gI_b} (\nabla f_{i,j}(\vx^t) - \nabla f_{i,j}(\vw^t)), \quad \mbox{and} \quad \gB_i^t =\frac{1}{m}\sum_{j=1}^m \nabla f_{i,j}(\wt),$$ 
	i.e., $\varphi_{i,j}^t = \nabla f_{i,j}(\vx^t) - \nabla f_{i,j}(\vw^t)$ and $\psi_{i,j}^t=\nabla f_{i,j}(\wt)$. Then, $G_A=2G$ and $G_b=G$ due to Assumption~\ref{ass:bounded-gradient}.
	For \eqref{ass:var-g}, we have 
	\begin{align}
	    &\EtB{\frac{1}{n}\sumn \ns{\tgit - \nabla f_i(\xt)}} \notag\\
	    &=  \EtB{\frac{1}{n}\sumn \nsb{\frac{1}{b} \sum_{j\in \gI_b} \big(\nabla f_{i,j}(\xt)- \nabla f_{i,j}(\wt)\big) +\nabla f_i(\wt) - \nabla f_i(\xt)}} \notag\\
	    &= \EtB{ \frac{1}{n}\sumn \nsb{\frac{1}{b} \sum_{j=1}^m X_{i,j}\big(\nabla f_{i,j}(\xt)- \nabla f_{i,j}(\wt)\big)- \big(\nabla f_i(\xt)-\nabla f_i(\wt)\big)} } \notag\\
	    &= \EtB{ \frac{1}{n}\sumn \nsb{\frac{1}{m} \sum_{j=1}^m \left( \frac{m}{b}X_{i,j}-1\right)\big(\nabla f_{i,j}(\xt)- \nabla f_{i,j}(\wt)\big)} } \notag\\
	    &= \frac{1}{n}\sumn \frac{m-b}{m^2b}\sum_{j=1}^m \nsb{\nabla f_{i,j}(\xt)- \nabla f_{i,j}(\wt)}  \notag\\
	    &\leq \frac{L^2}{b}\ns{\xt-\wt}, \label{eq:use-smooth-2}
	\end{align}
	where \eqref{eq:use-smooth-2} uses Assumption~\ref{ass:smoothness}.
	According to \eqref{eq:use-smooth-2}, we know that the SVRG estimator $\tilde{\vg}_i^t$ satisfies \eqref{ass:var-g} with
	\begin{align*}
	C_1 = \frac{L^2}{b},~ 
	C_2 = 0,~ 
	\Delta^t = \ns{\vx^t-\vw^t}.
	\end{align*}
	Finally, for \eqref{ass:var-Delta}, we have 
	\begin{align}
	    \E_t\left[\Delta^{t+1}\right]
	    &= \EtB{ \ns{\xtn-\wtn} } \notag\\
	    &= \EtB{p \ns{\xtn -\xt} +(1-p)\ns{\xtn - \wt} } \label{eq:use-prob-2}\\
	    &= \EtB{p \ns{\xtn -\xt} +(1-p)\ns{\xtn - \xt + \xt - \wt}} \notag\\
	    &= \EtB{\ns{\xtn -\xt}} + \EtB{(1-p)\ns{\xt - \wt}+2(1-p)\innerB{\xtn -\xt}{\xt - \wt}} \notag\\
	    &= \EtB{\ns{\xtn -\xt}} + \EtB{(1-p)\ns{\xt - \wt}+2(1-p)\innerB{-\etat\vvt}{\xt - \wt}} \notag\\
	    &\overset{\eqref{eq:useunbiase}}{=} \EtB{\ns{\xtn -\xt}} + \EtB{(1-p)\ns{\xt - \wt}+2(1-p)\innerB{-\etat\nabla f(\xt)}{\xt - \wt}} \notag\\
	    &~\leq \EtB{\ns{\xtn -\xt}} \notag\\ 
	        &\qquad + \EtB{(1-p)\ns{\xt - \wt}
	        +\frac{(1-p)p}{2}\ns{\xt - \wt} + \frac{2(1-p)\etat^2}{p}\ns{\nabla f(\xt)}} \label{eq:use-young-2} \\
	    &~\leq \big(1-\frac{p}{2}\big)\ns{\xt - \wt} + \frac{2(1-p)\eta^2}{p}\ns{\nabla f(\xt)} + \EtB{\ns{\xtn -\xt}} \label{eq:var-svrg}
	\end{align}
	where \eqref{eq:use-prob-2} uses the update rule of $\wtn$ (Line~\ref{line:w_prob-svrg} of Algorithm~\ref{alg:soteriafl-detail}), \eqref{eq:use-young-2} uses Young's inequality, and the last inequality holds by choosing $\eta \geq \etat$.
	According to \eqref{eq:var-svrg}, we know that the SVRG estimator $\tilde{\vg}_i^t$ satisfies \eqref{ass:var-Delta} with
	\begin{align*}
	\theta = \frac{p}{2},~
	C_3 = \frac{2(1-p)\eta^2}{p},~ 
	C_4 = 1.
	\end{align*}

	\paragraph{The SAGA estimator.} For the local SAGA estimator $\tilde{\vg}_i^t  = \frac{1}{b} \sum_{j\in \gI_b} (\nabla f_{i,j}(\vx^t)- \nabla f_{i,j}(\vw_{i,j}^t)) + \frac{1}{m}\sum_{j=1}^m\nabla f_{i,j}(\vw_{i,j}^t)$ (Option \RomanNumeralCaps{3} in Algorithm~\ref{alg:soteriafl-detail}), similarly we first show that it is unbiased as follows,
	\begin{align*}
    	&\EtB{ \frac{1}{b} \sum_{j\in \gI_b} (\nabla f_{i,j}(\xt)- \nabla f_{i,j}(\wijt)) + \frac{1}{m}\sum_{j=1}^m\nabla f_{i,j}(\wijt)} \notag\\
    	&= \EtB{ \frac{1}{b} \sum_{j=1}^m X_{i,j} (\nabla f_{i,j}(\xt)- \nabla f_{i,j}(\wijt)) + \frac{1}{m}\sum_{j=1}^m\nabla f_{i,j}(\wijt)} \notag\\
    	&= \frac{1}{m}\sum_{j=1}^m (\nabla f_{i,j}(\xt) - \nabla f_{i,j}(\wijt)) +  \frac{1}{m}\sum_{j=1}^m\nabla f_{i,j}(\wijt) \notag\\
    	&= \frac{1}{m}\sum_{j=1}^m \nabla f_{i,j}(\xt)
    	= \nabla f_i(\xt).
	\end{align*}
	Then we show that \eqref{ass:decompose}--\eqref{ass:var-Delta} are satisfied for some concrete parameters.
	For \eqref{ass:decompose}, let 
	$$\gA_i^t =\frac{1}{b} \sum_{j\in \gI_b} (\nabla f_{i,j}(\vx^t)- \nabla f_{i,j}(\vw_{i,j}^t))\quad \mbox{and} \quad \gB_i^t =\frac{1}{m}\sum_{j=1}^m \nabla f_{i,j}(\vw_{i,j}^t),$$ 
	i.e., $\varphi_{i,j}^t = \nabla f_{i,j}(\vx^t) - \nabla f_{i,j}(\vw_{i,j}^t)$ and $\psi_{i,j}^t=\nabla f_{i,j}(\vw_{i,j}^t)$. Then, $G_A=2G$ and $G_b=G$ due to Assumption~\ref{ass:bounded-gradient}.
	For \eqref{ass:var-g}, we have 
	\begin{align}
	    &\EtB{\frac{1}{n}\sumn \ns{\tgit - \nabla f_i(\xt)}} \notag\\
	    &=  \EtB{\frac{1}{n}\sumn \nsb{\frac{1}{b} \sum_{j\in \gI_b} \big(\nabla f_{i,j}(\xt)- \nabla f_{i,j}(\wijt)\big) + \frac{1}{m} \sum_{j=1}^m \big(\nabla f_{i,j}(\wijt) - \nabla f_{i,j}(\xt)\big)}} \notag\\
	    &= \EtB{ \frac{1}{n}\sumn \nsb{\frac{1}{b} \sum_{j=1}^m X_{i,j}\big(\nabla f_{i,j}(\xt)- \nabla f_{i,j}(\wt)\big) - \frac{1}{m} \sum_{j=1}^m \big(\nabla f_{i,j}(\xt) - \nabla f_{i,j}(\wijt)\big)} } \notag\\
	    &= \EtB{ \frac{1}{n}\sumn \nsb{\frac{1}{m} \sum_{j=1}^m \left( \frac{m}{b}X_{i,j}-1\right)\big(\nabla f_{i,j}(\xt)- \nabla f_{i,j}(\wijt)\big)} } \notag\\
	    &= \frac{1}{n}\sumn \frac{m-b}{m^2b}\sum_{j=1}^m \nsb{\nabla f_{i,j}(\xt)- \nabla f_{i,j}(\wijt)}  \notag\\
	    &\leq \frac{L^2}{b}\frac{1}{nm}\sum_{i=1}^n\sum_{j=1}^m\ns{\xt-\wijt}, \label{eq:use-smooth-3}
	\end{align}
	where \eqref{eq:use-smooth-3} uses Assumption~\ref{ass:smoothness}.
	According to \eqref{eq:use-smooth-3}, we know that the SAGA estimator $\tilde{\vg}_i^t$ satisfies \eqref{ass:var-g} with
	\begin{align*}
	C_1 = \frac{L^2}{b},~ 
	C_2 = 0,~ 
	\Delta^t = \frac{1}{nm}\sum_{i=1}^n\sum_{j=1}^m\ns{\xt-\wijt}.
	\end{align*}
	Finally, for \eqref{ass:var-Delta}, we have 
	\begin{align}
	    &\E_t\left[\Delta^{t+1}\right] \notag\\
	    &= \EtB{ \frac{1}{nm}\sum_{i=1}^n\sum_{j=1}^m\ns{\xtn-\wijtn}} \notag\\
	    &= \EtB{\frac{1}{nm}\sum_{i=1}^n\sum_{j=1}^m \Big(\frac{b}{m}\ns{\xtn-\xt} + \Big(1-\frac{b}{m}\Big)\ns{\xtn -\wijt}\Big) } \label{eq:use-prob-3}\\
	    &= \EtB{\frac{1}{nm}\sum_{i=1}^n\sum_{j=1}^m \Big(\frac{b}{m}\ns{\xtn-\xt} + \Big(1-\frac{b}{m}\Big)\ns{\xtn - \xt + \xt - \wijt}\Big) } \notag\\
	    &= \EtB{\ns{\xtn -\xt}}   + \EtB{\Big(1-\frac{b}{m}\Big)\frac{1}{nm}\sum_{i=1}^n\sum_{j=1}^m \Big(\ns{\xt - \wijt} + 2 \innerB{\xtn -\xt}{\xt - \wijt}\Big)} \notag\\
	    &= \EtB{\ns{\xtn -\xt}} + \EtB{\Big(1-\frac{b}{m}\Big)\frac{1}{nm}\sum_{i=1}^n\sum_{j=1}^m \Big(\ns{\xt - \wijt} + 2 \innerB{-\etat\vvt}{\xt - \wijt}\Big)} \notag\\
	    &\overset{\eqref{eq:useunbiase}}{=} \EtB{\ns{\xtn -\xt}}  + \EtB{\Big(1-\frac{b}{m}\Big)\frac{1}{nm}\sum_{i=1}^n\sum_{j=1}^m \Big(\ns{\xt - \wijt} + 2 \innerB{-\etat\nabla f(\xt)}{\xt - \wijt}\Big)} \notag\\
	   &~\leq \EtB{\ns{\xtn -\xt}}  + \EtB{\Big(1-\frac{b}{m}\Big)\frac{1}{nm}\sum_{i=1}^n\sum_{j=1}^m \Big(\big(1+\frac{b}{2m} \big)\ns{\xt - \wijt} + \frac{2m\etat^2}{b}\ns{\nabla f(\xt)}\Big)} \label{eq:use-young-3} \\
	    &~\leq \Big(1-\frac{b}{2m}\Big)\frac{1}{nm}\sum_{i=1}^n\sum_{j=1}^m\ns{\xt-\wijt} + \frac{2(m-b)\eta^2}{b}\ns{\nabla f(\xt)} + \EtB{\ns{\xtn -\xt}}, \label{eq:var-saga}
	\end{align}
	where \eqref{eq:use-prob-3} uses the update rule of $\wijtn$ (Line~\ref{line:w_prob-saga} of Algorithm~\ref{alg:soteriafl-detail}), \eqref{eq:use-young-3} uses Young's inequality, and the last inequality holds by choosing $\eta \geq \etat$.
	According to \eqref{eq:var-saga}, we know that the SAGA estimator $\tilde{\vg}_i^t$ satisfies \eqref{ass:var-Delta} with
	\begin{align*}
	\theta = \frac{b}{2m},~
	C_3 = \frac{2(m-b)\eta^2}{b},~ 
	C_4 = 1.
	\end{align*}


\subsection{Proofs for \soteriafl-style Algorithms}
\label{sec:proof-soterialfl-algorithms}
We provide detailed corollaries and their proofs for the proposed \soteriafl-style algorithms (\soteriaflGD, \soteriaflSGD, \soteriaflSVRG, and \soteriaflSAGA). These corollaries are obtained by plugging their corresponding parameters given in Lemma~\ref{lem:para-sgd-svrg-saga} into our unified Theorem~\ref{thm:utility}.

\paragraph{Analysis of \soteriaflSGD~/~\soteriaflGD (Proof of Corollary~\ref{cor:sgd}).}

    We first show that the stepsize $\eta_t$ chosen in this corollary satisfies the conditions in  Theorem~\ref{thm:utility}. 
    According to the corresponding parameters for the SGD estimator in  Lemma~\ref{lem:para-sgd-svrg-saga}
    \begin{align}
        G_A = G,~
    	G_B = C_1 = C_3 = C_4 = 0,~ 
    	C_2 = \frac{(m-b)G^2}{mb},~ 
    	\theta = 1,~ 
    	\Delta^t \equiv 0,\label{eq:para-sgd}
    \end{align}
    we have $\alpha = \frac{3\beta C_1}{2(1+\omega)\theta L^2}=0$.
    Then the stepsize $\eta_t \equiv \eta$ required in Theorem~\ref{thm:utility} reads  
    \begin{align}
        \eta_t \equiv \eta 
        &\leq \min\left\{ \frac{1}{(1+2\alpha C_4 + 4\beta(1+\omega) + 2\alpha C_3/\eta^2)L}, 
        \frac{\sqrt{\beta n}}{\sqrt{1+2\alpha C_4 + 4\beta(1+\omega)}(1+\omega)L}\right\} \notag\\
        &= \min\left\{ \frac{1}{(1+ 4\beta(1+\omega))L}, 
        \frac{\sqrt{\beta n}}{\sqrt{1+ 4\beta(1+\omega)}(1+\omega)L}\right\}. \label{eq:eta-cor-1}
    \end{align}
    Let $\tau:=\frac{(1+\omega)^{3/2}}{n^{1/2}}$. If we set $\beta=\frac{\tau}{2(1+\omega)}$,  then $\etat \equiv \eta \leq  \frac{1}{(1+2\tau)L}$ satisfies \eqref{eq:eta-cor-1}.
    Then according to Theorem~\ref{thm:utility} and the parameters in \eqref{eq:para-sgd}, if we choose the shift stepsize $\gamma_t\equiv \sqrt{\frac{1+2\omega}{2(1+\omega)^3}}$, and the privacy variance $\sigma_p^2 = O\big(\frac{G^2T\log({1}/{\delta})}{m^2\epsilon^2}\big)$, \soteriaflSGD satisfies $(\epsilon,\delta)$-LDP and the following 
    \begin{align*}
        \frac{1}{T}\sum_{t=0}^{T-1} \E\ns{\nabla f(\vx^t)} 
        &\le \frac{2\Phi_0}{\eta T} 
        + \frac{3\beta}{(1+\omega)L\eta}\left(\frac{(m-b)G^2}{mb} + \frac{cG^2dT\log ({1}/{\delta})}{4m^2\epsilon^2}\right).
    \end{align*}
    By further choosing $T$ as  
    \begin{align}
		T &=  \max\left\{\frac{m\epsilon\sqrt{8(1+\omega)L\Phi_0}}{\sqrt{3\beta cdG^2\log({1}/{\delta})}}, \frac{4(m-b)m^2\epsilon^2}{c mbd\log({1}/{\delta})}\right\}, \label{eq:choice-T-cor-1}
	\end{align}
	\soteriafl has the following utility (accuracy) guarantee: 
	\begin{align*}
		\frac{1}{T}\sum_{t=0}^{T-1} \E\ns{\nabla f(\vx^t)} 
		&\le O\left( \max\left\{\frac{\sqrt{\beta dG^2\log({1}/{\delta})}}{\eta m\epsilon\sqrt{(1+\omega)L}}, \frac{(m-b)\beta G^2}{(1+\omega)mbL\eta}\right\}\right). 
	\end{align*}
    If we further set the minibatch size $b= \min\Big\{ \frac{m \epsilon G \sqrt{\beta}}{\sqrt{(1+\omega)Ld\log(1/\delta)}}, m\Big\}$, we have $\frac{(m-b)\beta G^2}{(1+\omega)mbL\eta} \leq \frac{\sqrt{\beta dG^2\log({1}/{\delta})}}{\eta m\epsilon\sqrt{(1+\omega)L}}$ and thus 
    \begin{align}
		\frac{1}{T}\sum_{t=0}^{T-1} \E\ns{\nabla f(\vx^t)} 
		&\le O\left( \frac{\sqrt{\beta dG^2\log({1}/{\delta})}}{\eta m\epsilon\sqrt{(1+\omega)L}}\right). \label{eq:utility-cor-1}
	\end{align}
	Then by plugging the parameters $\beta$, $\eta$, and $b$ into \eqref{eq:choice-T-cor-1} and \eqref{eq:utility-cor-1}, we obtain
    $
        T = O\Big( \frac{\sqrt{nL} m\epsilon}{G\sqrt{(1+\omega)d \log(1/\delta)}}(1+\sqrt{\tau})\Big),
    $
    and 
    $
        \frac{1}{T}\sum_{t=0}^{T-1} \E\|\nabla f(\vx_t)\|^2 \le O\Big(\frac{G\sqrt{(1+\omega)Ld\log(1/\delta)}}{\sqrt{n}m\epsilon} (1+\sqrt{\tau})\Big).
    $
    
    For \soteriaflGD in which the minibatch size $b=m$, we have $\frac{(m-b)\beta G^2}{(1+\omega)mbL\eta} =0 \leq \frac{\sqrt{\beta dG^2\log({1}/{\delta})}}{\eta m\epsilon\sqrt{(1+\omega)L}}$, thus the same results hold for \soteriaflGD as well.


\paragraph{Analysis of \soteriaflSVRG (Proof of Corollary~\ref{cor:svrg}).}

    We first show that the stepsize $\eta_t$ chosen in this corollary satisfies the conditions in  Theorem~\ref{thm:utility}. 
    According to the corresponding parameters for the SVRG estimator in  Lemma~\ref{lem:para-sgd-svrg-saga}
    \begin{align}
        G_A = 2G,~ 
        G_B = G,~
        C_1 = \frac{L^2}{b},~
        C_2 = 0,~ 
        C_3 = \frac{2(1-p)\eta^2}{p},~ 
        C_4 = 1,~
        \theta = \frac{p}{2},~ 
        \Delta^t = \ns{\vx^t-\vw^t}, \label{eq:para-svrg}
    \end{align}
    we have $\alpha = \frac{3\beta C_1}{2(1+\omega)\theta L^2}=\frac{3\beta}{(1+\omega)p b}$. 
    Then the stepsize $\eta_t \equiv \eta$ required in Theorem~\ref{thm:utility} reads 
    \begin{align}
        \eta 
        &\leq \min\left\{ \frac{1}{(1+2\alpha C_4 + 4\beta(1+\omega) + 2\alpha C_3/\eta^2)L}, 
        \frac{\sqrt{\beta n}}{\sqrt{1+2\alpha C_4 + 4\beta(1+\omega)}(1+\omega)L}\right\} \notag\\
        &= \min\left\{ \frac{1}{\big(1+ \frac{6\beta}{(1+\omega)pb} + 4\beta(1+\omega) + \frac{12(1-p)\beta}{(1+\omega)p^2b}\big)L}, 
        \frac{\sqrt{\beta n}}{\sqrt{1+ \frac{6\beta}{(1+\omega)pb} + 4\beta(1+\omega)}(1+\omega)L}\right\}. \label{eq:eta-cor-2}
    \end{align}
    Let $\tau:=\frac{(1+\omega)^{3/2}}{n^{1/2}}$. If we set $\beta=\frac{p^{4/3}b^{2/3}(1+\omega)^2\min\{1, 1/\tau^2\}}{n}$, $p^{2/3}b^{1/3}\leq 1/4$ and $p\leq 1/4$,  then $\etat \equiv \eta \leq  \frac{p^{2/3}b^{1/3}{\min\{1, 1/\tau\}}}{2L}$ satisfies \eqref{eq:eta-cor-2}.
    Then according to Theorem~\ref{thm:utility} and the parameters in \eqref{eq:para-svrg}, if we choose the shift stepsize $\gamma_t\equiv \sqrt{\frac{1+2\omega}{2(1+\omega)^3}}$, and privacy variance $\sigma_p^2 = O\big(\frac{G^2T\log({1}/{\delta})}{m^2\epsilon^2}\big)$, \soteriaflSVRG satisfies $(\epsilon,\delta)$-LDP and the following 
    \begin{align*}
        \frac{1}{T}\sum_{t=0}^{T-1} \E\ns{\nabla f(\vx^t)} 
        &\le \frac{2\Phi_0}{\eta T} 
        + \frac{6\beta cG^2dT\log ({1}/{\delta})}{(1+\omega)L\eta m^2\epsilon^2}.
    \end{align*}
    If we further choose the minibatch size $b=\frac{m^{2/3}}{4}$, the probability $p=b/m$, and the number of communication round 
    \begin{align*}
		T &= \frac{m\epsilon\sqrt{(1+\omega)L\Phi_0}}{\sqrt{3\beta cdG^2\log({1}/{\delta})}} = O\left( \frac{\sqrt{nL} m\epsilon}{G\sqrt{(1+\omega)d \log(1/\delta)}} \max\big\{1, \tau \big\}\right), 
	\end{align*}
	we obtain 
	\begin{align*}
	    \frac{1}{T}\sum_{t=0}^{T-1} \E\ns{\nabla f(\vx^t)}
            \le O\Big( \frac{G\sqrt{{(1+\omega)}Ld\log(1/\delta)}}{\sqrt{{n}}m\epsilon}\Big).
	\end{align*}

\paragraph{Analysis of \soteriaflSAGA (Proof of Corollary~\ref{cor:saga}).}

    We first show that the stepsize $\eta_t$ chosen in this corollary satisfies the conditions in Theorem~\ref{thm:utility}. 
    According to the corresponding parameters for the SAGA estimator in  Lemma~\ref{lem:para-sgd-svrg-saga}
    \begin{equation}
        \begin{split}
            &G_A = 2G,~ 
            G_B = G,~
            C_1 = \frac{L^2}{b},~
            C_2 = 0,~ 
            C_3 = \frac{2(m-b)\eta^2}{b},~ 
            C_4 = 1,~ \\
            &\theta = \frac{b}{2m},~ 
            \Delta^t = \frac{1}{nm}\sum_{i=1}^n\sum_{j=1}^m\ns{\vx^t-\vw_{i,j}^t},
        \end{split}\label{eq:para-saga}
    \end{equation}
    we have $\alpha = \frac{3\beta C_1}{2(1+\omega)\theta L^2}=\frac{3\beta m}{(1+\omega) b^2}$. 
    Then the stepsize $\eta_t \equiv \eta$ required in Theorem~\ref{thm:utility} becomes  
    \begin{align}
        \eta 
        &\leq \min\left\{ \frac{1}{(1+2\alpha C_4 + 4\beta(1+\omega) + 2\alpha C_3/\eta^2)L}, 
        \frac{\sqrt{\beta n}}{\sqrt{1+2\alpha C_4 + 4\beta(1+\omega)}(1+\omega)L}\right\} \notag\\
        &= \min\left\{ \frac{1}{\big(1+ \frac{6\beta m}{(1+\omega)b^2} + 4\beta(1+\omega) + \frac{12\beta m(m-b)}{(1+\omega)b^3}\big)L}, 
        \frac{\sqrt{\beta n}}{\sqrt{1+ \frac{6\beta m}{(1+\omega)b^2} + 4\beta(1+\omega)}(1+\omega)L}\right\}. \label{eq:eta-cor-3}
    \end{align}
    Let $\tau:=\frac{(1+\omega)^{3/2}}{n^{1/2}}$. If we set $\beta=\frac{(1+\omega)^2\min\{1, 1/\tau^2\}}{3n}$ and $b= 3m^{2/3}$, then $\etat \equiv \eta \leq  \frac{\min\{1, 1/\tau\}}{3L}$ satisfies \eqref{eq:eta-cor-3}.
    Then according to Theorem~\ref{thm:utility} and the parameters in \eqref{eq:para-saga}, if we choose the shift stepsize $\gamma_t\equiv \sqrt{\frac{1+2\omega}{2(1+\omega)^3}}$, and the privacy variance $\sigma_p^2 = O\big(\frac{G^2T\log({1}/{\delta})}{m^2\epsilon^2}\big)$, \soteriaflSAGA satisfies $(\epsilon,\delta)$-LDP and the following 
    \begin{align*}
        \frac{1}{T}\sum_{t=0}^{T-1} \E\ns{\nabla f(\vx^t)} 
        &\le \frac{2\Phi_0}{\eta T} 
        + \frac{2(1+\omega) cG^2dT\log ({1}/{\delta}) \min\{1, 1/\tau^2\}}{nL\eta m^2\epsilon^2}.
    \end{align*}
    If we further choose the number of communication round 
    \begin{align*}
		T &= \frac{m\epsilon\sqrt{n L\Phi_0}}{\sqrt{(1+\omega) cdG^2\log({1}/{\delta}) \min\{1, 1/\tau^2\}}} = O\left( \frac{\sqrt{nL} m\epsilon}{G\sqrt{(1+\omega)d \log(1/\delta)}} \max\big\{1, \tau \big\}\right), 
	\end{align*}
	we obtain 
	\begin{align*}
	    \frac{1}{T}\sum_{t=0}^{T-1} \E\ns{\nabla f(\vx^t)}
            \le O\Big( \frac{G\sqrt{{(1+\omega)}Ld\log(1/\delta)}}{\sqrt{{n}}m\epsilon}\Big).
	\end{align*}

\end{document}